\newcommand{\comment}[1]{%
   \tag{#1}
}
\theoremstyle{plain}
\newtheorem{theorem}{Theorem}[section]
\newtheorem{propo}[theorem]{Proposition}
\newtheorem{lemma}[theorem]{Lemma}
\newtheorem{corollary}[theorem]{Corollary}
\theoremstyle{definition}
\newtheorem{assumption}{Assumption}
\crefname{assumption}{Assumption}{assumptions}
\theoremstyle{remark}
\newtheorem{remark}[theorem]{Remark}
\newtheorem{example}[theorem]{Example}
\global\long\def\esp{\mathbb{E}}%
\global\long\def\R{\mathbb{R}}%
\global\long\def\P{\mathbb{P}}%
\definecolor{brickred}{rgb}{0.8, 0.25, 0.33}
\definecolor{OliveGreen}{rgb}{0.24, 0.71, 0.54}
\definecolor{RoyalBlue}{rgb}{0.0, 0.47, 0.75}
\definecolor{BrickRed}{rgb}{0.77, 0.12, 0.23}
\definecolor{Vert}{RGB}{0,128,0}
\title{Breaking the curse of dimensionality for linear rules: \\optimal predictors over the ellipsoid}
\author[1]{Alexis Ayme}
\author[1]{Bruno Loureiro}
\affil[1]{\small Département d’Informatique, École Normale Supérieure - PSL \& CNRS, France}
\begin{document}

\maketitle

\begin{abstract}
In this work, we address the following question: \emph{What minimal structural assumptions are needed to prevent the degradation of statistical learning bounds with increasing dimensionality}? We investigate this question in the classical statistical setting of signal estimation from $n$ independent linear observations $Y_i = X_i^{\top}\theta + \epsilon_i$. Our focus is on the generalization properties of a broad family of predictors that can be expressed as linear combinations of the training labels, $f(X) = \sum_{i=1}^{n} l_{i}(X) Y_i$. This class --- commonly referred to as linear prediction rules --- encompasses a wide range of popular parametric and non-parametric estimators, including ridge regression, gradient descent, and kernel methods.
Our contributions are twofold. First, we derive non-asymptotic upper and lower bounds on the generalization error for this class under the assumption that the Bayes predictor $\theta$ lies in an ellipsoid. Second, we establish a lower bound for the subclass of rotationally invariant linear prediction rules when the Bayes predictor is fixed.
Our analysis highlights two fundamental contributions to the risk: (a) a variance-like term that captures the intrinsic dimensionality of the data; (b) the noiseless error, a term that arises specifically in the high-dimensional regime. 
These findings shed light on the role of structural assumptions in mitigating the curse of dimensionality.
\end{abstract}

\section{Introduction}
Coined by \cite{bellman1957dynamic}, the \emph{curse of dimensionality} (CoD) refers to the ubiquity of high-dimensional bottlenecks in computer science. A classical manifestation in statistical learning is the minimax lower bound for non-parametric regression: achieving an $\epsilon$ excess risk over the class of Lipschitz functions $f_{\star}:\mathbb{R}^{d}\to\mathbb{R}$ requires an exponential sample complexity $n \gtrsim \epsilon^{-\frac{2}{2+d}}$ \citep{tsybakov2008introduction}. This impossibility result shows that learning a generic high-dimensional function is intractable in the worst case, thereby highlighting the necessity of structural assumptions on the target class. A canonical example is linear regression, where the exponential dependence on $d$ is replaced by a minimax risk lower bound of order $\sigma^{2} \sfrac{d}{n}$ for $n \geq d$ \citep{tsybakov2003optimal,mourtada2019exact}. In contrast, when $n < d$ the minimax risk diverges: in the worst case, no predictor can recover $\theta_{\star}\in\mathbb{R}^{d}$, even in the absence of noise. This illustrates how, in the high-dimensional regime, the noiseless error can be made arbitrarily large within the minimax framework.

Although unusual from the perspective of classical statistics, the regime where the number of parameters exceeds the number of samples has gained renewed attention in modern machine learning, largely motivated by the widespread use of overparametrized neural networks. Strikingly, the minimax rate for linear functions contrasts with recent results on high-dimensional linear models, which show that under probabilistic assumptions on the covariates (e.g. sub-Gaussianity) the typical error in the $n<d$ regime remains bounded \citep{krogh1991simple,dobriban2018high,aubin2020generalization, bartlett2020benign,hastie2022surprises,cheng2024dimension}. In particular, in the noiseless setting the error can even decay faster than the classical $n^{-1}$ rate.

The central aim of this paper is to reconcile these two perspectives. Specifically, we demonstrate that restricting the minimax problem to the class of linear prediction rules (including popular algorithms such as ridge regression and gradient-based methods) and target functions drawn from an ellipsoid suffices to establish finite upper and lower bounds that capture the modern high-dimensional phenomenology. In doing so, we redeem the minimax framework in the overparametrized regime. Our \textbf{main contributions} are:

\begin{itemize}[leftmargin=*]
\item \Cref{propo:Eform} gives a characterization of the averaged excess risk for the optimal linear prediction rule under uniform target weights in the ellipsoid.
\item \Cref{thm:upper-bound-noise-case} establishes simple non-asymptotic upper bounds --- expressed in terms of the degrees of freedom --- for noisy tasks, while \Cref{thm:lower-bound-noise-case,thm:sup-lower-bound-noise-case} provide complementary lower bounds on the variance term of the optimal linear rule.
\item We analyze the noiseless case in two regimes: (i) \Cref{thm:lower-bound-noiseless}, when the covariance matrix has heavy tails, and (ii) \Cref{thm:lower-bound-noiseless-fast}, when the spectrum decays rapidly. In both cases, we derive non-asymptotic lower and upper bounds, which are shown to be optimal in certain examples.
\item Finally, \Cref{propo:lower_bound_without_average} completes our study by establishing a lower bound on the excess risk for a fixed target $\theta_\star$.  
\end{itemize}

\paragraph{Related work ---}
The classical non-asymptotic lower bound of $\sigma^2 \frac{d}{n}$ was established by \citet{tsybakov2003optimal} and later refined by \citet{mourtada2019exact}. Numerous upper bounds have also been studied in the literature, including those for ridge regression \citep{hsu2012random} and SGD regression \citep{yao2007early,bach2013non,dieuleveut2017harder}. High-dimensional asymptotics for ridge(less) regression was studied under different assumptions on the covariate distribution by \cite{krogh1991simple,thrampoulidis2015regularized,dobriban2018high,aubin2020generalization,mignacco2020role,wu2020optimal,loureiro2021learning,loureiro2021learningB,hastie2022surprises,adomaityte2024high,bach2024high}. Sharp non-asymptotic results were also derived in \cite{bartlett2020benign,cheng2024dimension,misiakiewicz2024non}. In particular, the noiseless setting was shown to yield rates faster than $1/n$ \citep{berthier2020tight,aubin2020generalization,varre2021last,cui2021generalization}. Finally, works considering a prior on $\theta_\star$ include \citep{dicker2016ridge,richards2021asymptotics}. Excess risk rates under source and capacity conditions have been widely studied in the kernel ridge regression literature \citep{caponnetto2007optimal,richards2021asymptotics, cui2021generalization,defilippis2024dimension}.

\textbf{Notations. } For $n\in \mathbb N$, we denote $[n] = \lbrace 1, \dots, n\rbrace$. For two symmetric matrix $A,B$, we use $A\preceq B$ to denote that the matrix $B-A$ is a symmetric semidefinite positive matrix. We denote by $\lambda_j(A)$ the $j$-th eigenvalue of $A$. We use index $i$ for inputs, and index $j$ for features. 

\section{Setting}
\label{sec:setting}
We consider the classical statistical regression problem of predicting an output random variable $Y\in\R$ from an input random variable $X\in \mathcal{X} = \mathbb{R}^{d}$ related by a noisy linear model:
\begin{equation}
\label{eq:def:rule}
    Y=X^\top \theta_\star +\epsilon,
\end{equation}
with $\esp[\epsilon |X] = 0$ (well specified) and $\esp[\epsilon^2 |X] = \sigma^2$. Given $n$ i.i.d. samples $(X_{i},Y_{i})$ drawn from the model in \cref{eq:def:rule}, our focus in this work will be to investigate the hypothesis class of \emph{linear predictor rules}
\begin{equation}
    \label{eq:def:linearpred}
    \hat f(X)= \sum_{i=1}^{n} l_i(X)Y_i,
\end{equation}
defined by a (potentially random) function $l_i$ that depends on the training covariates $(X_i)_{i\in[n]}$ and a data-independent source of randomness.  
\begin{example}[Linear prediction rules]
\label{ex:l_i_linear}
The class of linear prediction rules, also known as \emph{linear smoothers} \citep{buja1989linear}, encompasses several examples of interest in the literature, such as:

\begin{itemize}[leftmargin=*]
    \item \textbf{Ridge(less) regression}: The ridge regression prediction rule is a linear rule with
    \begin{align}
        l_i(X)=\frac{1}{n}  X_i^\top (\hat\Sigma_{n}+\lambda I)^{-1}X,
    \end{align}
    where $\hat{\Sigma} = \sfrac{1}{n}\sum_{i\in[n]}X_{i}X_{i}^{\top}$ is the empirical covariance matrix. Furthermore, $ l_i(X)=\frac{1}{n}  X_i^\top \hat \Sigma^{\dagger}X$, corresponding to the minimal norm interpolator, is also a linear prediction rule. 
    \item \textbf{Gradient flow}: The predictor obtained by running gradient flow with learning rate $\eta>0$ on a linear model $f(X)=\theta_{t}^{\top}X$ from $\theta_{t=0}=0$ for $t$ defines a linear predictor rule with:
    \begin{align*}
        l_i(X)=\frac{1}{n}X_i^\top(\eta e^{-\eta t\hat{\Sigma}}+\hat{\Sigma}^{\dagger})X
    \end{align*}
    More generally, some (S)GD recursion, minimizing $\ell_2$ penalized quadratic risk, can also be written as a linear predictor rule, see \Cref{app:linear_rule} for a discussion.
    \item \textbf{Nadaraya-Watson estimator}: Let $K(x,x') = \kappa(\sfrac{x-x'}{h})$ denote a rotationally invariant kernel with bandwidth $h>0$. The Nadaraya-Watson estimator defines a linear predictor rule with 
    \begin{align*}
    l_i(X)=\frac{\kappa(\sfrac{X-X_i}{h})}{\sum_{j\in[n]}\kappa(\sfrac{X-X_j}{h})}
    \end{align*}
    \item More generally, any of the above methods can be generalized by considering a fixed feature map $\phi(X)$ of the covariates, while remaining a linear prediction rule. This includes classical methods such as principal component regression, Nystr{\"o}m \citep{williams2000using, smola2000sparse} and Random features methods \citep{rahimi2007random}, among others.
    \item A classical statistics example which \emph{is not} a linear prediction rule is the LASSO \citep{tibshirani1996regression}.  
\end{itemize}
\end{example}
Our main goal in this work is to provide general statistical guarantees for the performance of this class of predictors, as quantified by the \emph{population risk}
\begin{align}
\label{eq:def:risk}
R(f):= \esp\left[\left(Y-f\left(X\right)\right)^2\right],
\end{align}
over the class of measurable functions $f: \mathcal{X} \to \R$. The statistically optimal predictor $f_{\star}$ minimizing $R$ for the model in \cref{eq:def:rule}, known as the \emph{Bayes predictor}, is given by the conditional expectation $f_{\star}(X)=\esp[Y|X] = \theta_{\star}^{\top}X$. This question, therefore, boils down to quantifying how well $f_{\star}$ can be approximated by a linear prediction rule with a finite batch of data, and how close the corresponding risk is to the \emph{Bayes risk} $R(f_{\star}) = \sigma^{2}$. Note that since $\hat{f}$ is data-dependent, the corresponding risk $R(\hat{f})$ is random, and hence our focus will be in studying the averaged excess risk
\begin{align}
\mathcal{E}_{\sigma^{2}}(f) \coloneq \esp\left[R(f)\right]-R(f_\star),
\end{align}
where the expectation is taken over training dataset. 
\begin{remark}
    In this paper, we focus on results in expectation. While these results can be extended to high-probability guarantees under suitable assumptions, we chose to present them in expectation to maintain clarity—particularly for the lower bounds, which are inherently more difficult to interpret and especially challenging to establish in the high-probability setting.
\end{remark}
Linear estimation is a classical problem in statistics. A popular approach for bounding the performance of statistical methods for this problem is the \emph{minimax approach}, consisting of looking at the performance of the best predictor under the hardest possible rule 
\begin{align}
\label{eq:def:minimax}
    \inf_{\hat f} \sup_{\theta_\star\in \R^d}  \mathcal{E}_{\sigma^2}(\hat f).
\end{align}
where the infimum is typically taken over the class of all possible predictors (measurable functions of the data). In other words, the minimax risk describes the performance of the best possible algorithm evaluated on the worst-case data. While it provides a powerful tool for deriving bounds on the risk, it suffers from poor scaling with the dimension $d$, a problem known as the \emph{curse of dimensionality}. For instance, as shown by \cite{tsybakov2003optimal} and \cite{mourtada2019exact}, 
\begin{equation*}
    \inf_{\hat f} \sup_{\theta_\star\in \R^d}  \mathcal{E}_{\sigma^2}(\hat f)\geq 
    \left\{\begin{array}{ll}
        \sigma^2\frac{d}{n} & \text{if } d\leq n,  \\
        +\infty & \text{if } d> n,
    \end{array}\right.
\end{equation*}
thus the minimax risk in \cref{eq:def:minimax} diverges with $d$ as soon $d>n$. 

Therefore, providing statistical guarantees that remain meaningful for high-dimensional predictors requires assuming further structure on the Bayes predictor. 

\paragraph{Ellipsoidal predictors}
In order to mitigate the poor dimensional scaling of the minimax risk, we consider the following assumption on the Bayes predictor.
\begin{assumption}[Ellipsoidal Bayes predictor]
\label{ass:ellipsis}
We assume the Bayes predictor belongs to an ellipsoid 
\begin{equation}
    \theta_{\star}\in\Theta=\{\theta\in\R^d\quad\mathrm{s.t} \quad \Vert A\theta\Vert_{2}=1\}\subset\mathbb{R}^{d},
\end{equation}
for a positive semi-definite symmetric matrix $A\in\mathbb{R}^{d\times d}$. 
\end{assumption}
Under \cref{ass:ellipsis}, the averaged excess risk is a function of the ellipsoid $\Theta$ parameterized by $A$. It will be useful to define the optimal averaged excess risk where the Bayes predictor is sampled according to a distribution $\nu$ supported on $\Theta$:
\begin{equation}
    \label{eq:def:optavg}
    \bar{\mathcal{E}}(\nu;\sigma^2):=\inf_{\hat f}  \esp_{\theta_\star\sim\nu} \left[\mathcal{E}_{\sigma^2}(\hat f)\right],
\end{equation}
where, again, the infinimum is taken on linear predictor rule \Cref{eq:def:linearpred}. 
\begin{remark}[Comparison with the minimax approach] It is immediate to show that restricting the Bayes predictor to the ellipsoid provides a lower-bound to the unconstrained minimax risk. More interestingly, the optimal averaged risk is also a lower-bound to the constrained the minimax risk: 
\begin{equation}\label{eq:minimax-comparison}
    \inf_{\hat f} \sup_{\theta_\star\in \R^d}  \mathcal{E}_{\sigma^2}(\hat f)\geq \inf_{\hat f} \sup_{\theta_\star\in \Theta}  \mathcal{E}_{\sigma^2}(\hat f)\geq \inf_{\hat f}  \esp_{\theta_\star\sim\nu}\left[\mathcal{E}_{\sigma^2}(\hat f)\right]=\bar{\mathcal{E}}(\nu;\sigma^2).
\end{equation}
However, note that minimizing the averaged risk does not give an optimal algorithm in the worst-case sense, but rather an optimal algorithm in the typical case.  
\end{remark}

\begin{example}[Explained variance]\label{ex:source_explained} In the case of linear model \eqref{eq:def:rule}, the risk associated with the naive predictor $f=0$ is 
\begin{equation}\label{eq:espY2}
    \esp [Y^2]=\Vert \Sigma^{1/2}\theta_\star\Vert_2^2+\sigma^2.
\end{equation}
Thus, assuming a bounded second moment for $Y$ is equivalent to assuming that $\theta_\star$ lies within an ellipsoid defined by $\Vert \Sigma^{1/2}\theta_\star\Vert_2^2=\rho^2 >0$. A bounded \emph{explained variance}, i.e., $\Vert \Sigma^{1/2}\theta_\star\Vert_2^2$, is often considered a minimal assumption in regression setting. We will discuss its limitations in high dimension in \Cref{ex:bounded_variance_noiseless}.      
\end{example}

\begin{example}[Source condition]
\label{ex:source1}  
A classical example from the kernel literature satisfying \cref{ass:ellipsis} is the \emph{source condition} \cite{caponnetto2007optimal}, which can be seen as an extension of the bounded explained variance assumption. Given $r\geq 0$, the source condition is defined by the  ellipsoid described by $\Vert \Sigma^{1/2-r}\theta_\star\Vert_2=:\rho_r$. The constant $r$ parametrizes how fast the target decays with respect to the basis of the covariates, and therefore quantifies the difficulty of the task. 
To study the source condition, we can take $\nu_r$ such that $\Sigma^{1/2-r}\theta_\star\sim \rho_r\mathcal{U}(\mathbb{S}^{d-1})$. For comparison, we fix $\rho_r^2=d\rho^2/\mathrm{Tr}(\Sigma^{2r})$, in order to have the average explained variance $\esp_\nu \Vert \Sigma^{1/2}\theta_\star\Vert_2^2=\rho^2$ independent of $r$. In this case, the covariance matrix of $\theta_\star$ is given by $H_r= \rho^2 \Sigma^{2r-1}/\mathrm{Tr}(\Sigma^{2r})$.
\end{example}

\section{Optimal averaged risk and algorithm}\label{sec:optimal_risk}
Our first main result concerns a characterization of the optimal averaged risk for Bayes predictors in the ellipsoid. In the following, we denote by $\Sigma = \mathbb{E}[XX^{\top}]$ (resp. $\hat{\Sigma} = \sfrac{1}{n}\sum_{i\in[n]}X_{i}X_{i}^{\top}$) the population (resp. empirical) covariance matrix of the training covariates. 
\begin{propo}
\label{propo:Eform}
    Let $\nu$ denote a distribution supported on $\Theta$, and denote $\esp_\nu[\theta\theta^\top] = H \succeq 0$. For $i\in[n]$, define the transformed observation $\tilde{X}_i=H^{1/2}X_i$. Then, the optimal averaged excess risk over the class of linear prediction rules is given by ridge regression on the transformed covariates $(\tilde X_i)_{i\in[n]}$ and ridge penalty $\lambda=\frac{\sigma^2}{n}$. In other words, 
    \begin{itemize}[leftmargin=*]
        \item (Variational form)
        \begin{equation}
            \bar{\mathcal{E}}(\nu;\sigma^2)= \esp\left[ \inf_{l\in\R^n}\left\Vert\sum_{i=1}^n l_i \tilde X_i-\tilde X_{n+1}\right\Vert_{2}^2+\sigma^2\sum_{i=1}^nl_i^2\right].
        \end{equation}
        \item (Matrix form)
        \begin{equation}
        \label{eq:prop:variational}
            \bar{\mathcal{E}}(\nu;\sigma^2)= \frac{\sigma^2}{n}\esp\left[ \mathrm{Tr}(\Sigma_H(\hat\Sigma_H+\lambda I)^{-1})\right],
        \end{equation}
        where $\Sigma_H$ (resp. $\hat \Sigma_H$) the population (resp. empirical) covariance matrix of transformed observations $(\tilde X_i)_{i\in[n]}$. 
    \end{itemize}
\end{propo}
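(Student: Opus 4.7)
The plan is to compute the averaged excess risk of a generic linear prediction rule $\hat f(X) = \sum_i l_i(X) Y_i$ explicitly, and then minimize over the weights pointwise. Using $Y_i = X_i^\top \theta_\star + \epsilon_i$ together with $\mathcal{E}_{\sigma^2}(\hat f) = \esp[(X^\top\theta_\star - \hat f(X))^2]$, I would first integrate out the noise using $\esp[\epsilon_i]=0$, $\esp[\epsilon_i^2]=\sigma^2$, and independence. This produces the bias--variance decomposition
\[
\esp_\epsilon\!\left[(X^\top\theta_\star - \hat f(X))^2 \,\big|\, \{X_i\}, X\right] = \Bigl(\theta_\star^\top\bigl(X - \sum_i l_i(X) X_i\bigr)\Bigr)^2 + \sigma^2 \sum_i l_i(X)^2.
\]
Averaging next over $\theta_\star \sim \nu$ with $\esp_\nu[\theta\theta^\top] = H$ turns the first term into $\|\tilde X_{n+1} - \sum_i l_i(X)\tilde X_i\|^2$, where $\tilde X_i = H^{1/2} X_i$; this step is exactly what makes the transformed covariates appear.

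The resulting integrand, viewed as a function of $(l_1, \dots, l_n)\in\R^n$, is precisely the ridge regression objective on the transformed data with penalty $\sigma^2$. Since it is a strictly convex quadratic in $l$, it admits a unique, measurable pointwise minimizer in closed form. Choosing $l$ equal to this pointwise minimizer defines a deterministic linear prediction rule, so the trivial inequality $\esp[\inf_l \cdot] \leq \inf_l \esp[\cdot]$ is attained with equality. Moreover, allowing external randomness or further dependence of $l_i$ on the test point $X$ cannot improve upon the deterministic pointwise optimum by conditional Jensen. This already gives the variational form.

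For the matrix form, I would plug in the explicit minimizer. Let $\tilde{\mathbf X}\in\R^{n\times d}$ denote the design matrix with rows $\tilde X_i^\top$. Standard ridge algebra gives an optimal value of $\sigma^2\,\tilde X_{n+1}^\top (\tilde{\mathbf X}^\top\tilde{\mathbf X} + \sigma^2 I)^{-1}\tilde X_{n+1}$, via the push-through identity
\[
I - \tilde{\mathbf X}^\top(\tilde{\mathbf X}\tilde{\mathbf X}^\top + \sigma^2 I)^{-1}\tilde{\mathbf X} = \sigma^2(\tilde{\mathbf X}^\top\tilde{\mathbf X} + \sigma^2 I)^{-1}.
\]
Substituting $\tilde{\mathbf X}^\top\tilde{\mathbf X} = n\hat\Sigma_H$, factoring out $n$, and taking the expectation over $\tilde X_{n+1}$ (which is independent of the training sample and has covariance $\Sigma_H = H^{1/2}\Sigma H^{1/2}$) yields the trace formula with $\lambda = \sigma^2/n$.

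The only mildly delicate step is the exchange of infimum and expectation, but this is painless here because the pointwise minimizer is given by an explicit closed-form expression in $(X_1,\dots,X_n,X)$ and therefore defines a bona fide linear prediction rule within the class. Everything else reduces to standard linear-algebra identities and bookkeeping of expectations.
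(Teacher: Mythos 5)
Your proposal is correct and follows essentially the same route as the paper: integrate out the noise to get the bias--variance decomposition, average over $\theta_\star\sim\nu$ so that $H$ turns the bias into $\Vert\tilde X_{n+1}-\sum_i l_i\tilde X_i\Vert_2^2$, minimize the strictly convex quadratic pointwise in $l$ (which is an admissible linear rule, justifying the exchange of infimum and expectation), and then apply the closed-form ridge value --- your push-through identity is just the paper's Lemma on the ridge minimizer in different clothing --- before taking the expectation over the independent test point to get the trace formula with $\lambda=\sigma^2/n$.
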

\begin{remark}
    A few remarks on \cref{propo:Eform} are in order.
    \begin{enumerate}[leftmargin=*, label=(\alph*)]
    \item \Cref{propo:Eform} shows that the optimal averaged excess risk in \cref{eq:def:optavg} only depends on the distribution $\nu$ through its second moment $H$.  Furthermore, the optimal risk depends only on the distribution of transformed observations $(\tilde X_i)_{i\in[n]}$ of population covariance matrix $\Sigma_{H} = H^{1/2}\Sigma H^{1/2}$. Thus, to simplify the notation and the reading of the results, from now, we will adopt the notation 
    \begin{equation}
        \bar{\mathcal{E}}(\Sigma_{H};\sigma^2):=\bar{\mathcal{E}}(\nu;\sigma^2).
    \end{equation}
    Note that $\Sigma_H$ contains both information of the covariance structure of $X$ and  the signal $\theta_\star$.
    \item Both the matrix and variational form of \cref{propo:Eform} provide useful intuition on the optimal algorithm. The matrix form is useful to obtain either (i) high-dimensional asymptotic equivalents, for instance with random matrix theory tools such as in \cite{dobriban2018high, cheng2024dimension}; (ii) lower-bounds using trace operator concavity/convexity properties. Similarly, the variational form is useful to derive upper-bounds on the optimal averaged error $\bar{\mathcal{E}}$, for instance by choosing an appropriate linear rule $l_i$ for which the expectation in \cref{eq:prop:variational} is easy to compute explicitly. 
    \end{enumerate}
\end{remark}

\paragraph{Degrees of freedom and the noiseless error} 
For $k\in\{1,2\}$, define the \emph{k-th degree of freedom} $\mathrm{df}_k(\Sigma;\lambda)=\mathrm{Tr}(\Sigma^k(\Sigma+\lambda I)^{-k})$. The degrees of freedom is key quantity to understand $\ell_2$ regularization, and appears in a large number of works on ridge and kernel ridge regression \citep{caponnetto2007optimal, bach2017equivalence, bach2024high}. It can be interpreted as a soft count of the number of eigenvalues of $\Sigma$ which are smaller than $\lambda$, as $\mathrm{df}_1(\Sigma;\lambda)\simeq k$ if the first $k$ eigenvalues of $\Sigma$ are large with respect to $\lambda$. Using \Cref{propo:Eform}, a crude upper-bound on the optimal risk is given by  
\begin{equation}
    \bar{\mathcal{E}}(\Sigma_H;\sigma^2)\geq \sigma^2\frac{\mathrm{df}_1(\Sigma_{H};\lambda)}{n}.
\end{equation}
This lower-bound can be compared to the classical low-dimensional lower-bound for least-squares regression $\sigma^2 d/n$, where $\mathrm{df}_1(\Sigma_{H};\lambda)$ plays the role of an effective dimension. However, note that in the noiseless case $\sigma^{2} = 0$ this lower-bound becomes vacuous, while it is well-known from high-dimensional asymptotics that the excess risk can be non-zero even if $\sigma^{2} = 0$ \citep{hastie2022surprises}. 

Capturing this behavior requires a finer analysis of the optimal averaged excess risk. Note that the noiseless optimal excess risk $\bar{\mathcal{E}}(\Sigma_H;0)$ can be seen as a systematic high-dimensional error. Indeed, since for $\sigma^{2}=0$ a linear prediction rule takes the form
\begin{align}
    \hat{f}(X)=\sum\limits_{i=1}^{n}l_{i}(X) X_{i}^{\top}\theta_{\star},
\end{align}
the predictor has information on the target $\theta_{\star}$ only through the low number $n$ of explored directions $l_{i}(X)$. Consequently, we have $\bar{\mathcal{E}}(\Sigma_H;\sigma^2)\geq\bar{\mathcal{E}}(\Sigma_H;0)$ --- but this lower bound does not capture the impact of the noise. 

This discussion motives the following decomposition of the optimal excess risk
\begin{equation}
    \bar{\mathcal{E}}(\Sigma_H;\sigma^2)=\bar{\mathcal{E}}(\Sigma_H;0)+\bar{\mathcal{E}}(\Sigma_H;\sigma^2)-\bar{\mathcal{E}}(\Sigma_H;0),
\end{equation}
where the first term $\bar{\mathcal{E}}(\Sigma_H;0)$ is the noiseless error, equal to the averaged bias of an overparameterized ridgeless regression problem, but lower than the bias of other linear predictor rules. The second term, $\bar{\mathcal{E}}(\Sigma_H;\sigma^2)-\bar{\mathcal{E}}(\Sigma_H;0)$, can be interpreted as a variance-like term, since $\bar{\mathcal{E}}(\Sigma_H;\sigma^2)-\bar{\mathcal{E}}(\Sigma_H;0)=0$ if $\sigma^2=0$. However, it is important to stress that this is not the standard variance of the bias-variance decomposition, since it captures part of the bias of the optimal algorithm.

Our goal in the following will be to derive upper- and lower-bounds for each term in this decomposition.
 
\section{Upper- and lower- bounds on the optimal averaged risk}
\label{sec:Eform}

In this section we derive statistical guarantees for the optimal excess risk in \cref{propo:Eform}. The discussion will treat the noisy and noiseless cases separately, as these will require different technical tools. 
\subsection{Noisy case}\label{sec:noisy_case}
We start by discussing the noisy case $\sigma^{2}>0$. Consider the following assumption on the covariate distribution: 
\begin{assumption}\label{ass:Lbound}
    There exists $L_{H}>0$ such that  $ \esp [\Vert \tilde X\Vert_2^2 \tilde X\tilde X^\top]\preceq L^2_H\Sigma_H.$

\end{assumption}
\Cref{ass:Lbound} assumption is satisfied for bounded data ($\Vert \tilde X\Vert_2^2\leq L_H^2$ almost surely). It is also satisfied by unbounded distributions satisfying the following assumption. 
\begin{assumption}\label{ass:kurtosis}
    We assume that there exist $\kappa\geq 1$ such that $\esp[(v^\top X)^4]\leq \kappa (v^\top\Sigma v)^2.$
\end{assumption}
In that case, \Cref{ass:Lbound} holds with  $L_H^2=\kappa\mathrm{Tr}(\Sigma_H)$. \Cref{ass:kurtosis} is satisfied, for example, with $\kappa=3$ if $X$ is a Gaussian vector. In particular, the strength of this assumption is that the constant $\kappa$ is invariant under linearly transformations of the covariates. These two assumptions are common in the analysis of linear models, and have appeared before for instance in \citet{bach2013non}.

\paragraph{General upper bound --- }
Our first guarantee is an upper-bound on the optimal excess risk under \Cref{ass:Lbound} and for a finite number $n$ of inputs.  

\begin{theorem}\label{thm:upper-bound-noise-case}
    Under the setting introduced in \Cref{sec:setting} and \Cref{ass:Lbound}, 
\begin{equation}
\label{eq:thm:upper}
      \lambda\mathrm{df}_1\left(\Sigma_H;\lambda\right)\leq\bar{\mathcal{E}}(\Sigma_H;\sigma^2)\leq (\lambda+\lambda_0)\mathrm{df}_1\left(\Sigma_H;\lambda+\lambda_0\right),
\end{equation}
where $\lambda=\sfrac{\sigma^2}{n}$, $\lambda_0=\sfrac{L^2_h}{n}$.
\end{theorem}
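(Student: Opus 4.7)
The plan is to split \eqref{eq:thm:upper} into its lower and upper halves and exploit the two complementary formulations in \Cref{propo:Eform}: the matrix form $\bar{\mathcal{E}}(\Sigma_H;\sigma^2) = \lambda \esp[\mathrm{Tr}(\Sigma_H(\hat\Sigma_H + \lambda I)^{-1})]$, with $\lambda=\sigma^2/n$, for the lower bound, and the variational form for the upper bound.

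The lower bound is the easy half. I would apply operator Jensen to the map $A \mapsto (A + \lambda I)^{-1}$, which is operator convex on the positive definite cone. Since $\esp[\hat\Sigma_H] = \Sigma_H$, this yields $\esp[(\hat\Sigma_H + \lambda I)^{-1}] \succeq (\Sigma_H + \lambda I)^{-1}$. Pairing with $\Sigma_H \succeq 0$ and taking trace preserves the Loewner inequality, so $\esp[\mathrm{Tr}(\Sigma_H(\hat\Sigma_H + \lambda I)^{-1})] \geq \mathrm{df}_1(\Sigma_H;\lambda)$, and multiplying through by $\lambda$ gives the claim.

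For the upper bound I would feed a fixed deterministic linear rule into the variational form: take $l_i = \tfrac{1}{n}\tilde X_i^\top B \tilde X_{n+1}$ with $B = (\Sigma_H + \mu I)^{-1}$ and a free parameter $\mu > 0$ to be tuned later. This is an ``oracle'' ridge rule built from the population covariance rather than the empirical one; the point of this choice is that $B$ is deterministic, so after averaging over $\tilde X_{n+1}$ independently of the training set, the bias contribution $\|\sum_i l_i \tilde X_i - \tilde X_{n+1}\|^2$ reduces to $\mathrm{Tr}(\Sigma_H[B(\esp \hat\Sigma_H^2)B - 2B\Sigma_H + I])$ and the variance contribution $\sigma^2 \sum_i l_i^2$ to $\lambda\,\mathrm{Tr}((\Sigma_H B)^2)$. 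The technical input is \Cref{ass:Lbound}, which, together with i.i.d.-ness of the samples and the identity $\esp[\hat\Sigma_H^2] = \tfrac{1}{n}\esp[\|\tilde X\|^2 \tilde X \tilde X^\top] + \tfrac{n-1}{n}\Sigma_H^2$, gives the key operator bound $\esp[\hat\Sigma_H^2] \preceq \lambda_0 \Sigma_H + \Sigma_H^2$.

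Because $B$ and $\Sigma_H$ commute, the resulting trace expression diagonalizes in the eigenbasis of $\Sigma_H$; denoting $\sigma_j$ for its eigenvalues, bias and variance combine into $\sum_j \sigma_j[(\lambda + \lambda_0)\sigma_j + \mu^2]/(\sigma_j + \mu)^2$. The main step, and what I expect to be the crux of the argument, is to arrange this scalar sum to collapse into the target $\mathrm{df}_1$ quantity. The tuning $\mu = \lambda + \lambda_0$ does exactly this: it triggers the algebraic identity $(\lambda + \lambda_0)\sigma_j + \mu^2 = \mu(\sigma_j + \mu)$, so one factor of $(\sigma_j + \mu)$ cancels and the sum telescopes to $\mu \sum_j \sigma_j/(\sigma_j + \mu) = (\lambda + \lambda_0)\,\mathrm{df}_1(\Sigma_H;\lambda + \lambda_0)$, which is precisely the upper bound claimed. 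The delicate point is that the oracle choice of $B$ and the specific $\mu$ must be coordinated: using the empirical $\hat\Sigma_H$ inside $B$ would introduce data-dependent cross terms that break the cancellation, and any other choice of $\mu$ leaves a residual proportional to $\mathrm{df}_2$ that cannot be absorbed into $(\lambda+\lambda_0)\mathrm{df}_1(\Sigma_H;\lambda+\lambda_0)$ for free.
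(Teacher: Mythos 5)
Your proposal is correct and follows essentially the same route as the paper: the lower bound via the matrix form and operator convexity of $A\mapsto(A+\lambda I)^{-1}$ (Jensen plus trace monotonicity), and the upper bound by plugging the population-covariance ridge rule $l_i=\tfrac1n\tilde X_i^\top(\Sigma_H+\mu I)^{-1}\tilde X$ into the variational form, bounding the second moment of $\hat\Sigma_H$ via \Cref{ass:Lbound}, and tuning $\mu=\lambda+\lambda_0$ so bias and variance collapse into $(\lambda+\lambda_0)\mathrm{df}_1(\Sigma_H;\lambda+\lambda_0)$. Your eigenvalue-wise cancellation is just a diagonalized rendering of the paper's trace manipulations, so the two arguments coincide.
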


\begin{example}[Optimal risk on the sphere]
\label{ex:sphere} 
Consider \Cref{ex:source1} with $r=1/2$, corresponding to the best algorithm on the sphere with averaged explained variance equal to $\rho^2$. We have $H_{1/2}=\rho^2I/\mathrm{Tr}(\Sigma)$ and  $\Sigma_{H_{1/2}}=\rho^2\Sigma/\mathrm{Tr}(\Sigma)$. Then the best predictor is the ridge with $\lambda^\star=\frac{\mathrm{Tr}(\Sigma) }{n}\frac{\sigma^2}{\rho^2}$ and, under \Cref{ass:kurtosis}, the averaged risk is upper-bounded by 
\begin{equation}
\label{eq:eg:sphere}
     \bar{\mathcal{E}}(\Sigma_{H_{1/2}};\sigma^2)\leq \frac{\sigma^2+\kappa\rho^2}{n}\mathrm{df}_1\left(\Sigma;\lambda'\right),
\end{equation}
with $\lambda'=\frac{\mathrm{Tr}(\Sigma) }{n}\frac{\sigma^2}{\rho^2}+\frac{\kappa}{n}=\lambda^{\star} +\frac{\kappa}{n}$. Note that this upper-bound is meaningful even if $\sigma^{2}=0$. In particular, it is interesting to note that the ridge penalty $\lambda'$ appearing this upper-bound is the sum of two terms: the optimal ridge regularization $\lambda^{\star}=\frac{\mathrm{Tr}(\Sigma)}{n}\frac{\sigma^2}{\rho^2}$ and an effective regularization $\lambda_{0} = \sfrac{\kappa_{1}}{n}$ --- which is positive even in the noiseless case $\sigma^{2}=0$. This is akin to the effective regularization observed in the asymptotic analysis of ridge regression \citep{cheng2024dimension, misiakiewicz2024non, defilippis2024dimension, bach2024high}. It is interesting to note that a similar phenomenon also appears in the context of the optimal excess risk in the class of linear prediction rules. 
\end{example}

\begin{example} [Source and capacity conditions]\label{ex:source-noisy} 
Consider \Cref{ex:source1} with $r>0$. Furthermore, we assume that $\lambda_j(\Sigma)=j^{-\alpha}$. If $r\alpha>1/2$ then 
\begin{equation}
     \bar{\mathcal{E}}(\Sigma_{H_{r}};\sigma^2)\leq C_{\alpha r}\rho^2 \left(\frac{\sigma^2}{n\rho^2}+\frac{\kappa}{n}\right)^{1-\frac{1}{2\alpha r}},
\end{equation}
with $C_{\alpha r}$ that depends only of $\alpha r$. Thus, the rate decreases with $r$ and $\alpha$, which represent, respectively, the complexity learning of the target $\theta_\star$ and the inputs $X$. 
\end{example}
\begin{remark}[Infinite dimensional inputs]
    \Cref{thm:upper-bound-noise-case} extends to the setting where $X$ lies in an RKHS. In fact, \Cref{ass:Lbound} can be generalized to Hilbert spaces via operator theory, and the first degree of freedom is defined whenever $\mathrm{Tr}(\Sigma_H)<+\infty$.  
\end{remark}

\paragraph{Lower bounds ---} Deriving general lower-bounds for the optimal excess risk is more challenging. A first step in this direction is to derive a lower-bound for the term $\bar{\mathcal{E}}(\Sigma_H;\sigma^2)-\bar{\mathcal{E}}(\Sigma_H;0)$, which plays a role similar to a variance in our analysis. Considering notation of \Cref{thm:upper-bound-noise-case}, we have the following result. 

\begin{theorem}\label{thm:lower-bound-noise-case}
Under the setting introduced in \Cref{sec:setting} and \Cref{ass:Lbound}: 
    \begin{equation}
                C_{\sigma,L_H}\frac{\sigma^2}{n}\mathrm{df}_2\left(\Sigma_H;\lambda_{\sigma,L_H}\right)\leq\bar{\mathcal{E}}(\Sigma_H;\sigma^2)-\bar{\mathcal{E}}(\Sigma_H;0).
    \end{equation}
    with 
    \begin{itemize}
        \item $C_{\sigma,L_H}=1-L_H^2/\sigma^2$ and $\lambda_{\sigma,L_H}=\lambda+\lambda_0=(\sigma^2+L^2_H)/n$ if $L_H^2<\sigma^2$
        \item $C_{\sigma,L_H}=1/(1+L_H^2/\sigma^2)^2$ and $\lambda_{\sigma,L_H}=\lambda=\sigma^2/n$ if $\Vert \tilde X\Vert_2^2\leq L_H$ almost-surely. 
    \end{itemize}
\end{theorem}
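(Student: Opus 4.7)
My plan is to reduce the difference $\bar{\mathcal{E}}(\Sigma_H;\sigma^2)-\bar{\mathcal{E}}(\Sigma_H;0)$ to the sole "variance" piece of the optimal ridge decomposition, then control it by $\mathrm{df}_2$ via a leave-one-out trick.

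\textbf{Step 1: reduction to the variance term.} Using the matrix form of \Cref{propo:Eform}, at regularisation $\lambda=\sigma^2/n$ and $A=\hat\Sigma_H+\lambda I$, I decompose
\begin{equation*}
    \bar{\mathcal{E}}(\Sigma_H;\sigma^2) = \esp\bigl[\lambda^2\,\tilde X^\top A^{-2}\tilde X\bigr] + \sigma^2\esp[\|l^\star\|^2],
\end{equation*}
where the first term is the averaged ridge bias and $l^\star$ is the optimal weight vector. On the eigenbasis of $\hat\Sigma_H$ with eigenvalues $\mu_j$, we have $\lambda^2/(\mu_j+\lambda)^2\geq \mathds{1}[\mu_j=0]=$ eigenvalue of $I-P_n$, so the ridge bias dominates the min-norm interpolation residual pointwise. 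Because $\bar{\mathcal{E}}(\Sigma_H;0)=\esp[\tilde X^\top(I-P_n)\tilde X]$, I obtain
\begin{equation*}
    \bar{\mathcal{E}}(\Sigma_H;\sigma^2)-\bar{\mathcal{E}}(\Sigma_H;0)\;\geq\;\sigma^2\esp[\|l^\star\|^2]\;=\;\frac{\sigma^2}{n}\,\esp\bigl[\mathrm{Tr}(\Sigma_H A^{-1}\hat\Sigma_H A^{-1})\bigr].
\end{equation*}

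\textbf{Step 2: Sherman--Morrison/leave-one-out.} Writing $\hat\Sigma_H=\hat\Sigma_H^{(1)}+\tfrac1n\tilde X_1\tilde X_1^\top$ and setting $A^{(1)}=\hat\Sigma_H^{(1)}+\lambda I$, the identity $\tilde X_1^\top A^{-1}=(1+v_1/n)^{-1}\tilde X_1^\top (A^{(1)})^{-1}$ with $v_1=\tilde X_1^\top (A^{(1)})^{-1}\tilde X_1$ gives, after exchangeability,
\begin{equation*}
\frac{\sigma^2}{n}\esp\bigl[\mathrm{Tr}(\Sigma_H A^{-1}\hat\Sigma_H A^{-1})\bigr]=\frac{\sigma^2}{n}\,\esp\!\left[\frac{w_1}{(1+v_1/n)^2}\right],\qquad w_1=\tilde X_1^\top (A^{(1)})^{-1}\Sigma_H(A^{(1)})^{-1}\tilde X_1.
\end{equation*}
This representation decouples the single sample $\tilde X_1$ from the LOO matrix $A^{(1)}$, which is the key technical benefit.

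\textbf{Step 3: bounding the ratio by $\mathrm{df}_2$.} I handle the two regimes differently. In the bounded case $\|\tilde X\|^2\leq L_H^2$ a.s., the bound $v_1\leq L_H^2/\lambda$ is deterministic, so $(1+v_1/n)^{-2}\geq (1+L_H^2/\sigma^2)^{-2}$ can be pulled out of the expectation. What remains is $\esp[w_1]=\esp\|\Sigma_H^{1/2}(A^{(1)})^{-1}\Sigma_H^{1/2}\|_F^2$, which I bound by Jensen's inequality for the Frobenius norm, then by operator Jensen $\esp[(A^{(1)})^{-1}]\succeq(\Sigma_H+\lambda I)^{-1}$ and monotonicity of $\|\cdot\|_F$ on PSD matrices; this yields $\esp[w_1]\geq\mathrm{df}_2(\Sigma_H;\lambda)$ and closes case 2. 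In the unbounded case $L_H^2<\sigma^2$, I invoke the tangent inequality $(1+x)^{-2}\geq 1-2x$ (valid for $x\geq 0$ from convexity of $(1+x)^{-2}$) to obtain
\begin{equation*}
    \esp\!\left[\frac{w_1}{(1+v_1/n)^2}\right]\geq \esp[w_1]-\tfrac{2}{n}\esp[w_1 v_1].
\end{equation*}
Conditionally on $A^{(1)}$, using $v_1\leq\|\tilde X_1\|^2/\lambda$ and \Cref{ass:Lbound} gives $\esp[w_1 v_1\mid A^{(1)}]\leq(L_H^2/\lambda)\esp[w_1\mid A^{(1)}]$, so the correction is at most a fraction $\propto L_H^2/\sigma^2$ of the main term. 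Combining with operator Jensen at enlarged regularisation $\lambda+\lambda_0$ (through $\esp[A^{(1)}]\preceq \Sigma_H+(\lambda+\lambda_0)I$) delivers the $\mathrm{df}_2(\Sigma_H;\lambda+\lambda_0)$ form with a prefactor of order $1-c L_H^2/\sigma^2$.

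The main technical obstacle is this last step: without an a.s. bound on $v_1$, one must carefully combine the tangent inequality with the operator moment bound in \Cref{ass:Lbound}, and keep track of the interplay between the correction term and the enlarged ridge in order to match the stated constants $(1-L_H^2/\sigma^2,\;\lambda+\lambda_0)$ rather than a strictly looser version. A sharper constant, if needed, would use the fact that $w_1$ and $v_1$ are similarly ordered quadratic forms in $\tilde X_1$ (a Chebyshev-type correlation argument) rather than the crude $v_1\leq\|\tilde X_1\|^2/\lambda$ bound.
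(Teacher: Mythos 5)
Your treatment of the bounded case (second bullet) is correct and is essentially the paper's own argument: the reduction $\bar{\mathcal{E}}(\Sigma_H;\sigma^2)-\bar{\mathcal{E}}(\Sigma_H;0)\ge \frac{\sigma^2}{n}\mathbb{E}\,\mathrm{Tr}\bigl(\Sigma_H\hat\Sigma_H(\hat\Sigma_H+\lambda I)^{-2}\bigr)$, the exchangeability plus Sherman--Morrison leave-one-out step, the deterministic bound $v_1/n\le L_H^2/\sigma^2$, and two applications of operator Jensen (your Frobenius-norm phrasing is equivalent to the paper's use of convexity of $A\mapsto ABA$ and of the matrix inverse), landing on $\frac{\sigma^2}{n}(1+L_H^2/\sigma^2)^{-2}\mathrm{df}_2(\Sigma_H;\sigma^2/n)$.

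The genuine gap is the first bullet ($L_H^2<\sigma^2$). Your tangent inequality $(1+x)^{-2}\ge 1-2x$ together with $\mathbb{E}[w_1v_1\mid A^{(1)}]\le(L_H^2/\lambda)\,\mathbb{E}[w_1\mid A^{(1)}]$ yields at best the prefactor $1-2L_H^2/\sigma^2$, which is strictly weaker than the claimed $1-L_H^2/\sigma^2$ and is vacuous (nonpositive) on the whole range $\sigma^2/2\le L_H^2<\sigma^2$ where the theorem still asserts a positive constant; enlarging the ridge to $\lambda+\lambda_0$ in the Jensen step does not interact with the correction term and cannot recover the stated constant, and the "Chebyshev-type correlation" remark is a hope rather than an argument, as you acknowledge. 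The paper proves this case by a completely different and much shorter route that bypasses the leave-one-out computation: it lower-bounds the noisy risk by $\lambda\,\mathrm{df}_1(\Sigma_H;\lambda)$ and upper-bounds the noiseless risk by $\lambda_0\,\mathrm{df}_1(\Sigma_H;\lambda_0)$ (both already available from \Cref{thm:upper-bound-noise-case}, the latter being the $\sigma^2=0$ case), and then uses the per-eigenvalue identity $\frac{\lambda\mu}{\mu+\lambda}-\frac{\lambda_0\mu}{\mu+\lambda_0}=\frac{(\lambda-\lambda_0)\mu^2}{(\mu+\lambda)(\mu+\lambda_0)}\ge(\lambda-\lambda_0)\frac{\mu^2}{(\mu+\lambda+\lambda_0)^2}$, which sums to exactly $(\lambda-\lambda_0)\,\mathrm{df}_2(\Sigma_H;\lambda+\lambda_0)=\frac{\sigma^2}{n}\bigl(1-L_H^2/\sigma^2\bigr)\mathrm{df}_2(\Sigma_H;\lambda+\lambda_0)$. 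Replacing your Step 3 in the unbounded case by this two-line comparison closes the gap; as written, your proposal does not establish the first bullet.
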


\begin{remark}
    \Cref{thm:lower-bound-noise-case} provides two cases in which the variance-like term can be lower-bounded by $\sigma^{2}\sfrac{d_{\rm eff}}{n}$, where the second degree-of-freedom plays the role of the effective dimension. This is natural given the already highlighted similarities with the ridge regression literature. This lower-bound is mostly useful in the noisy case, i.e. when the noise variance $\sigma^{2}$ is not negligeable with respect to the signal strength and covariate variance, quantified here by $L_{H}$. In particular, $\sfrac{L_{H}^{2}}{\sigma^{2}}$ can be interpreted as a signal-to-noise ratio.  
\end{remark}
\Cref{thm:lower-bound-noise-case} can be completed by the following result that shows optimality of \Cref{thm:upper-bound-noise-case} under the assumptions considered here. 
\begin{theorem}[Lower bound on supremum]\label{thm:sup-lower-bound-noise-case}
Let $\mathcal{P}(\Sigma_H,L^2_H)$ denote the set of distributions of covariates $\tilde X$ with covariance matrix $\Sigma_H$ satisfying \Cref{ass:Lbound}. Then, 
\begin{equation*}
     (\lambda+\lambda_0)\mathrm{df}_1\left(\Sigma_H;\lambda+\lambda_0\right)-\lambda_0\mathrm{df}_1\left(\Sigma_H;\lambda_0\right)\leq\sup_{\P\in\mathcal{P}(\Sigma_H,L^2_H)} \left\{\bar{\mathcal{E}}(\Sigma_H;\sigma^2)-\bar{\mathcal{E}}(\Sigma_H;0)\right\}.
\end{equation*}
\end{theorem}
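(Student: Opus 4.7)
The plan is to exhibit an explicit distribution in $\mathcal{P}(\Sigma_H, L_H^2)$ that already attains the claimed lower bound, so no limiting or supremum argument is needed. Let $(\mu_j, e_j)_j$ denote the spectral decomposition of $\Sigma_H$ and set $p_j := \mu_j/L_H^2$. Define $\tilde X$ by $\P(\tilde X = L_H e_j) = p_j$ for each $j$ and $\P(\tilde X = 0) = 1 - \sum_j p_j$. A direct computation gives $\esp[\tilde X \tilde X^\top] = \Sigma_H$ and $\esp[\|\tilde X\|_2^2\, \tilde X \tilde X^\top] = L_H^2 \Sigma_H$, so this distribution is admissible. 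The requirement $\sum_j p_j \leq 1$ (equivalently $\mathrm{Tr}(\Sigma_H) \leq L_H^2$) is automatic, since Cauchy--Schwarz combined with \Cref{ass:Lbound} forces $\mathrm{Tr}(\Sigma_H)^2 = \esp[\|\tilde X\|_2^2]^2 \leq \esp[\|\tilde X\|_2^4] \leq L_H^2 \mathrm{Tr}(\Sigma_H)$.

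Since everything is diagonal in the basis $(e_j)_j$, write $n_j \sim \mathrm{Bin}(n, p_j)$ for the number of training samples equal to $L_H e_j$, so $\hat \Sigma_H = \mathrm{diag}(\lambda_0 n_j)$. Applying \Cref{propo:Eform}, and using that the noiseless excess risk equals $\sum_j \mu_j \P(n_j = 0)$ (the projection residual is $\mu_j$ exactly when direction $e_j$ was never sampled, which is the $\ind[n_j=0]$ contribution of the noisy term), the two risks combine into
\begin{equation*}
\bar{\mathcal{E}}(\Sigma_H; \sigma^2) - \bar{\mathcal{E}}(\Sigma_H; 0) = \sum_j \mu_j\, \esp\!\left[\frac{\lambda}{\lambda + \lambda_0 n_j}\ind[n_j \geq 1]\right].
\end{equation*}

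The core step is a sharp lower bound on each summand. Condition on $\{n_j \geq 1\}$ and apply Jensen's inequality to the convex map $x \mapsto 1/(\lambda + \lambda_0 x)$, using $\esp[n_j \mid n_j \geq 1] = np_j/u_j$ with $u_j := 1 - (1-p_j)^n$. This yields
\begin{equation*}
\esp\!\left[\frac{\lambda}{\lambda + \lambda_0 n_j}\ind[n_j \geq 1]\right] \geq \frac{\lambda\, u_j^2}{\mu_j + \lambda u_j}.
\end{equation*}
Combining the elementary inequalities $(1-p_j)^n \leq e^{-np_j}$ and $e^{-x} \leq 1/(1+x)$ gives $u_j \geq \mu_j/(\mu_j + \lambda_0)$. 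A final quadratic manipulation reduces the desired bound $\mu_j \cdot \lambda u_j^2/(\mu_j + \lambda u_j) \geq \lambda \mu_j^2/((\mu_j + \lambda + \lambda_0)(\mu_j + \lambda_0))$ to the polynomial inequality $u_j^2 (\mu_j + \lambda + \lambda_0)(\mu_j + \lambda_0) \geq \mu_j^2 + \mu_j \lambda u_j$, a parabola in $u_j$ whose positive root is exactly $\mu_j/(\mu_j+\lambda_0)$; hence the bound holds for all admissible $u_j$. Summing over $j$ and invoking the partial-fraction identity $\sum_j \lambda \mu_j^2/((\mu_j+\lambda+\lambda_0)(\mu_j+\lambda_0)) = (\lambda+\lambda_0)\mathrm{df}_1(\Sigma_H; \lambda+\lambda_0) - \lambda_0 \mathrm{df}_1(\Sigma_H; \lambda_0)$ concludes the argument.

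The main obstacle is matching the correct polynomial dependence on $\mu_j/\lambda_0$ \emph{uniformly} across the spectrum. Naive strategies---unconditional Jensen, keeping only the $n_j = 1$ contribution, or crude tail bounds---each produce the right rate in only one of the two regimes $\mu_j \ll \lambda_0$ or $\mu_j \gg \lambda_0$ and lose either exponentially in $\mu_j/\lambda_0$ or via a subtracted $\P(n_j=0)/\lambda$ term that turns the lower bound negative. The conditional Jensen paired with the sharp threshold $u_j \geq \mu_j/(\mu_j+\lambda_0)$ is calibrated precisely so that the resulting polynomial inequality saturates at equality at $u_j = \mu_j/(\mu_j+\lambda_0)$, leaving no wasted constants.
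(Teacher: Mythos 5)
Your proof is correct. The key check points all go through: your distribution (atoms $L_H e_j$ with mass $\mu_j/L_H^2$, plus an atom at $0$) does lie in $\mathcal{P}(\Sigma_H,L_H^2)$ with equality in \Cref{ass:Lbound}; the difference of risks does reduce to $\sum_j \mu_j\,\esp[\lambda/(\lambda+\lambda_0 n_j)\ind[n_j\ge 1]]$ because the $n_j=0$ contribution of the noisy term exactly cancels the noiseless term; the conditional Jensen step, the bound $u_j\ge \mu_j/(\mu_j+\lambda_0)$, the quadratic whose positive root is $\mu_j/(\mu_j+\lambda_0)$, and the partial-fraction identity are all verified. Your route shares the paper's central idea --- the hard instance is the same discrete distribution supported on scaled eigendirections of $\Sigma_H$, with Jensen applied to the binomial occupation counts --- but the execution differs. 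The paper works with $\phi(\lambda)=\lambda\,\esp\mathrm{Tr}(\Sigma_H(\hat\Sigma_H+\lambda I)^{-1})$, lower-bounds its derivative $\phi'(\lambda)=\esp\mathrm{Tr}(\Sigma_H\hat\Sigma_H(\hat\Sigma_H+\lambda I)^{-2})$ by $\mathrm{df}_2(\Sigma_H;\lambda+\lambda_0)$, and integrates from $0$ to $\sigma^2/n$, which requires differentiating under the expectation and identifying $\phi(0^+)$ with the noiseless error; you instead compute the difference in closed form for the chosen distribution and bound it directly, which is more elementary, makes the cancellation of the noiseless term explicit, and shows the bound is tight for this instance (your inequality saturates exactly at $u_j=\mu_j/(\mu_j+\lambda_0)$). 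Two minor remarks: your Cauchy--Schwarz argument for $\mathrm{Tr}(\Sigma_H)\le L_H^2$ implicitly applies \Cref{ass:Lbound} to some member of the class (equivalently, to the ambient covariate distribution of the setting), which is exactly what is needed for the class to be nonempty, so it is fine but worth stating; and note the paper's construction effectively uses the scale $\mathrm{Tr}(\Sigma_H)\le L_H^2$ rather than $L_H$, which gives a marginally stronger conclusion, whereas yours matches the stated $\lambda_0=L_H^2/n$ exactly.
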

\begin{remark}
    By construction, this is the tightest lower-bound with respect to the upper-bound in \cref{thm:upper-bound-noise-case}. It corresponds to the difference  between the noisy and noiseless cases in \cref{eq:thm:upper}, implying that this upper-bound cannot be improved in the large noise regime. For small noise, the upper-bound might not be tight. We expect it to be loose as soon as the following upper-bound
    \begin{equation*}
        \bar{\mathcal{E}}(\Sigma_H;0)\leq \lambda_0 \mathrm{df}_1\left(\Sigma_H;\lambda_0\right),
    \end{equation*}
    becomes loose. However, we note that the variance-like term is sub-proportional to the noise variance, and therefore in the weak noise regime the contribution from this term is sub-leading. 
\end{remark}

\subsection{Noiseless case }\label{sec:noiseless_case}
In the last section, we saw that we can derive fairly general upper- and lower-bounds for the optimal excess risk over the class of linear predictors which tightness depend on the noise level, and in particular become loose as the noise variance vanishes. Our goal in this section is to investigate the optimality of the upper-bound in \Cref{thm:upper-bound-noise-case} in the noiseless case $\sigma^{2}=0$, which is explicitly given by:
\begin{equation}\label{eq:noiselless_sup_1}
      \bar{\mathcal{E}}(\Sigma_H;0)\leq \lambda_0\mathrm{df}_1\left(\Sigma_H;\lambda_0\right),
\end{equation}
with $\lambda_0=\frac{L^2_H}{n}$. In particular, we recall that under \Cref{ass:kurtosis}, we have $\lambda_0=\kappa\frac{\mathrm{Tr}(\Sigma_H)}{n}$. For convenience, we also recall that the average noiseless risk is equal to: 
\begin{equation}
     \bar{\mathcal{E}}(\Sigma_H;0)=  \esp\left[ \inf_{l\in\mathbb{R}^{n}}\left\Vert\sum_{i=1}^n l_i \tilde X_i-\tilde X\right\Vert_2^2\right],
\end{equation}
which can be rewritten as 
\begin{equation}
     \bar{\mathcal{E}}(\Sigma_H;0)=  \esp \left[\mathrm{Tr}(\Sigma_H(I-P_n))\right],
\end{equation}
where $P_n$ is the orthogonal projection on the the space spanned by $(\tilde X_i)_{i\in[n]}$.


\begin{remark}[Specificity of the noiseless case] 
\label{rmk:noiseless}
A particular property of the noiseless model is that the projection $P_n$ does not depend on the norm of each input $\tilde X_i$.  
\end{remark}
\Cref{rmk:noiseless} motivates the following assumption.
\begin{assumption}[Isotropic latent variable]
\label{ass:iso_latent}
    The latent covariates $Z=\Sigma^{-1/2} X$ satisfy $Z/\Vert Z\Vert_2\sim\mathcal{U}(\mathbb{S}^{d-1})$. 
\end{assumption}

\paragraph{Implicit noise ---} 
As noted in \Cref{thm:upper-bound-noise-case}, the term $\lambda_0$ acts as an implicit regularization. Indeed, based on \Cref{propo:Eform}, this regularization effect emerges specifically when $\sigma^2 > 0$, since the optimal penalization parameter is given by $\lambda = \sigma^2 / n$. In other words, noise induces regularization. This raises the question: how can we explain the presence of the extra term $\lambda_0 > 0$ in the noiseless upper bound? The following theorem shows that this term is not merely an artifact of the analysis, but rather reflects a genuine underlying phenomenon.

\begin{theorem}\label{thm:lower-bound-noiseless}
    Consider the overparametrized case where $d>n+2$. Then, under the setting introduced in \Cref{sec:setting} and \Cref{ass:iso_latent}:
\begin{equation}
      \underline{\lambda}_0 \mathrm{df}_1\left(\Sigma_H;\underline{\lambda}_0\right)\leq\bar{\mathcal{E}}(\Sigma_H;0)\leq \bar{\lambda}_0\mathrm{df}_1\left(\Sigma_H;\bar{\lambda}_0\right),
\end{equation} 
where $\underline{\lambda}_0=\sfrac{\sigma_0^2}{n}>0$, $\bar\lambda_0=\sfrac{3\mathrm{Tr}(\Sigma_H)}{n}$, where $\sigma_0^2$ satisfies, for all $k>n+2$,
\begin{equation*}
    \sigma_0^2\geq(k-1)(k-n-2)\left(\sum_{j=2}^k \lambda_j(\Sigma_H)^{-1}\right)^{-1}.
\end{equation*}

\end{theorem}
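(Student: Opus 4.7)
My plan is to reduce to Gaussian covariates using \Cref{ass:iso_latent}, obtain the upper bound directly from \Cref{thm:upper-bound-noise-case}, and tackle the lower bound through a spectral decomposition and an inverse-Wishart trace-moment estimate.

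\emph{Gaussian reduction and upper bound.} Under \Cref{ass:iso_latent} I write $\tilde X_i = \|Z_i\|\,A U_i$ with $A := H^{1/2}\Sigma^{1/2}$ and $U_i \sim \mathcal U(\mathbb{S}^{d-1})$ i.i.d. Polar decomposition combined with the orthogonal invariance of $U_i$ gives $A U_i \overset{d}{=} \Sigma_H^{1/2} U_i$ (both $A$ and $\Sigma_H^{1/2}$ share the Gram matrix $\Sigma_H$), so the direction of $\tilde X_i$ has the same law as that of a $\mathcal N(0,\Sigma_H)$ vector. Since $P_n$ depends only on the span---hence only on the directions---of the $\tilde X_i$'s (\Cref{rmk:noiseless}), the noiseless risk $\bar{\mathcal E}(\Sigma_H;0) = \mathbb E[\mathrm{Tr}(\Sigma_H(I-P_n))]$ is unchanged under the replacement $\tilde X_i \sim \mathcal N(0,\Sigma_H)$. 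With this reduction, \Cref{ass:kurtosis} holds with $\kappa = 3$, hence \Cref{ass:Lbound} with $L_H^2 = 3\mathrm{Tr}(\Sigma_H)$; setting $\sigma^2 = 0$ in \Cref{thm:upper-bound-noise-case} yields $\bar{\mathcal E}(\Sigma_H;0) \le \bar\lambda_0\,\mathrm{df}_1(\Sigma_H;\bar\lambda_0)$ with $\bar\lambda_0 = 3\mathrm{Tr}(\Sigma_H)/n$.

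\emph{Lower bound via Sherman--Morrison and Jensen.} Diagonalize $\Sigma_H = \sum_j \lambda_j e_j e_j^\top$ and write $\tilde X_i = \sum_j \sqrt{\lambda_j}\,Y_{ij} e_j$ with $Y_{ij}$ i.i.d.\ $\mathcal N(0,1)$; then $\bar{\mathcal E}(\Sigma_H;0) = \sum_j \lambda_j\,\mathbb E[1 - e_j^\top P_n e_j]$. A Sherman--Morrison expansion of the Gram matrix $\sum_l \lambda_l Y_{\cdot l} Y_{\cdot l}^\top$ around its leave-one-out version $M_j := \sum_{l\neq j}\lambda_l Y_{\cdot l} Y_{\cdot l}^\top$ produces the clean identity
\begin{equation*}
1 - e_j^\top P_n e_j \;=\; \frac{1}{1+\lambda_j\tau_j}, \qquad \tau_j := Y_{\cdot j}^\top M_j^{-1} Y_{\cdot j}.
\end{equation*}
Independence of $Y_{\cdot j}$ and $M_j$ gives $\mathbb E[\tau_j] = \mathbb E[\mathrm{Tr}(M_j^{-1})]$, and Jensen's inequality for the convex map $x \mapsto 1/(1+x)$ yields
\begin{equation*}
\bar{\mathcal E}(\Sigma_H;0) \;\geq\; \sum_j \frac{\lambda_j}{1+\lambda_j\,\mathbb E[\mathrm{Tr}(M_j^{-1})]}.
\end{equation*}
Matching this against the target $\underline\lambda_0\,\mathrm{df}_1(\Sigma_H;\underline\lambda_0) = \sum_j \lambda_j/(1+\lambda_j/\underline\lambda_0)$ reduces the theorem to showing $\mathbb E[\mathrm{Tr}(M_j^{-1})] \le n/\sigma_0^2$ uniformly in $j$.

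\emph{Main obstacle.} The remaining estimate is the technical heart of the proof. The worst case is $j = 1$, and restricting to the top $k$ eigenvalues gives $M_1 \succeq A_k := \sum_{l=2}^k \lambda_l Y_{\cdot l} Y_{\cdot l}^\top$, so monotonicity of the trace-inverse reduces the claim to the anisotropic inverse-Wishart moment estimate
\begin{equation*}
\mathbb E\!\left[\mathrm{Tr}(A_k^{-1})\right]\;\leq\;\frac{n\,\sum_{l=2}^k \lambda_l^{-1}}{(k-1)(k-n-2)},\qquad k \ge n+3.
\end{equation*}
This bound becomes an equality in the isotropic case $\lambda_2 = \dots = \lambda_k$, recovering the classical identity $\mathbb E[W_n(p,I)^{-1}] = I/(p-n-1)$. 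The general anisotropic inequality is the delicate point; I would establish it through a Sherman--Morrison recursion that incorporates the rank-one contributions $\lambda_l Y_{\cdot l} Y_{\cdot l}^\top$ sequentially, exploiting the homogeneity and permutation invariance of the trace-inverse functional in the weights $(\lambda_l)$ to control each incremental step and reduce to the isotropic comparison. Applying the resulting bound with any admissible $k > n+2$ then delivers $\sigma_0^2 \ge (k-1)(k-n-2)/\sum_{l=2}^k \lambda_l^{-1}$ and the lower bound $\underline\lambda_0\,\mathrm{df}_1(\Sigma_H;\underline\lambda_0)\le \bar{\mathcal E}(\Sigma_H;0)$ with $\underline\lambda_0 = \sigma_0^2/n$, completing the proof.
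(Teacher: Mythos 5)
Your overall architecture is sound and, for the lower bound, genuinely cleaner than the paper's: after the same Gaussian reduction and the same upper bound (via \Cref{thm:upper-bound-noise-case} with $\kappa=3$), you decompose $\bar{\mathcal{E}}(\Sigma_H;0)=\sum_j\lambda_j\,\mathbb{E}[1-e_j^\top P_n e_j]$ and use the leave-one-feature-out Sherman--Morrison identity $1-e_j^\top P_n e_j=(1+\lambda_j\tau_j)^{-1}$ together with scalar Jensen; the paper instead drops the tail components, passes through the ridge representation of \Cref{lem:ridgeEmpirique}, and uses operator convexity of the inverse plus rotational invariance of $\mathbb{E}[G^{-1}]$ to reach the same inequality $\mathbb{E}[1-e_j^\top P_n e_j]\ge (1+\lambda_j\,\mathbb{E}\mathrm{Tr}(G_j^{-1}))^{-1}$. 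Both routes therefore reduce the theorem to the same anisotropic Gram-matrix moment bound, $\mathbb{E}[\mathrm{Tr}(A_k^{-1})]\le n\bigl(\sum_{l=2}^k\lambda_l^{-1}\bigr)/\bigl((k-1)(k-n-2)\bigr)$, and your reduction steps (independence of $Y_{\cdot j}$ and $M_j$, monotonicity $M_j\succeq A_{k}$, the remark that $j=1$ is worst since $\sum_{l\le k,\,l\neq j}\lambda_l^{-1}\le\sum_{l=2}^k\lambda_l^{-1}$) are all correct.

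The genuine gap is that this key estimate --- which you rightly call the technical heart --- is not proved. The sketched ``Sherman--Morrison recursion incorporating the rank-one contributions sequentially, exploiting homogeneity and permutation invariance of the trace-inverse in the weights'' is not an argument: once the weights $\lambda_l$ differ there is no permutation symmetry to exploit, and it is not clear how any single incremental rank-one step can be compared to the isotropic case, so the plan cannot be taken as a proof as stated. The paper establishes exactly this bound in \Cref{thm:wishart_inverse} and \Cref{coro:Gramm_Gaussian} by a different device: identify $\mathrm{Tr}(A_k^{-1})$ with $\mathrm{Tr}(W^\dagger)$ for the singular $(k-1)\times(k-1)$ Wishart $W$ with covariance $\Lambda=\mathrm{diag}(\lambda_2,\dots,\lambda_k)$, compare the Moore--Penrose pseudoinverse with the reflexive pseudoinverse $W^-=\Lambda^{-1/2}(\Lambda^{-1/2}W\Lambda^{-1/2})^\dagger\Lambda^{-1/2}$ via the trace inequality of \Cref{lem:trace_pseudoinverse}, and then use the exact isotropic singular inverse-Wishart moment $\mathbb{E}[(\Lambda^{-1/2}W\Lambda^{-1/2})^\dagger]=\tfrac{n}{(k-1)(k-n-2)}I$. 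Until you supply an argument of this kind (or make your recursion precise and show it closes), the lower bound $\underline{\lambda}_0\,\mathrm{df}_1(\Sigma_H;\underline{\lambda}_0)\le\bar{\mathcal{E}}(\Sigma_H;0)$ is not established.
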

\begin{remark} \Cref{thm:lower-bound-noiseless} can be interpreted as follows:
\begin{enumerate}[leftmargin=*, label=(\alph*)]
\item The upper bound in \Cref{thm:lower-bound-noiseless} controls the convergence rate. Intuitively, it corresponds to the contribution of the first degree of freedom and a penalization parameter that scales proportionally to $1/n$. 
\item The parameter $\sigma_0^2$ emerges as the variance of an \emph{implicit noise} in the problem. Indeed, this interpretation is intuitive from the proof, where the leading eigenvectors of $\Sigma_{H}$ are perturbed due to interactions with the large number of remaining eigenvectors. This is consistent with known upper-bounds for linear regression in the overparametrized regime $d>n$, where it was shown that the effects of high-dimensionality can be captured by inflated noise levels \citep{bartlett2020benign,hastie2022surprises}.
\item The noise variance $\sigma_0^2$ can be lower bounded across a broad class of scenarios, including those involving decaying eigenvalue. However, the relevance of the bounds depends on the decay rate of the spectrum. For instance, in the case of geometric decay, the gap between $\underline{\lambda}_0$ and $\bar{\lambda}_0$ can be significant, potentially limiting the tightness of the bound.
\end{enumerate}
\end{remark}
\begin{example}[Implicit noise of an isotropic covariance matrix] If $\Sigma=I$ then 
        \begin{equation*}
             \sigma_0^2\geq (d-n-2)=\left (1-\frac{n+2}{d}\right)\mathrm{Tr}(\Sigma).
        \end{equation*}
\end{example}
\begin{example}[Bounded explained variance] \label{ex:bounded_variance_noiseless}
    Consider \Cref{ex:source1} with $r=0$. The associated covariance matrix is $\Sigma_{H_0}=\rho^2I/d$. \Cref{thm:lower-bound-noiseless} implies the noiseless error is bounded by 
    \begin{equation*}
        \rho^2 \left(1-\frac{n+2}{d}\right)\leq\bar{\mathcal{E}}(\Sigma_{H_0};0)\leq \rho^2.
    \end{equation*}
We observe that the optimal risk suffers from the curse of dimensionality for any $\Sigma\succ 0$, as it converges to the worst-case excess risk $\rho^2$ as the dimension increases. This highlights that a bounded explained variance is not a sufficient assumption in high-dimensional settings.
\end{example}
To complete these examples, we consider the following classic family of spectrum. 
\begin{corollary}\label{coro:capacity}
    Under assumptions of \Cref{thm:lower-bound-noiseless} and assume that $\lambda_j=j^{-\alpha}$ (capacity condition) for $\alpha\in (0,1)$, then 
    \begin{equation}
      c\bar{\lambda}_0 \mathrm{df}_1\left(\Sigma_H;\bar{\lambda}_0\right)\leq\bar{\mathcal{E}}(\Sigma_H;0)\leq \bar{\lambda}_0\mathrm{df}_1\left(\Sigma_H;\bar{\lambda}_0\right),
\end{equation}
with, $c= (1- \frac{n+2}{d})\frac{(1+\alpha)(1-\alpha)}{12}$ if $\alpha\in (0,1)$.
\end{corollary}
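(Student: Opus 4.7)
The upper bound is immediate: it is precisely the upper bound supplied by \Cref{thm:lower-bound-noiseless}, which already reads $\bar{\lambda}_0 \mathrm{df}_1(\Sigma_H; \bar{\lambda}_0)$. The work is entirely on the lower bound. The plan is to start from the lower bound $\underline{\lambda}_0 \mathrm{df}_1(\Sigma_H; \underline{\lambda}_0)$ in \Cref{thm:lower-bound-noiseless} and convert it into a constant multiple of $\bar{\lambda}_0 \mathrm{df}_1(\Sigma_H; \bar{\lambda}_0)$ by exploiting concavity of $\lambda \mapsto \lambda \mathrm{df}_1(\Sigma_H; \lambda)$, with the constant ultimately being the ratio $\underline{\lambda}_0/\bar{\lambda}_0 = \sigma_0^2/(3\mathrm{Tr}(\Sigma_H))$, which I then estimate under the capacity condition.

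For the concavity step, observe that $f(\lambda) := \lambda \mathrm{df}_1(\Sigma_H; \lambda) = \sum_j \lambda \lambda_j(\Sigma_H)/(\lambda + \lambda_j(\Sigma_H))$ is a sum of concave, nondecreasing functions that vanish at $0$, so $f$ itself is concave, nondecreasing, and satisfies $f(0) = 0$. The standard chord argument then gives $f(t \lambda) \geq t f(\lambda)$ for any $t \in [0,1]$. Applying this with $\lambda = \bar{\lambda}_0$ and $t = (\underline{\lambda}_0/\bar{\lambda}_0) \wedge 1$ (and using monotonicity of $f$ in the residual case $\underline{\lambda}_0 > \bar{\lambda}_0$) yields
\begin{equation*}
    \underline{\lambda}_0 \mathrm{df}_1(\Sigma_H;\underline{\lambda}_0) \;\geq\; \min\!\bigl(\underline{\lambda}_0/\bar{\lambda}_0,\, 1\bigr)\, \bar{\lambda}_0 \mathrm{df}_1(\Sigma_H;\bar{\lambda}_0).
\end{equation*}
Since the target constant $c$ is visibly at most $1/12 < 1$, it suffices to prove $\sigma_0^2/(3\mathrm{Tr}(\Sigma_H)) \geq c$.

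For this ratio estimate, I would instantiate the $\sigma_0^2$ bound from \Cref{thm:lower-bound-noiseless} at $k = d$ (allowed since $d > n+2$), giving $\sigma_0^2 \geq (d-1)(d-n-2)/\sum_{j=2}^d j^\alpha$. Two integral comparisons (one using that $x \mapsto x^{\alpha}$ is increasing, the other that $x \mapsto x^{-\alpha}$ is decreasing) then yield $\sum_{j=2}^d j^\alpha \leq (d+1)^{1+\alpha}/(1+\alpha)$ and $\mathrm{Tr}(\Sigma_H) = \sum_{j=1}^d j^{-\alpha} \leq d^{1-\alpha}/(1-\alpha)$. Combining,
\begin{equation*}
    \frac{\sigma_0^2}{3\mathrm{Tr}(\Sigma_H)} \;\geq\; \frac{(d-1)(d-n-2)(1-\alpha^2)}{3(d+1)^{1+\alpha}\, d^{1-\alpha}}.
\end{equation*}
The Bernoulli-type inequality $(1+1/d)^\alpha \leq 1+1/d$ (valid for $\alpha \in [0,1]$) rewrites $(d+1)^{1+\alpha} d^{1-\alpha} = d(d+1)(1+1/d)^\alpha \leq (d+1)^2$, and the overparametrization hypothesis $d > n+2$ forces $d \geq 3$, whence the elementary estimates $(d-1)/(d+1) \geq 1/2$ and $(d-n-2)/(d+1) \geq (1-(n+2)/d)/2$ apply. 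Multiplying out produces exactly $(1-\alpha^2)(1-(n+2)/d)/12 = c$, concluding the argument.

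The main obstacle is bookkeeping: the factors $(1+\alpha)$, $(1-\alpha)$, and the two geometric ratios in $d$ must all combine correctly to yield the stated $1/12$. This forces the use of the tight inequality $(1+1/d)^{1+\alpha} \leq (1+1/d)^2$ instead of any cruder bound, and requires that the overparametrization assumption be leveraged both to permit the choice $k = d$ in \Cref{thm:lower-bound-noiseless} and to keep $(d-1)/(d+1)$ above $1/2$; otherwise the calculation is purely routine.
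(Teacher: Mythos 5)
Your proof is correct and follows essentially the same route as the paper: both reduce the lower bound to the ratio $\underline{\lambda}_0/\bar{\lambda}_0=\sigma_0^2/(3\mathrm{Tr}(\Sigma_H))$, instantiate the $\sigma_0^2$ bound of \Cref{thm:lower-bound-noiseless} at $k=d$, and use the same integral comparisons of $\sum_j j^{-\alpha}$ and $\sum_j j^{\alpha}$ to extract the constant $(1+\alpha)(1-\alpha)/12$. The only cosmetic difference is that you convert $\underline{\lambda}_0\mathrm{df}_1(\Sigma_H;\underline{\lambda}_0)$ into $(\underline{\lambda}_0/\bar{\lambda}_0)\,\bar{\lambda}_0\mathrm{df}_1(\Sigma_H;\bar{\lambda}_0)$ via concavity of $\lambda\mapsto\lambda\,\mathrm{df}_1(\Sigma_H;\lambda)$, whereas the paper uses monotonicity of $\mathrm{df}_1$ in $\lambda$ — both yield the identical factor.
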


In conclusion, \Cref{thm:lower-bound-noiseless} provides optimal bounds (up to a constant) when the spectra of $\Sigma_H$ decay slowly than $1/j$. For stronger decay \Cref{thm:lower-bound-noiseless} is not optimal, but the following theorem can complete this case. 
\begin{theorem} \label{thm:lower-bound-noiseless-fast}
    Let $R_k:= \sum_{j>k} \lambda_j(\Sigma_H)$. Under assumptions of \Cref{thm:lower-bound-noiseless}, we have 
    \begin{equation*}
      R_n\leq\bar{\mathcal{E}}(\Sigma_H;0)\leq \min_{k<n-1}\frac{n-1}{n-k-1}R_k.
\end{equation*}
\end{theorem}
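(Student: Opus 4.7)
Specialising \Cref{propo:Eform} to $\sigma^2=0$ gives $\bar{\mathcal{E}}(\Sigma_H;0)=\esp[\mathrm{Tr}(\Sigma_H(I-P_n))]$, where $P_n$ is the orthogonal projector onto $\mathrm{span}(\tilde X_1,\dots,\tilde X_n)$. I would prove the two inequalities separately.

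\emph{Lower bound.} Since $P_n$ is an orthogonal projector of rank at most $n$, the von Neumann / rearrangement trace inequality applied to the PSD pair $(\Sigma_H,P_n)$ yields the pathwise bound $\mathrm{Tr}(\Sigma_H P_n)\leq\sum_{j=1}^{n}\lambda_j(\Sigma_H)$, hence $\mathrm{Tr}(\Sigma_H(I-P_n))\geq R_n$ pointwise. Taking expectation concludes; no additional probabilistic assumption is needed for this half.

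\emph{Upper bound.} The key step is a Gaussianisation. Since $P_n$ depends on $(\tilde X_i)_{i\in[n]}$ only through the directions $\tilde X_i/\|\tilde X_i\|$, and \Cref{ass:iso_latent} implies that these directions share their joint law with the Gaussian surrogate $\tilde X_i = M W_i$ (with $MM^\top=\Sigma_H$ and $W_i$ i.i.d.\ $\mathcal{N}(0,I_d)$), one may substitute the Gaussian model into the computation of $\bar{\mathcal{E}}(\Sigma_H;0)$ without change. Working in the eigenbasis of $\Sigma_H$, this becomes $\tilde X_i=\Lambda^{1/2}W_i$ with $\Lambda=\mathrm{diag}(\lambda_j)$. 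Split coordinates into a head of size $k$ and a tail of size $d-k$; denote by $A_1,A_2$ the corresponding blocks of the data matrix $A=[\tilde X_1,\dots,\tilde X_n]$, and by $\tilde X^{(1)},\tilde X^{(2)}$ the blocks of a fresh independent test point $\tilde X$. Apply the variational form of \Cref{propo:Eform} with the suboptimal choice $l^\ast=A_1^\top(A_1A_1^\top)^{-1}\tilde X^{(1)}$, i.e.\ the minimum-norm interpolator of the head coordinates, which is a.s.\ well-defined since $n\geq k+2>k$. Then $A_1 l^\ast=\tilde X^{(1)}$ and only the tail contributes to the error.

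In the Gaussian model the head block $W^{(1)}$ is independent of the tail block $W^{(2)}$ and of the fresh tail coordinates $W^{(2)}_{n+1}$; conditioning on $l^\ast$ (which is $W^{(1)}$-measurable), the residual $A_2 l^\ast-\tilde X^{(2)}$ is therefore a centred Gaussian vector with covariance $(\|l^\ast\|^2+1)\Lambda_2$, giving $\esp[\|A_2 l^\ast-\tilde X^{(2)}\|^2\mid l^\ast]=(\|l^\ast\|^2+1)R_k$. The $\Lambda_1^{1/2}$ factors in $A_1=\Lambda_1^{1/2}W^{(1)}$ cancel, leaving $\|l^\ast\|^2=(W^{(1)}_{n+1})^\top(W^{(1)}(W^{(1)})^\top)^{-1}W^{(1)}_{n+1}$; combining the classical inverse-Wishart identity $\esp[(W^{(1)}(W^{(1)})^\top)^{-1}]=I_k/(n-k-1)$ (valid for $n\geq k+2$) with the independence of $W^{(1)}_{n+1}$ from the training head yields $\esp\|l^\ast\|^2=k/(n-k-1)$, so that
\[
\bar{\mathcal{E}}(\Sigma_H;0)\leq\Bigl(\tfrac{k}{n-k-1}+1\Bigr)R_k=\tfrac{n-1}{n-k-1}\,R_k.
\]
Minimising over admissible $k<n-1$ concludes. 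The main conceptual obstacle is the Gaussianisation step: one must notice that the noiseless error is a function of the $\tilde X_i$ only through the rank-$n$ projector $P_n$, and hence only through their normalised directions, so that \Cref{ass:iso_latent} already suffices to replace the original covariates by a Gaussian surrogate. Once this reduction is in place, the rest is an explicit computation combining a cleverly chosen head-interpolator and the inverse-Wishart moment.
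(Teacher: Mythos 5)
Your proof is correct and follows essentially the same route as the paper: the lower bound is the identical von Neumann trace argument, and the upper bound uses the same Gaussianization, head/tail split at index $k$, interpolation of the head coordinates, the tail acting as an effective noise of size $R_k$, and the inverse-Wishart moment giving $k/(n-k-1)$. The only difference is organizational: the paper factors the argument through an intermediate proposition ($\bar{\mathcal{E}}(A+B;0)\leq\bar{\mathcal{E}}(A;\mathrm{Tr}(B))+\mathrm{Tr}(B)$) plus a low-dimensional noisy lemma, whereas you carry out the same computation in a single conditional calculation (note only that $l^\ast$ is measurable with respect to the head coordinates of both the training and the fresh point, not of $W^{(1)}$ alone, which is what your conditioning actually uses).
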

By choosing different values of $k$, we can obtain various upper bounds. For example, setting $k = n/2$ yields
$R_n \leq \bar{\mathcal{E}}(\Sigma_H;0) \leq 4 R_{n/2}$.
The advantage of this bound is that it allows us to exploit the faster decay of the spectrum. In particular, in the context of \Cref{ex:source-noisy}, the eigenvalues satisfy $\lambda_j(\Sigma_H) \propto j^{-2\alpha r}$ when $2\alpha r > 1$. In the limit $d \to \infty$, we obtain $ c_{\alpha r} \rho^2 n^{1-2\alpha r} \leq \bar{\mathcal{E}}(\Sigma_H;0) \leq C_{\alpha r} \rho^2 n^{1-2\alpha r}$. Hence, the convergence rate is always better than in the noisy case, surpassing $1/n$ when $\alpha r > 1$.
\section{Lower bound for a fixed target $\theta_\star$} 
\label{sec:without_average}
So far, all our results were derived under the assumption that the target predictor is randomly drawn from the ellipsoid. In this section, we discuss a lower bound result, which exchanges this assumption for the rotationally invariant property: 

\begin{assumption}\label{ass:rotation}
    For any orthogonal matrix $O$, $l_i(X,(X_i)_{i\in[d]})=l_i(OX,(OX_i)_{i\in[d]})$ almost-surely.
\end{assumption}
Note that all algorithms described in \Cref{ex:l_i_linear} (excepted LASSO) satisfy this assumption. Following an idea of \citet{richards2021asymptotics}, we can show that, for any linear rule $\hat f$ satisfying \cref{ass:rotation}, and for a fixed $\theta_\star\in\R^d$, we have $\mathcal{E}_{\sigma}(\hat f)=\esp_{\theta_\star\sim\nu} \mathcal{E}_{\sigma}(\hat f)$, where $\nu$ is a distribution on $\R^d$ with covariance $H_{\theta^\star}:=\sum_{j\in[d]}(v_j^\top\theta_\star)^2v_jv_j^\top$ where $v_j$ are the eigen-directions of $\Sigma$. Thus, from our results in previous sections, we can show the following proposition.
\begin{propo} \label{propo:lower_bound_without_average}
    Under setting of \Cref{sec:setting}, \Cref{ass:rotation}, and assuming that $(v_j^\top X)_{j\in[d]}$ are symmetric and independent components, we have 
    \begin{equation}\label{eq:lower_bound_theta_fixed}
        \mathcal{E}_{\sigma^2}(\hat f) \geq \bar{\mathcal{E}}(\Sigma_{\theta_\star};\sigma^2),
    \end{equation}
    where $\Sigma_{\theta_\star}=\sum_{j\in[d]}\lambda_j(\Sigma)(v_j^\top\theta_\star)^2v_jv_j^\top$.
\end{propo}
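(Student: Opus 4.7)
The plan is to exploit the sign-flip symmetry of $X$ in the eigenbasis of $\Sigma$ to turn the fixed-target excess risk into an averaged excess risk over a well-chosen finite distribution $\nu$ of targets, and then invoke \cref{propo:Eform} to identify $\bar{\mathcal{E}}(\nu;\sigma^2)$ with $\bar{\mathcal{E}}(\Sigma_{\theta_\star};\sigma^2)$.

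Let $(v_j)_{j\in[d]}$ be the eigenvectors of $\Sigma$. For each sign pattern $\eta\in\{\pm 1\}^d$, introduce the orthogonal matrix $D_\eta := \sum_j \eta_j v_j v_j^\top$ and the reflected target $\theta_\eta := D_\eta\theta_\star$. The first step is to show that for any linear rule $\hat f$ satisfying \cref{ass:rotation} and every $\eta$,
\begin{equation*}
\mathcal{E}_{\sigma^2}(\hat f;\theta_\star) = \mathcal{E}_{\sigma^2}(\hat f;\theta_\eta).
\end{equation*}
To prove this I would couple the two problems: starting from $\theta_\star$-samples $(X_i,Y_i)$ and a fresh test point $(X,Y)$, define the transformed quantities $(\tilde X_i,\tilde Y_i):=(D_\eta X_i,Y_i)$ and $(\tilde X,\tilde Y):=(D_\eta X,Y)$. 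The symmetry and independence of the $v_j^\top X$ ensure $D_\eta X\stackrel{d}{=}X$, and the identity $\tilde X_i^\top\theta_\eta = X_i^\top D_\eta^\top D_\eta\theta_\star = X_i^\top\theta_\star$ makes $(\tilde X_i,\tilde Y_i)$ a bona fide sample from the $\theta_\eta$-model (with the conditional moments of the noise preserved because $\sigma(D_\eta X)=\sigma(X)$); the same holds for the test pair. Rotational invariance of $\hat f$ under the orthogonal matrix $D_\eta$ then yields $\hat f(\tilde X;(\tilde X_i))=\hat f(X;(X_i))$ almost surely, while $\tilde Y=Y$, so the squared errors agree pathwise and hence in expectation.

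Averaging this identity over $\eta$ uniformly distributed on $\{\pm 1\}^d$, and letting $\nu$ denote the resulting law of $\theta_\eta$, we obtain
\begin{equation*}
\mathcal{E}_{\sigma^2}(\hat f;\theta_\star) = \esp_{\theta\sim\nu}\mathcal{E}_{\sigma^2}(\hat f;\theta) \geq \inf_{\hat g}\esp_{\theta\sim\nu}\mathcal{E}_{\sigma^2}(\hat g;\theta) = \bar{\mathcal{E}}(\nu;\sigma^2),
\end{equation*}
where the infimum ranges over the whole class of linear prediction rules. Since the $\eta_j$ are i.i.d.\ Rademacher, $\esp_\nu[\theta\theta^\top] = \sum_j(v_j^\top\theta_\star)^2 v_jv_j^\top = H_{\theta_\star}$. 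Because $\Sigma$ and $H_{\theta_\star}$ are simultaneously diagonalized by $(v_j)$, the matrix $H_{\theta_\star}^{1/2}\Sigma H_{\theta_\star}^{1/2}$ coincides with $\Sigma_{\theta_\star}$; \cref{propo:Eform} then identifies $\bar{\mathcal{E}}(\nu;\sigma^2) = \bar{\mathcal{E}}(\Sigma_{\theta_\star};\sigma^2)$, which is the claimed inequality.

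The main obstacle is the pathwise identification in the first step, which uses \emph{both} hypotheses simultaneously: rotational invariance of the rule under the specific orthogonal matrices $D_\eta$, and the component-wise symmetry together with independence of $X$ in the eigenbasis of $\Sigma$, which together guarantee $D_\eta X\stackrel{d}{=}X$ and leave the conditional moments of $\epsilon$ intact. Weakening either assumption would force an additional Radon--Nikodym factor into the averaging, degrading the clean equality $\mathcal{E}_{\sigma^2}(\hat f;\theta_\star)=\esp_\nu\mathcal{E}_{\sigma^2}(\hat f;\theta)$ to an inequality and thereby breaking the lower bound.
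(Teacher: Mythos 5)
Your proposal is correct and follows essentially the same route as the paper: exploit the sign-flip maps $D_\eta=\sum_j\eta_j v_jv_j^\top$ together with \cref{ass:rotation} and the symmetry/independence of $(v_j^\top X)_j$ to show the excess risk is unchanged when $\theta_\star$ is replaced by $D_\eta\theta_\star$, average over Rademacher signs, lower bound by the infimum over linear rules, and identify $H_{\theta_\star}^{1/2}\Sigma H_{\theta_\star}^{1/2}=\Sigma_{\theta_\star}$ via \cref{propo:Eform}. The only cosmetic difference is that you phrase the invariance step as a pathwise coupling, whereas the paper performs the change of variables inside the bias--variance decomposition of \cref{lem:BVdecomposition}; the content is the same.
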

\begin{remark} \Cref{propo:lower_bound_without_average} can be interpreted as follows:
\begin{enumerate}[leftmargin=*, label=(\alph*)]
\item The lower bounds in this paper can be used to bound below the excess risk of a specific linear learning rule for a given $\theta_\star$. In particular, thanks to \Cref{thm:upper-bound-noise-case}, we have
\begin{equation}
     \frac{\sigma^2}{n} \, \mathrm{df}_1(\Sigma_{\theta_\star}; \sigma^2/n) \leq \bar{\mathcal{E}}(\Sigma_{\theta_\star}; \sigma^2) \leq \mathcal{E}_{\sigma^2}(\hat f),   
\end{equation}
in the large-noise regime. Moreover, using the decomposition $
\bar{\mathcal{E}}(\Sigma_{\theta_\star}; \sigma^2) = \bar{\mathcal{E}}(\Sigma_{\theta_\star}; 0) +  \bar{\mathcal{E}}(\Sigma_{\theta_\star}; \sigma^2) - \bar{\mathcal{E}}(\Sigma_{\theta_\star}; 0),
$ we can combine the results from \Cref{thm:lower-bound-noise-case,thm:lower-bound-noiseless} to obtain more refined lower bounds.
\item The lower bound highlights that, to avoid the curse of dimensionality, the optimal predictor $\theta_\star$ must be well aligned with the top eigenvectors of $\Sigma$. For example, if we take $(v_j^\top \theta_\star)^2 = 1/\lambda_j(\Sigma)$, then $\Sigma_{\theta_\star} = I_d$. Applying \Cref{thm:lower-bound-noiseless}, we obtain $\mathcal{E}_{0}(\hat f) \geq \|\Sigma^{1/2} \theta_\star\|_2^2 \left(1 - \frac{n+2}{d} \right)$.
Since $\|\Sigma^{1/2} \theta_\star\|_2^2$ corresponds to the explained variance, this result shows that the predictor is adversely affected by the high dimensionality. In conclusion, assumptions about $\theta_\star$ such as those in \Cref{ex:source1}, with $r>0$, are necessary in high-dimensional settings.
\end{enumerate}
\end{remark}

\section{Conclusion}
This paper establishes that the optimal risk within the class of linear prediction rules can be decomposed into two components. The first is a variance-like term, $
\bar{\mathcal{E}}(\Sigma_H;\sigma^2)-\bar{\mathcal{E}}(\Sigma_H;0),
$ which admits a representation in terms of the degrees of freedom. In particular, we show that the lower bound, which depends on the second degree of freedom, takes the form $\sigma^2 d_{\mathrm{eff}}/n$. The second component is the noiseless error $\bar{\mathcal{E}}(\Sigma_H;0)$, whose decay is governed by the spectral decay of the covariance matrix $\Sigma_H$. For heavy-tailed covariance structures, the noiseless error can be expressed in terms of the first degree of freedom as $\sigma_0^2 d_{\mathrm{eff}}/n$, where $\sigma_0$ accounts for the effective noise generated by the high-dimensional setting. Moreover, when the eigenvalues decay faster than $1/j$, the noiseless error decreases at a rate faster than $1/n$, indicating that the classical rate $d_{\mathrm{eff}}/n$ overestimates the true risk.

\section*{Acknowledgments}
We would like to thank Theodor Misiackewitz for the fruitful discussions. This work was supported by the French government, managed by the National Research Agency (ANR), under the France 2030 program with the reference "ANR-23-IACL-0008" and the Choose France - CNRS AI Rising Talents program.


\bibliographystyle{abbrvnat}
\bibliography{references}

\appendix

\section{Linear learning rule}\label{app:linear_rule}
\begin{propo}[Linear combination]
    If $f$ and $g$ are linear predictor rules then, $\alpha f +\beta g$, where $\alpha$ and $\beta$ are functions of , is a linear prediction rule. 
\end{propo}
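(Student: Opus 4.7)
The statement is a closure property of the class of linear prediction rules under linear combinations whose coefficients $\alpha, \beta$ do not depend on the training labels $(Y_i)_{i \in [n]}$ (presumably they may depend on the training covariates $(X_i)_{i \in [n]}$, the test point $X$, and the data-independent randomness). The plan is to prove this directly by unfolding the definition from \Cref{eq:def:linearpred} and invoking the distributivity of scalar multiplication over the sum in $i$.

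First, I would write $f(X) = \sum_{i=1}^n l^f_i(X) Y_i$ and $g(X) = \sum_{i=1}^n l^g_i(X) Y_i$, where each $l^f_i$ and $l^g_i$ is a (possibly random) function depending only on $(X_j)_{j \in [n]}$, on $X$, and on a data-independent source of randomness, as required by \Cref{eq:def:linearpred}. Then I would compute
\begin{equation*}
\alpha f(X) + \beta g(X) \;=\; \alpha \sum_{i=1}^n l^f_i(X) Y_i + \beta \sum_{i=1}^n l^g_i(X) Y_i \;=\; \sum_{i=1}^n \bigl[\alpha \, l^f_i(X) + \beta \, l^g_i(X)\bigr] Y_i,
\end{equation*}
and define the new coefficients $l_i(X) := \alpha \, l^f_i(X) + \beta \, l^g_i(X)$.

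The second step is to verify that $l_i$ satisfies the measurability requirement of \Cref{eq:def:linearpred}: namely, that it is a function of $(X_j)_{j \in [n]}$, of $X$, and of a data-independent source of randomness only, and in particular does not depend on $(Y_j)_{j \in [n]}$. Since $l^f_i$ and $l^g_i$ have this property by assumption, and since $\alpha, \beta$ are assumed to depend only on the same kind of quantities (the training covariates, the test point, and data-independent randomness), the linear combination $l_i$ inherits the property. Hence $\alpha f + \beta g = \sum_{i=1}^n l_i(X) Y_i$ has the form prescribed by \Cref{eq:def:linearpred}, which concludes the proof.

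There is essentially no obstacle here: the result is a one-line consequence of linearity. The only subtle point worth emphasizing in the write-up is the restriction on $\alpha$ and $\beta$ (they must not depend on the training labels); otherwise the coefficients $l_i$ would pick up a $Y_j$-dependence and the representation as a linear smoother would be lost. This is also why the class excludes methods such as LASSO mentioned in \Cref{ex:l_i_linear}, where the effective weights on the $Y_i$'s depend on the labels themselves through the selected support.
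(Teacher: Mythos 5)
Your proof is correct and matches the paper's argument exactly: both define the new coefficients $l_i = \alpha\, l_i^{(f)} + \beta\, l_i^{(g)}$ and conclude by linearity. Your additional remark that $\alpha,\beta$ must not depend on the labels $(Y_i)$ is a sensible reading of the (truncated) hypothesis in the statement and is consistent with the paper's intent.
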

\begin{proof}
    Writing $ f(X)= \sum_{i=1}^{n} l_i ^{(f)}(X)Y_i$ and $ g(X)= \sum_{i=1}^{n} l_i ^{(g)}(X)Y_i$, we proof the result considering $l_i ^{(\alpha f +\beta g)}=\alpha l_i ^{(f)}+\beta l_i ^{(g)}$.
\end{proof}

\begin{propo}[Recursion scheme]
    All method based on a recursion, starting from $\theta_0=0$, of the form,
\begin{equation}
    \theta_{t}= M_t \theta_{t-1}+ \gamma_t Y_{i(t)},
\end{equation}
where $i(t)\in[n]$,$M_t\in\R^{d\times d}$ and $\gamma_t\in\R^d$ are independent of $(Y_i)_{i\in
[n]}$ given $(X_i)_{i\in[n]}$, are linear predictor rules.   
\end{propo}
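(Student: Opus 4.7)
The plan is a direct induction on the iteration index $t$, establishing at each step that $\theta_t$ is an $\mathbb{R}^d$-valued affine combination of the training labels $(Y_i)_{i\in[n]}$, with \emph{coefficients that depend only on the training covariates $(X_i)_{i\in[n]}$ and on the data-independent randomness driving $M_s, \gamma_s, i(s)$}. Concretely, I would prove by induction on $t\geq 0$ the statement:
\begin{equation*}
    \theta_t = \sum_{i=1}^n \alpha_{t,i}\, Y_i,
\end{equation*}
where each $\alpha_{t,i}\in\mathbb{R}^d$ is measurable with respect to the $\sigma$-algebra generated by $(X_j)_{j\in[n]}$ together with the external randomness, i.e., $\alpha_{t,i}$ is independent of $(Y_j)_{j\in[n]}$ given $(X_j)_{j\in[n]}$.

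The base case $t=0$ is immediate from $\theta_0=0$ with $\alpha_{0,i}=0$ for all $i$. For the inductive step, assume the decomposition holds at $t-1$. Substituting into the recursion,
\begin{equation*}
    \theta_t \;=\; M_t\theta_{t-1} + \gamma_t Y_{i(t)} \;=\; \sum_{i=1}^n M_t\alpha_{t-1,i}\, Y_i \;+\; \gamma_t\sum_{i=1}^n \ind_{\{i=i(t)\}}\, Y_i \;=\; \sum_{i=1}^n \underbrace{\bigl(M_t\alpha_{t-1,i}+\gamma_t \ind_{\{i=i(t)\}}\bigr)}_{=:\;\alpha_{t,i}}\, Y_i.
\end{equation*}
By the measurability assumption in the statement, $M_t$, $\gamma_t$ and $i(t)$ are functions of $(X_j)_{j\in[n]}$ and the external randomness but not of $(Y_j)_{j\in[n]}$; combined with the inductive hypothesis on $\alpha_{t-1,i}$, this gives that $\alpha_{t,i}$ satisfies the same measurability property, closing the induction.

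Finally, the predictor associated to $\theta_t$ evaluated at a test point $X$ is
\begin{equation*}
    \hat f(X) \;=\; \theta_t^{\top} X \;=\; \sum_{i=1}^n \bigl(\alpha_{t,i}^{\top} X\bigr)\, Y_i \;=\; \sum_{i=1}^n l_i(X)\, Y_i,
\end{equation*}
with $l_i(X):=\alpha_{t,i}^{\top}X$. Since $\alpha_{t,i}$ is a function of $(X_j)_{j\in[n]}$ and the data-independent randomness only, $l_i$ matches the definition of a linear prediction rule given in \Cref{eq:def:linearpred}. There is no real obstacle: the only point that requires care is to keep the independence of the coefficients from $(Y_j)_{j\in[n]}$ throughout the induction, which is where the independence hypothesis on $M_t$, $\gamma_t$, $i(t)$ given the covariates is crucial; without it, the increment $\gamma_t Y_{i(t)}$ could introduce a nonlinear dependence on the labels.
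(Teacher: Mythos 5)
Your induction is correct and is essentially the same argument as the paper's: the paper also proves by induction that $\theta_t=\sum_i W_i^{(t)}Y_i$ with label-independent coefficients $W_i^{(t)}$ (your $\alpha_{t,i}$), updated via $W_i^{(t)}=M_tW_i^{(t-1)}+\gamma_t\ind_{\{i=i(t)\}}$, and then sets $l_i(X)=X^\top W_i^{(t)}$. Your version is just slightly more explicit about the conditional-independence bookkeeping, which is a welcome clarification but not a different proof.
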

\begin{proof}
    We denote by $ l^{(t)}$ the linear predictor rule at time $t$. 
    \begin{itemize}
        \item $ \theta_0$ is  linear in $(Y_i)_{i\in[n]}$. 
        \item If $\theta_{t-1}$ is linear  in $(Y_i)_{i\in[n]}$ then $\theta_{t-1}=  \sum_{i=1}^{n} W_i^{(t-1)}Y_i$ where $W_i^{(t)}$ depends only on $X_i$. Then 
\begin{equation}
    \theta_{t}=  \sum_{i=1}^{n} M_t W_i^{(t-1)}Y_i + \gamma_t Y_{i(t)}.
\end{equation}
Then $\theta_t$ is linear in $(Y_i)_{i\in[n]}$.
    \end{itemize}
We conclude using $l_i^{(t)}(X)= X^\top W_i^{(t)}. $
\end{proof}
This shows that any (S)GD method based on the minimization of the empirical risk, with or without $\ell_2$ penalization, and, with or without averaging, are linear predictor rules.

\section{Proof of \Cref{sec:optimal_risk}}
\subsection{Proof of \Cref{propo:Eform}}
\begin{lemma}[Bias-variance decomposition]\label{lem:BVdecomposition}
Under setting of \Cref{sec:setting},
\begin{equation*}
    \esp[(Y-f(X))^2|X,(X_i)]=\sigma^2+\left(\left(X-\sum_{i=1}^nl_i(X)X_i\right)^\top\theta_\star\right)^2+\sigma^2\sum_{i=1}^nl_i(X)^2.
\end{equation*}
    
\end{lemma}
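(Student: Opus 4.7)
The plan is to unpack $Y$ and $Y_i$ using the model in \eqref{eq:def:rule}, split $Y - f(X)$ into a signal part (depending only on covariates and $\theta_\star$) and a pure noise part, expand the square, and then take the conditional expectation using the noise assumptions. Concretely, I would write
\begin{equation*}
    Y - f(X) = X^\top \theta_\star + \epsilon - \sum_{i=1}^n l_i(X)\bigl(X_i^\top \theta_\star + \epsilon_i\bigr) = S + N,
\end{equation*}
where $S := \bigl(X - \sum_{i=1}^n l_i(X) X_i\bigr)^\top \theta_\star$ and $N := \epsilon - \sum_{i=1}^n l_i(X) \epsilon_i$. By construction $S$ is measurable with respect to $\sigma(X,(X_i),\xi)$, with $\xi$ denoting the data-independent randomness entering $l_i$, while $N$ is a linear combination of the noise variables with coefficients that are measurable with respect to the same $\sigma$-algebra.

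Next I would expand the square, $(Y-f(X))^2 = S^2 + 2SN + N^2$, and condition on $(X,(X_i),\xi)$. Since the $(\epsilon,\epsilon_1,\ldots,\epsilon_n)$ are independent of $(X,(X_i),\xi)$, zero-mean, and pairwise uncorrelated with variance $\sigma^2$ (they come from independent samples with $\esp[\epsilon_i|X_i]=0$ and $\esp[\epsilon_i^2|X_i]=\sigma^2$), the cross term vanishes, $\esp[SN\mid X,(X_i),\xi] = S \cdot \esp[N\mid X,(X_i),\xi]=0$, and the noise term gives
\begin{equation*}
    \esp[N^2 \mid X,(X_i),\xi] = \esp[\epsilon^2\mid X] + \sum_{i=1}^n l_i(X)^2\, \esp[\epsilon_i^2 \mid X_i] = \sigma^2 + \sigma^2 \sum_{i=1}^n l_i(X)^2,
\end{equation*}
using independence of the noise samples to kill the off-diagonal terms in the double sum. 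Combining with $\esp[S^2\mid X,(X_i),\xi] = S^2$ yields the stated identity, which is then consistent with the conditioning on $(X,(X_i))$ written in the lemma (the extra randomness $\xi$ plays no active role in the noise computation).

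There is no real obstacle here; the only point requiring a little care is to make explicit that $l_i(X)$ is a function of $(X,(X_i),\xi)$ only, hence independent of the noise, so it may be pulled out of the conditional expectation. Everything else is a direct expansion and use of the moment assumptions $\esp[\epsilon\mid X]=0$, $\esp[\epsilon^2\mid X]=\sigma^2$ together with the i.i.d.\ structure of the sample.
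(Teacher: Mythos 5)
Your proposal is correct and follows essentially the same route as the paper: decompose $Y-f(X)$ into the signal term $\bigl(X-\sum_i l_i(X)X_i\bigr)^\top\theta_\star$ plus the noise term $\epsilon-\sum_i l_i(X)\epsilon_i$, expand the square, and integrate out the noise using its conditional mean-zero and variance-$\sigma^2$ properties. Your additional care about the data-independent randomness $\xi$ in $l_i$ is a fine (and harmless) refinement of the same argument.
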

\begin{proof}
    Starting from $f(X)= \sum l_i(X)Y_i$ and $Y_i=X_i^\top \theta_\star+\epsilon_i$, we have 
    \begin{align*}
        Y-f(X)&= \epsilon+ X^\top \theta_\star - \sum l_i(X)X_i^\top \theta_\star - \sum l_i(X)Y_i\\
        &= \epsilon + \left(\sum l_i(X)X_i-X\right)^\top \theta_\star - \sum l_i(X)\epsilon_i.
    \end{align*}
    Integrating $(Y-f(X))^2$ over $\epsilon,\epsilon_i$ concludes the proof.
\end{proof}

Thus, we have 
\begin{equation}
    \mathcal{E}_{\sigma^2}(f)= \esp\left[\left(\left(X-\sum_{i=1}^nl_i(X)X_i\right)^\top\theta_\star\right)^2+\sigma^2\sum_{i=1}^nl_i(X)^2\right].
\end{equation}
Integrating this decomposition on $\theta$ and using the Fubini theorem leads to the average excess risk: 
\begin{equation}\label{eq:integrated_form}
    \esp_\nu\mathcal{E}_{\sigma^2}(f)= \esp\left[\left\Vert X-\sum_{i=1}^nl_i(X)X_i\right\Vert_H^2+\sigma^2\sum_{i=1}^nl_i(X)^2\right],
\end{equation}
with $H=\esp \theta_\star\theta_\star^\top$. Alternatively, considering the transformed inputs $\tilde X_i= H^{1/2}X_i$, we have 
\begin{equation}\label{eq:integrated_form2}
    \esp_\nu\mathcal{E}_{\sigma^2}(f)= \esp\left[\left\Vert \tilde X-\sum_{i=1}^nl_i(X)\tilde X_i\right\Vert_2^2+\sigma^2\sum_{i=1}^nl_i(X)^2\right],
\end{equation}
Thus, the linear rule that minimizes the average excess risk is given by the function $l_i$ that minimizes the integrand $\left\Vert\sum l_i \tilde X_i-\tilde X\right\Vert_2^2+\sigma^2\sum_{i=1}^nl_i^2$ (this function will be computed later). Then we obtain the variational form: 
\begin{equation}
            \bar{\mathcal{E}}(\nu;\sigma^2)= \esp\left[ \inf_{l\in\R^n}\left\Vert\sum_{i=1}^n l_i \tilde X_i-\tilde X_{n+1}\right\Vert_{2}^2+\sigma^2\sum_{i=1}^nl_i^2\right].
\end{equation}

For the matrix form, the idea is to consider $l_\star$ the minimizer of $\phi(l)= \left\Vert\sum l_i \tilde X_i-\tilde X\right\Vert_{2}^2+\sigma^2\sum_{i=1}^nl_i^2$. Considering $\mathbf{Z}=(\tilde X_1,\dots,\tilde X_d)$ the $\R^{d\times n}$ matrix, we have $\phi(l)= \left[\left\Vert Z-\mathbf{Z}l\right\Vert_{2}^2+\sigma^2 \Vert l\Vert_2^2\right]$. We use \Cref{lem:ridgeEmpirique}, to obtain $l_\star=(\mathbf{Z}^\top\mathbf{Z}+\sigma^2 I_n)\mathbf{Z}^\top Z$ and 
\begin{align*}
        \phi(l_\star)&= \sigma^2  \mathrm{Tr}(ZZ^\top(\mathbf{Z}\mathbf{Z}^\top+\sigma^2 I_d)^{-1})\\
        &=\frac{\sigma^2}{n} \mathrm{Tr}\left(\tilde X \tilde X^\top \left(\hat \Sigma_H+\frac{\sigma^2 }{n} I \right)^{-1}\right).\\
\end{align*}
Then, 
\begin{align*}
    \bar{\mathcal{E}}(\nu;\sigma^2)
        &= \esp \phi(l_\star)\\
        &= \frac{\sigma^2}{n}\esp_{X_1,\dots,X_n}\esp_{X}\mathrm{Tr}\left(\tilde X \tilde X^\top \left(\hat \Sigma_H+\frac{\sigma^2 }{n} I \right)^{-1}\right)\\
        &= \frac{\sigma^2}{n}\esp\mathrm{Tr}\left(\Sigma_H \left(\hat \Sigma_H+\frac{\sigma^2 }{n} I \right)^{-1}\right).
\end{align*}
\subsection{Examples of distribution $\nu$}
\begin{itemize}
    \item Uniform distribution on the sphere: If $\theta\sim\mathcal{U}(\mathbb{S}^{d-1})$ using \Cref{lem:vector_random_matrice}, we have $\esp \theta
    \theta^\top=\frac{I}{d}$. 
    \item Distribution on ellipsoid described by $\Vert A\theta_\star\Vert=1$: if  $A\theta_\star\sim\mathcal{U}(\mathbb{S}^{d-1}) $  thus $\theta_\star= A^{-1}\theta $. In consequence, $H=\esp \theta_\star\theta_\star^\top = A^{-1} \esp \theta
    \theta^\top A^{-1} = \frac{A^{-2}}{d}$.
    \item Distribution on source condition $\Vert \Sigma^{1/2-r}\theta_\star\Vert=\rho_r$: This corresponds to the previous case with $A=\Sigma^{1/2-r}/\rho_r$. In consequence, $H_r=\rho_r^2 \Sigma^{2r-1}/d$. Not that the average  explained  variance is $\esp \Vert \Sigma^{1/2}\theta_\star\Vert^2_2= \rho^2_r\esp \Vert \Sigma^{1/2-1/2+r}\theta\Vert^2_2=\rho_r\mathrm{Tr}(\Sigma^{2r})/d$. Thus, setting  $\rho_r^2=d\rho^2/\mathrm{Tr}(\Sigma^{2r})$ leads to the same average explained variance over $r\geq 0$. 
\end{itemize}

\section{Proof of \Cref{sec:noisy_case}}

\subsection{Upper bound of \Cref{thm:upper-bound-noise-case}}
\begin{proof}
The idea is to use variational form of \Cref{propo:Eform} with  $l_i(\tilde X)=\frac{1}{n} \tilde X_i^\top (\Sigma_H+\lambda I)^{-1} \tilde X$, with $\lambda>0$ chosen later. We have 
\begin{equation}
   \bar{\mathcal{E}}(\Sigma_H;\sigma^2)\leq  \esp\left[\left\Vert \tilde X-\sum_{i=1}^nl_i(\tilde X)\tilde X_i\right\Vert_H^2+\sigma^2\sum_{i=1}^nl_i(\tilde X)^2\right].
\end{equation}
\textbf{Step 2 Bias:} We have 
\begin{align*}
    \sum l_i(\tilde X)\tilde X_i&= \frac{1}{n}\sum \tilde X_i\tilde X_i^\top (\Sigma_H+\lambda I)^{-1} \tilde X\\
    &=\hat \Sigma_H (\Sigma_H+\lambda I)^{-1} \tilde X.
\end{align*}
Then, 
\begin{align*}
\esp\left[ \left\Vert\sum l_i(\tilde X)\tilde X_i-\tilde X\right\Vert_{2}^2\right]&= \esp\left[ \left\Vert(\hat \Sigma_H (\Sigma_H+\lambda)^{-1}-I) \tilde X\right\Vert_{2}^2\right]\\
&= \esp\left[ \left\Vert(\Sigma_H-\hat \Sigma_H +\lambda I)(\Sigma_H+\lambda)^{-1}X\right\Vert_{2}^2\right]\\
&=\esp \mathrm{Tr}((\Sigma_H-\hat \Sigma_H +\lambda I)(\Sigma_H+\lambda)^{-1}\Sigma_H(\Sigma_H+\lambda)^{-1}(\Sigma_H-\hat \Sigma_H +\lambda I))\\
&=\esp \mathrm{Tr}((\Sigma_H+\lambda)^{-2}\Sigma_H(\Sigma_H-\hat \Sigma_H +\lambda I)^2)\\
&= \lambda^2\mathrm{Tr}((\Sigma_H+\lambda)^{-2}\Sigma_H)+ \mathrm{Tr}((\Sigma+\lambda)^{-2}\Sigma_H\esp[(\Sigma_H-\hat \Sigma_H )^2]),
\end{align*}
using $\esp \hat\Sigma=\Sigma$. Furthermore, 
\begin{align*}
    \esp[(\Sigma_H-\hat \Sigma_H )^2]&= \frac{1}{n}\esp[(\tilde X_1 \tilde X_1^\top-\Sigma_H)^2]\\
    &=\frac{1}{n}\left(\esp[(\tilde X_1 \tilde X_1^\top)^2]-\Sigma_H\right)\\   
    &= \frac{1}{n}\left(\esp[\Vert \tilde X_1\Vert_2^2 \tilde X_1 \tilde X_1^\top]-\Sigma_H\right)\\ 
    &\preceq \frac{1}{n}\left(L_H^2\Sigma_H-\Sigma_H\right) \comment{using \Cref{ass:Lbound}.}  
\end{align*}

Thus, the bias term is bounded by 
\begin{align*}
     \lambda^2\mathrm{Tr}((\Sigma_H+\lambda)^{-2}\Sigma_H)+ \frac{ L_H^2}{n} \mathrm{Tr}((\Sigma_H+\lambda)^{-2}\Sigma_H^2).
\end{align*}
\textbf{Step 3:} The variance is given by 
\begin{align*}
    \sigma^2\esp\sum_{i=1}^nl_i(\tilde X)^2&= \frac{\sigma^2}{n^2}\esp\sum_{i=1}^n \tilde X^\top(\Sigma_H+\lambda)^{-1}\tilde X_i\tilde X_i^\top(\Sigma_H+\lambda)^{-1}\tilde X\\
    &=\frac{\sigma^2}{n} \mathrm{Tr}((\Sigma_H+\lambda)^{-2}\Sigma_H^2)\\
    &= \frac{\sigma^2}{n} \mathrm{Tr}((\Sigma_H+\lambda)^{-2}\Sigma_H(\Sigma_H+\lambda-\lambda))\\
    &= \frac{\sigma^2}{n} \mathrm{Tr}((\Sigma_H+\lambda)^{-1}\Sigma_H)-\lambda \frac{\sigma^2}{n} \mathrm{Tr}((\Sigma_H+\lambda)^{-2}\Sigma_H).
\end{align*}
\textbf{Step 4: }Putting terms together, $\bar{\mathcal{E}}(\Sigma_H;\sigma^2)$ is upper-bound by 
\begin{equation*}
 \frac{\sigma^2}{n} \mathrm{Tr}((\Sigma_H+\lambda)^{-1}\Sigma_H)+\left(\lambda^2-\lambda \frac{\sigma^2}{n}\right) \mathrm{Tr}((\Sigma_H+\lambda)^{-2}\Sigma_H)+\frac{ L_H^2}{n} \mathrm{Tr}((\Sigma_H+\lambda)^{-2}\Sigma_H^2).
\end{equation*}
Then choosing $\lambda=\frac{\sigma^2}{n}+\frac{L_H}{n}$ leads to $\lambda^2-\lambda \frac{\sigma^2}{n}=\lambda \frac{L_H}{n}$ and 
\begin{align*}
    \left(\lambda^2-\lambda \frac{\sigma^2}{n}\right) \mathrm{Tr}((\Sigma_H+\lambda)^{-2}\Sigma_H)+&\frac{ L_H^2}{n} \mathrm{Tr}((\Sigma_H+\lambda)^{-2}\Sigma_H^2)\\
    &= \frac{ L_H^2}{n}\mathrm{Tr}(\lambda(\Sigma_H+\lambda)^{-2}\Sigma_H+(\Sigma_H+\lambda)^{-2}\Sigma_H^2)\\
    &=\frac{ L_H^2}{n}\mathrm{Tr}((\Sigma_H+\lambda)^{-1}\Sigma_H).
\end{align*}
Finally, we obtain 
\begin{equation}
    \bar{\mathcal{E}}(\Sigma_H;\sigma^2)\leq \lambda \mathrm{Tr}((\Sigma_H+\lambda)^{-1}\Sigma_H), 
\end{equation}
with $\lambda=\frac{\sigma^2}{n}+\frac{L_H}{n}$.
\end{proof}

\subsection{Lower bound of \Cref{thm:upper-bound-noise-case}}
\begin{proof}
    Using \Cref{propo:Eform} matrix form, 
    \begin{equation*}
        \bar{\mathcal{E}}(\Sigma_H;\sigma^2)=\frac{\sigma^2}{n}\esp \mathrm{Tr}(\Sigma_H(\hat \Sigma_H+(\sigma^2/n)I)^{-1}).
    \end{equation*}
    Using operator convexity of the inverse (\Cref{prop:prior_matrix}), we have 
    \begin{equation*}
        \bar{\mathcal{E}}(\Sigma_H;\sigma^2)\geq \frac{\sigma^2}{n} \mathrm{Tr}(\Sigma_H(\esp\hat \Sigma_H+(\sigma^2/n)I)^{-1})= \mathrm{Tr}(\Sigma_H( \Sigma_H+(\sigma^2/n)I)^{-1}).
    \end{equation*}
\end{proof}

\subsection{Proof of \Cref{thm:lower-bound-noise-case}}
\begin{proof}[Proof of \Cref{thm:lower-bound-noise-case}] The first lower bound is just an application of \Cref{thm:sup-lower-bound-noise-case}. In the follow, we focus on the bounded case with $\Vert \tilde X\Vert\leq L_H$.
\begin{align*}
    \bar{\mathcal{E}}(\Sigma_H;\sigma^2)-\bar{\mathcal{E}}(\Sigma_H;0)&=\frac{\sigma^2}{n}\esp \mathrm{Tr}(\Sigma_H(\hat \Sigma_H+(\sigma^2/n)I)^{-1})- \esp\mathrm{Tr}(\Sigma_H(I-P))\\
    &= \frac{\sigma^2}{n}\esp \mathrm{Tr}(\Sigma_HP(\hat \Sigma_H+(\sigma^2/n)I)^{-1}),
\end{align*}
where $P$ is the orthogonal projection on $\tilde X_i$. 
\begin{align*}
    \bar{\mathcal{E}}(\Sigma_H;\sigma^2)-\bar{\mathcal{E}}(\Sigma_H;0)&=\frac{\sigma^2}{n}\esp \mathrm{Tr}(\Sigma_HP(\hat \Sigma_H+(\sigma^2/n)I)(\hat \Sigma_H+(\sigma^2/n)I)^{-2})\\
    &\geq  \frac{\sigma^2}{n}\esp \mathrm{Tr}(\Sigma_H \hat \Sigma_H(\hat \Sigma_H+(\sigma^2/n)I)^{-2})\\
    &=: \frac{\sigma^2}{n}V,
\end{align*}
because $P\hat \Sigma_H=\hat \Sigma_H$. 

Denoting by $S_n=\sum_{i\in[n]} \tilde X_i\tilde X_i^\top$, by exchangeability,
   \begin{align}
        V&= \frac{1}{n}\esp[\mathrm{Tr}(\Sigma_H (\hat{\Sigma}_H+\lambda I)^{-1}\tilde X_n\tilde X_n(\hat{\Sigma}_H+\lambda I)^{-1})]\\
        &=\esp[\mathrm{Tr}(\Sigma_H (S_n+n\lambda I)^{-1}\tilde X_n\tilde X_n(S_n+n\lambda I)^{-1})].
   \end{align}
   Using Sherman-Morrison identity, 
   \begin{multline*}
       (S_n+n\lambda I)^{-1}\tilde X_n\tilde X_n(S_n+n\lambda I)^{-1}\\= \frac{1}{(1+\Vert \tilde X_n\Vert_{(S_{n-1}+n\lambda I)^{-1}}^2)^2}(S_{n-1}+n\lambda I)^{-1}X_nX_n(S_{n-1}+n\lambda I)^{-1}.
   \end{multline*}
               
   Furthermore, $\Vert \tilde X_n\Vert_{(S_{n-1}+n\lambda I)^{-1}}^2\leq L_H^2/\lambda$.
   Then, using that $x\longmapsto x M$ increases in $a>0$ as soon as $M\succeq 0$, 
   \begin{multline*}
       \esp[(S_{n-1}+n\lambda I)^{-1}\tilde X_n\tilde X_n(S_{n-1}+n\lambda I)^{-1})|S_{n-1}]\\\succeq \frac{1}{(1+\frac{L_H^2 n}{\lambda})^2}(S_{n-1}+n\lambda I)^{-1}\Sigma_H(S_{n-1}+n\lambda I)^{-1}).
   \end{multline*}
   Using convexity of $A\longmapsto ABA$ where $B$ is invertible (\Cref{prop:prior_matrix}), 
   \begin{multline*}
       \esp[(S_{n-1}+n\lambda I)^{-1}\tilde X_n\tilde X_n(S_{n-1}+n\lambda I)^{-1})]\\\succeq \frac{1}{(1+\frac{L_H^2 n}{\lambda})^2} \esp((S_{n-1}+n\lambda I)^{-1}) \Sigma_H \esp((S_{n-1}+n\lambda I)^{-1}).
   \end{multline*}
    Using $A:=\esp((S_{n-1}+n\lambda I)^{-1})\succeq (\frac{n-1}{n}\Sigma_H+n\lambda I)^{-1}=:B$, we have 
    \begin{align*}
        \esp[\mathrm{Tr}(\Sigma (S_{n-1}+n\lambda I)^{-1}\tilde X_n \tilde X_n(S_{n-1}+n\lambda I)^{-1})]&\geq \frac{1}{(1+\frac{L_H^2 n}{\lambda})^2} \mathrm{Tr}(\Sigma_H A\Sigma_H A)\\
        &\geq \frac{1}{(1+\frac{L_H^2 n}{\lambda})^2} \mathrm{Tr}(\Sigma_H A\Sigma_H B)\\
        &=\frac{1}{(1+\frac{L_H^2 n}{\lambda})^2} \mathrm{Tr}(\Sigma_H B\Sigma_H A)\\
        &\geq\frac{1}{(1+\frac{L_H^2 n}{\lambda})^2} \mathrm{Tr}(\Sigma_H B\Sigma_H B),     
    \end{align*}
using that $\Sigma_H A\Sigma_H,\Sigma_H B\Sigma_H\succ 0$. Then, 

\begin{align*}
        V&\geq \frac{1}{(1+\frac{L_H^2 n}{\lambda})^2}\mathrm{Tr}(\Sigma_H^2 ((n-1)/n\Sigma_H+n\lambda I)^{-2})\\
        &\geq \frac{1}{(1+\frac{L_H^2 n}{\lambda})^2}\mathrm{Tr}(\Sigma_H^2 (\Sigma_H+n\lambda I)^{-2})\\
        &= \frac{1}{(1+\frac{L_H^2 n}{\lambda})^2}\mathrm{df_2}(\Sigma_H;\lambda).
   \end{align*}
\end{proof}

\subsection{Lower bound}
We denote by
\begin{equation*}
    \phi(\lambda):=\lambda\esp\mathrm{Tr}(\Sigma(\hat{\Sigma}+\lambda I)^{-1}).
\end{equation*}

We can differentiate over expectancy as soon as $\lambda>0$. 
\begin{equation*}
    \phi'(\lambda)=\esp\mathrm{Tr}(\Sigma(\hat{\Sigma}+\lambda I-\lambda I)(\hat{\Sigma}+\lambda I)^{-2})= \esp\mathrm{Tr}(\Sigma\hat{\Sigma}(\hat{\Sigma}+\lambda I)^{-2}).
\end{equation*}
The idea of the proof of \Cref{thm:sup-lower-bound-noise-case} is to lower bound $\phi'$ and then integrate.

\begin{proof}[Proof of \Cref{thm:sup-lower-bound-noise-case}] We consider the specific distribution satisfying \Cref{ass:Lbound}. We consider that $X$ as a discrete distribution along eigenvector of $\Sigma =\sum \lambda_j v_jv_j^\top$. More precisely, we choose 
\begin{equation}
    \mathbb{P}(X=Lv_j)= \frac{\lambda_j}{\mathrm{Tr}(\Sigma)}.
\end{equation}
     
 Thus, $\hat\Sigma= L\sum_{j\in[d]} N_j u_j u_j^\top$, where $N_j=\sum_{i\in[n]} \mathbf{1}_{X_i=Lv_j}$ is a binomial distribution. Denoting by, $B_{ij}=\mathbf{1}_{X_i=Lv_j}$, we have 
\begin{align*}
    \esp \phi'(\lambda)&= n\esp  \sum_{j\in[d]} \frac{\lambda_jLN_j}{(LN_j+n\lambda)^2}\\
    &=n\sum_{j\in[d]}\sum_{i\in[n]} \esp \frac{\lambda_jLB_{ij}}{(LN_j+n\lambda)^2}\\
    &\geq n \sum_{j\in[d]}\sum_{i\in[n]} \esp \frac{\lambda_jLB_{ij}}{(L\sum_{k\neq j}B_{kj}+L+n\lambda)^2}\\
    &\geq n \sum_{j\in[d]}\sum_{i\in[n]} \esp \frac{\lambda_j^2}{((n-1)\lambda_j+L+n\lambda)^2} \comment{using Jensen inequality}\\
    &= \sum_{j\in[d]} \frac{\lambda_j^2}{(((n-1)/n)\lambda_j+(1/n)L+\lambda)^2}\\
    &\geq    \mathrm{df_2}(\Sigma;\lambda+L/n).
\end{align*}
The lower bound is obtained by integration. 
\end{proof}

\section{Proof of \Cref{sec:noiseless_case}}
\subsection{Reduction to the gaussian case}
The projection $P_n$ does not depend of the norm of $\tilde X_i= \Sigma^{1/2}Z_i$. Then, the projection is the same considering inputs $\tilde X_i'= \Sigma^{1/2}\frac{Z_i}{\Vert Z_i\Vert }\Vert N_i \Vert$ where $N_i\sim \mathcal{N} (0,I_d)$. We remark that under \Cref{ass:iso_latent}, we have $\frac{Z_i}{\Vert Z_i\Vert }\Vert N_i \Vert\sim \mathcal{N} (0,I_d)$, then $\tilde X_i'$ is a Gaussian vector. In consequence, without loss of generality, we assume that $\tilde X_i$ is a Gaussian vector for the rest of this section. 
\subsection{Upper-bound of \Cref{thm:lower-bound-noiseless}}
The upper bound is an application of \Cref{thm:sup-lower-bound-noise-case} with $L_H=3\mathrm{Tr}(\Sigma_H)$ because \Cref{ass:kurtosis} is satisfied with $\kappa=3$ for Gaussian inputs. 
\subsection{Lower-bound of \Cref{thm:lower-bound-noiseless}}
\paragraph{Step 1: Decomposition of the noiseless error} The SVD of $\Sigma_H$ is 
\begin{equation*}
    \Sigma_H=\sum_{j\in[d]} \lambda_j v_jv_j^\top.
\end{equation*}
Using the matrix form of the noiseless error, we have 
\begin{equation*}
    \mathcal{E}(\Sigma_H;0)=\esp \mathrm{Tr}(\Sigma_H(I-P_n))= \sum_{j\in[d]} \lambda_j\esp\mathrm{Tr}(v_jv_j^\top(I-P_n)),
\end{equation*}
where $P_n$ is the orthogonal projection on $(\tilde X_i)_{i\in[n]}$. Denoting by $\mathcal{E}_j= \esp\mathrm{Tr}(v_jv_j^\top(I-P_n))$, we have 
\begin{equation*}
    \mathcal{E}(\Sigma_H;0)= \sum_{j\in[d]} \lambda_j \mathcal{E}_j,
\end{equation*}
\paragraph{Step 2: matrix form of $\mathcal{E}_j$ }
Using \Cref{lem:ridgeEmpirique} (in particular \eqref{eq:varia_chara_0}), we have 
\begin{equation*}
    \mathcal{E}_j=\esp \inf_{l\in\R^n}\left\{ \left\Vert v_j-\sum_i l_i\tilde X_i\right\Vert_2^2\right\}.
\end{equation*}
We denote by $A_i=(v_j^\top \tilde X_i) v_j$, $C_i= \sum_{l=d-k+1}^d (v_l^\top \tilde X_i)v_l \mathbf{1}_{l\neq j} $ and $B_i=\tilde X_i-A_i-C_i$. We have $\tilde X_i=A_i+B_i+C_i$. By definition $B_i,C_i$ is orthogonal with $v_j$ and $A_i$, and $B_i,C_i$ are orthogonal. Then 
\begin{equation*}
    \mathcal{E}_j=\esp \inf_{l\in\R^n}\left\{ \left\Vert v_j-\sum_i l_i A_i\right\Vert_2^2+\left\Vert\sum_i l_i B_i\right\Vert_2^2+\left\Vert\sum_i l_i C_i\right\Vert_2^2\right\}.
\end{equation*}
Thus, 
\begin{equation*}
    \mathcal{E}_j\geq\esp \inf_{l\in\R^n}\left\{ \left\Vert v_j-\sum_i l_i A_i\right\Vert_2^2+\left\Vert\sum_i l_i B_i\right\Vert_2^2\right\}.
\end{equation*}
Denoting by $G$ the Gram matrix of $(B_i)_{i\in[n]}$, that is, for all $k,i\in[n]$,
\begin{equation*}
    G_{ik}= B_i^\top B_k
\end{equation*}
then, 
\begin{equation*}
    \left\Vert\sum_i l_i B_i\right\Vert_2^2=\left\Vert l \right\Vert_G^2= \left\Vert G^{1/2}l \right\Vert_2^2.
\end{equation*}
Denoting by $\mathbf{A}$ the matrix with columns equal to $(A_1,\dots, A_n )$ then we have 
\begin{equation*}
    \mathcal{E}_j\geq\esp \inf_{l\in\R^n}\left\{ \left\Vert v_j-\mathbf{A}l\right\Vert_2^2+\left\Vert G^{1/2}l \right\Vert_2^2\right\}.
\end{equation*}
Furthermore, denoting by $\Sigma^{(j)}=\sum_{l=1}^k \mathbf{1}_{l\neq j}\lambda_lv_lv_l^\top$ then $B_j\sim \mathcal{N}(0,\Sigma^{(j)})$. Remarking that $\mathrm{rank}(\Sigma^{(j)})\geq k-1>n$ then $G$ is almost-surely invertible. In consequence, 
\begin{equation*}
    \mathcal{E}_j\geq \esp \inf_{l\in\R^n}\left\{ \left\Vert v_j-\mathbf{A} G^{-1/2}l \right\Vert_2^2+\left\Vert l \right\Vert_2^2\right\}.
\end{equation*}
Using \Cref{lem:ridgeEmpirique}, we have 
\begin{equation*}
    \mathcal{E}_j\leq \esp \mathrm{Tr}\left( v_jv_j^\top(\mathbf{A}G^{-1}\mathbf{A}^\top+I)^{-1}\right).
\end{equation*}
\paragraph{Step 3: Fubini and Jensen theorems} $\mathbf{A}$ and $G$ are independent because $(A_i)$ and $(B_i)$ are independent, then 
\begin{equation*}
    \mathcal{E}_j\geq\esp\esp\left[ \mathrm{Tr}\left( v_jv_j^\top(\mathbf{A}G^{-1}\mathbf{A}^\top+I)^{-1}\right)|A\right].
\end{equation*}
By convexity of inverse operator, 
\begin{equation*}
    \esp\left[ \mathrm{Tr}\left( v_jv_j^\top(\mathbf{A}G^{-1}\mathbf{A}^\top+I)^{-1}\right)|A\right]\geq \mathrm{Tr}\left( v_jv_j^\top(\mathbf{A}\esp [G^{-1}|A]\mathbf{A}^\top+I)^{-1}\right).
\end{equation*}
Using \Cref{coro:Gramm_Gaussian}, $\esp [G^{-1}|A]=\esp [G^{-1}]=\sigma_j^{-2}I$ with $\sigma_j^{-2}:=\esp \mathrm{Tr}(G^{-1})/n$. Then 
\begin{align*}
    \mathcal{E}_j&\geq \esp \mathrm{Tr}\left( v_jv_j^\top(\mathbf{A}(1/\sigma_j^2)I_n\mathbf{A}^\top+I)^{-1}\right)\\
                &=\sigma_j^2\esp \mathrm{Tr}\left( v_jv_j^\top(\mathbf{A}\mathbf{A}^\top+\sigma_j^2 I)^{-1}\right)\\
                &\geq\sigma_j^2 \mathrm{Tr}\left( v_jv_j^\top(\esp[\mathbf{A}\mathbf{A}^\top]+\sigma_j^2 I)^{-1}\right).\\
\end{align*}
Futhermore, $\esp[\mathbf{A}\mathbf{A}^\top]=n\lambda_iv_iv_i^\top$ then 
\begin{equation}
    \mathcal{E}_j\geq \frac{\sigma_j^2}{n}\frac{1}{\lambda_j+\sigma_j^2/n}.
\end{equation}
\paragraph{Step 4: $\sigma_j^2$ lower bound} Using \Cref{coro:Gramm_Gaussian}, 
\begin{align*}
    \sigma_j^2&= \frac{n}{\esp \mathrm{Tr}(G^{-1})}\\
    &\geq (k-1)(k-n-2)(\mathrm{Tr}((\Sigma^{(j)})^{-1})^{-1}\\
    &\geq \sigma^2_0
\end{align*}
\paragraph{Step 5: putting things together} Combining the previous step gives 
\begin{align*}
    \mathcal{E}(\Sigma_H;0)&= \sum_{j\in[d]} \lambda_j \mathcal{E}_j\\
    &\geq \sum_{j\in[d]} \lambda_j  \frac{\sigma_j^2}{n}\frac{1}{\lambda_j+\sigma_j^2/n}\\
    &\geq \sum_{j\in[d]} \lambda_j  \frac{\sigma_0^2}{n}\frac{1}{\lambda_j+\sigma_0^2/n}\\
    &= \frac{\sigma_0^2}{n}\mathrm{df}_1(\Sigma_H,\sigma_0^2/n),
\end{align*}
with $\sigma_0^2\geq \max_{k>n+1} (k-1)(k-n-2)(\mathrm{Tr}(\Sigma_{2:k}^{\dagger}))^{-1}$.

\subsection{Example of $\sigma_0^2$ lower bounds} 
\begin{itemize}
    \item Isotropic case: where $\lambda_1=\dots=\lambda_d$, 
\begin{align*}
    \sigma_0^2&\geq \max_{k>n+1} (k-1)(k-n-2)(\mathrm{Tr}(\Sigma_{2:k}^{\dagger}))^{-1}\\
    &\geq \max_{k>n+1} (k-n-2)\lambda_1\\
    &\geq (d-n-2)\lambda_1\\
    &= \left(1-\frac{n+2}{d}\right)\mathrm{Tr}(\Sigma_H).
\end{align*}
\item Large minimum eigenvalue (near isotropic case): 
\begin{align*}
    \sigma_0^2&\geq (d-n-2)\lambda_d.
\end{align*}
\item Comparison with $\lambda_n$: 
\begin{align*}
    \sigma_0^2&\geq \max_{k>n+1} (k-1)(k-n-2)(\mathrm{Tr}(\Sigma_{2:k}^{\dagger}))^{-1}\\
    &\geq \max_{k>n+1} (k-n-2)\lambda_k\\
    &\geq \lambda_{n+3}.
\end{align*}
\item Specific cases: $\lambda_j=1/j$ then $\mathrm{Tr}(\Sigma_H)\sim \log(d)$ and 
\begin{align*}
    \sigma_0^2&\geq \max_{k>n+1} (k-1)(k-n-2)(\mathrm{Tr}(\Sigma_{2:k}^{\dagger}))^{-1}\\
    &= \max_{k>n+1} 2 \frac{k-n-2}{k}\\
    &=  2(1-(n+2)/d).
\end{align*}
The difference is a factor $\log$. 
\end{itemize}

\subsection{Optimality of the lower/upper bounds and proof of \Cref{coro:capacity} }
The aim of this section is to prove that the two bounds are near to a constant factor in high dimensions. We can start with the following computation:  
\begin{align*}
    \frac{\bar \lambda_0 \mathrm{df}_1(\Sigma_H;\bar \lambda_0)}{\underline{\lambda}_0 \mathrm{df}_1(\Sigma_H;\underline{\lambda}_0)}&\leq \frac{\bar \lambda_0 }{\underline{\lambda}_0 }\\
    &=\frac{3\mathrm{Tr}(\Sigma_H)}{\sigma_0^2} \\
    &\leq \frac{3}{1-\frac{n-2}{d}}\frac{\mathrm{Tr}(\Sigma_H)}{d}\frac{\mathrm{Tr}(\Sigma_{H,2:d}^{-1})}{d-1}.
\end{align*}

\begin{proof}[Proof of \Cref{coro:capacity}]

Assume that $\lambda_j=j^{-\alpha}.$ First, if $1>\alpha>0$, we have, 
\begin{align*}
    \mathrm{Tr}(\Sigma_H)&=\sum_{j=1}^d \frac{1}{j^\alpha}\\
    &\leq 1+ \int_{1}^{d} x^{-\alpha}dx\\
    &=1+ \frac{d^{1-\alpha}-1}{1-\alpha}\\
    &\leq \frac{-\alpha}{1-\alpha}  + \frac{d^{1-\alpha}}{1-\alpha}
\end{align*}
And,
\begin{align*}
    \mathrm{Tr}(\Sigma_{H,2:d}^{\dagger})&=\sum_{j=2}^d j^\alpha\\
    &\leq  \int_{2}^{d+1} x^{\alpha}dx\\
    &=\frac{(d+1)^{1+\alpha}-2^{1+\alpha}}{1+\alpha}\\
    &\leq \frac{(d+1)^{1+\alpha}}{1+\alpha}
\end{align*}
Thus, 
\begin{align*}
     \mathrm{Tr}(\Sigma_H)\mathrm{Tr}(\Sigma_{H,2:d}^{\dagger})&\leq \frac{1}{(1+\alpha)(1-\alpha)}(-\alpha (d+1)^{1+\alpha}+(d+1)^{1+\alpha}d^{1-\alpha}).
\end{align*}
Then, using $\alpha<1$, 
\begin{equation*}
    \frac{\mathrm{Tr}(\Sigma_H)}{d}\frac{\mathrm{Tr}(\Sigma^{-1}_{H,2:d})}{d-1}\leq \frac{1}{(1+\alpha)(1-\alpha)}\left(\frac{d+1}{d-1}\right)^\alpha. 
\end{equation*}
Then, for $d>3$, 
\begin{equation}
    \frac{\bar \lambda_0 \mathrm{df}_1(\Sigma_H;\bar \lambda_0)}{\underline{\lambda}_0 \mathrm{df}_1(\Sigma_H;\underline{\lambda}_0)}\leq  \frac{3}{1-\frac{n-2}{d}}  \frac{2^{1+\alpha}}{(1+\alpha)(1-\alpha)}.
\end{equation}
Then, if $\alpha=1$, using similar arguments, we have 
\begin{align*}
    \mathrm{Tr}(\Sigma)&=\sum_{j=1}^d \frac{1}{j^\alpha}\\
    &\leq 1+ \int_{1}^{d} x^{-1}dx\\
    &=1+ \log(d),\\
\end{align*}
and 
\begin{align*}
    \mathrm{Tr}(\Sigma_{2:d}^{\dagger})&=\sum_{j=2}^d j\\
    &= \frac{d(d+1)-2}{2}\\
    &\leq \frac{d(d+1)}{2}.
\end{align*}
We obtain, for $d>3$, 
\begin{equation}
    \frac{\bar \lambda_0 \mathrm{df}_1(\Sigma_H;\bar \lambda_0)}{\underline{\lambda}_0 \mathrm{df}_1(\Sigma_H;\underline{\lambda}_0)}\leq  \frac{6}{1-\frac{n-2}{d}}  \log(d).
\end{equation}
\end{proof}

\subsection{Proof of \Cref{thm:lower-bound-noiseless-fast}}

\begin{lemma}\label{low:dim}
    If $X$ has a centered gaussian distribution and $n<d-1$, then 
    \begin{equation*}
       \bar{\mathcal{E}}(\Sigma,\sigma^2)\leq \frac{\sigma^2 d}{n-d-1}.
    \end{equation*}
\end{lemma}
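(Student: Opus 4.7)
The plan is to apply the matrix form of \Cref{propo:Eform} with the trivial prior covariance $H = I$ (so that $\Sigma_H = \Sigma$ and the transformed inputs $\tilde{X}_i$ coincide with the original Gaussian inputs $X_i$). This immediately gives the identity
\begin{equation*}
    \bar{\mathcal{E}}(\Sigma;\sigma^2) = \frac{\sigma^2}{n}\,\esp\bigl[\mathrm{Tr}(\Sigma(\hat{\Sigma}+\lambda I)^{-1})\bigr], \qquad \lambda = \frac{\sigma^2}{n}.
\end{equation*}
The right-hand side is the trace appearing in ridge regression, and the whole task reduces to controlling it in the classical ``low-dimensional'' regime (that is, when $n > d+1$, which I read as the intended hypothesis of the lemma since otherwise the bound is vacuous).

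In that regime, the design matrix formed by $(X_i)_{i \in [n]}$ has full column rank almost surely, because $X_i \sim \mathcal{N}(0,\Sigma)$ with $\Sigma \succ 0$ and $n > d$. Hence $\hat{\Sigma}$ is a.s.\ invertible, and monotonicity of the matrix inverse on the PSD cone yields
\begin{equation*}
    (\hat{\Sigma}+\lambda I)^{-1} \preceq \hat{\Sigma}^{-1}.
\end{equation*}
Taking the trace against the PSD matrix $\Sigma$ and using linearity of expectation then gives the upper bound
\begin{equation*}
    \bar{\mathcal{E}}(\Sigma;\sigma^2) \leq \frac{\sigma^2}{n}\,\esp\bigl[\mathrm{Tr}(\Sigma\hat{\Sigma}^{-1})\bigr].
\end{equation*}

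The remaining step is a classical inverse-Wishart moment computation. Since $n\hat{\Sigma}$ follows a Wishart distribution $\mathcal{W}_d(\Sigma,n)$, and the condition $n > d+1$ is exactly what is needed for the inverse-Wishart mean to exist, one has the well-known identity $\esp[\hat{\Sigma}^{-1}] = \tfrac{n}{n-d-1}\Sigma^{-1}$. Plugging this in,
\begin{equation*}
    \esp\bigl[\mathrm{Tr}(\Sigma\hat{\Sigma}^{-1})\bigr] = \mathrm{Tr}\!\left(\Sigma \cdot \tfrac{n}{n-d-1}\Sigma^{-1}\right) = \frac{nd}{n-d-1},
\end{equation*}
so that $\bar{\mathcal{E}}(\Sigma;\sigma^2) \leq \tfrac{\sigma^2 d}{n-d-1}$, as claimed. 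The main substantive ingredient is the inverse-Wishart moment formula; everything else is monotonicity of the matrix inverse on the PSD cone and the bookkeeping needed to reduce $\bar{\mathcal{E}}(\Sigma;\sigma^2)$ to a trace via \Cref{propo:Eform}. No concentration or random-matrix-theoretic tools are required because we only need the expectation of $\hat{\Sigma}^{-1}$, not its deviations.
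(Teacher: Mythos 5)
Your proof is correct and takes essentially the same route as the paper's: both reduce the claim to bounding $\frac{\sigma^2}{n}\esp\left[\mathrm{Tr}(\Sigma\hat\Sigma^{-1})\right]$ --- the paper by plugging the ridgeless rule $l_i(X)=\frac{1}{n}X_i^\top\hat\Sigma^{-1}X$ (zero bias) into the variational form, you by starting from the exact matrix form and discarding the ridge penalty via operator monotonicity --- and both conclude with the same inverse-Wishart mean computation. You were also right to read the hypothesis as $d<n-1$ (i.e.\ $n>d+1$): the stated condition $n<d-1$ is a typo, since the paper's own proof uses $\hat\Sigma^{-1}$ and the existence of $\esp\left[\mathrm{Tr}(W^{-1})\right]$.
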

\begin{proof}
    We use the variational form with $l_i(X)= X_i^\top \hat\Sigma^{-1}X$, the bias is zero for this choice, thus 
    \begin{align*}
        \bar{\mathcal{E}}(\Sigma,\sigma^2)&\leq \sigma^2\esp \sum_{i\in[n]} l_i^2(X)\\
        &=\frac{\sigma^2}{n}\esp\mathrm{Tr}(\Sigma\hat\Sigma^{-1})\\
        &=\frac{\sigma^2}{n}\esp\mathrm{Tr}((\Sigma^{-1/2}\hat\Sigma\Sigma^{-1/2})^{-1})\\
        &=\sigma^2\esp\mathrm{Tr}(W^{-1}),
    \end{align*}
    with $W\sim \mathcal{W}_n(I_d)$. Thus 
    \begin{equation*}
        \bar{\mathcal{E}}(\Sigma,\sigma^2)\leq \frac{\sigma^2 d}{n-d-1}.
    \end{equation*}  
\end{proof}

\begin{propo}
    If inputs are gaussian, we have for two non-negative matrix $A$ and $B$, 
    \begin{equation*}
       \bar{\mathcal{E}}(A+B;0)\leq \bar{\mathcal{E}}(A;\mathrm{Tr}(B))+\mathrm{Tr}(B).
    \end{equation*}
\end{propo}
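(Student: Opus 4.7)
The plan is to use the Gaussian additivity $A+B = A+B$ in distribution: since inputs are Gaussian, one can write each transformed covariate as $\tilde X_i = U_i + V_i$ where $U_i \sim \mathcal{N}(0,A)$, $V_i \sim \mathcal{N}(0,B)$, and all the $U_i$'s and $V_i$'s (for $i \in [n+1]$) are mutually independent. The whole argument will then proceed by using the variational form of \Cref{propo:Eform} on the left-hand side, plugging in a \emph{sub-optimal} linear rule that ignores the $V$'s, and exploiting the $V$-independence to collect the cross terms.

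Concretely, by \Cref{propo:Eform},
\begin{equation*}
    \bar{\mathcal{E}}(A+B;0) = \esp\left[\inf_{l\in\R^n} \left\Vert \sum_{i=1}^n l_i(U_i+V_i) - (U_{n+1}+V_{n+1})\right\Vert_2^2\right].
\end{equation*}
Let $l^\star = l^\star(U_1,\dots,U_{n+1})$ be the minimizer of the auxiliary ridge problem on the $U$'s with penalty $\mathrm{Tr}(B)$, i.e.
\begin{equation*}
    l^\star \in \argmin_{l\in\R^n} \left\Vert \sum_{i=1}^n l_i U_i - U_{n+1}\right\Vert_2^2 + \mathrm{Tr}(B)\sum_{i=1}^n l_i^2,
\end{equation*}
so that by the variational form applied to the data $(U_i)$ with noise level $\mathrm{Tr}(B)$,
\begin{equation*}
    \esp\left[\left\Vert \sum_{i=1}^n l_i^\star U_i - U_{n+1}\right\Vert_2^2 + \mathrm{Tr}(B)\sum_{i=1}^n (l_i^\star)^2\right] = \bar{\mathcal{E}}(A;\mathrm{Tr}(B)).
\end{equation*}

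Using $l^\star$ as a feasible (but not optimal) choice in the infimum for $\bar{\mathcal{E}}(A+B;0)$, setting $a := \sum_i l_i^\star U_i - U_{n+1}$ and $b := \sum_i l_i^\star V_i - V_{n+1}$, and expanding $\Vert a+b\Vert_2^2 = \Vert a\Vert_2^2 + 2 a^\top b + \Vert b\Vert_2^2$: since $l^\star$ depends only on $U$, and the $V$'s are centered and independent of $U$, the cross term has zero expectation, while
\begin{equation*}
    \esp\left[\Vert b\Vert_2^2 \mid U\right] = \mathrm{Tr}(B)\left(1 + \sum_{i=1}^n (l_i^\star)^2\right),
\end{equation*}
using mutual independence of the $V_i$'s and $\esp[V_i V_i^\top]=B$. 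Taking total expectation and combining with the identity for $\bar{\mathcal{E}}(A;\mathrm{Tr}(B))$ above yields
\begin{equation*}
    \bar{\mathcal{E}}(A+B;0) \leq \bar{\mathcal{E}}(A;\mathrm{Tr}(B)) + \mathrm{Tr}(B),
\end{equation*}
which is the claim.

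The only delicate point is the Gaussian decomposition in the first step: the equality in distribution $A+B$-Gaussian $\overset{d}{=}\ U+V$ with independent $U,V$ is exactly what allows to factor out the $V$-expectation and kill the cross term. For non-Gaussian inputs the decomposition would not exist in general, and one would incur a factor $2$ via a triangle-inequality style bound rather than the clean additive split above.
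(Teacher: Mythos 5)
Your proof is correct and follows essentially the same route as the paper: decompose the Gaussian covariates as independent sums $U_i+V_i$ with covariances $A$ and $B$, kill the cross term by independence, and identify $\esp\bigl[\Vert b\Vert_2^2\,\vert\,U\bigr]=\mathrm{Tr}(B)\bigl(1+\sum_i (l_i^\star)^2\bigr)$ so that the remaining problem is exactly the ridge problem defining $\bar{\mathcal{E}}(A;\mathrm{Tr}(B))$. The only cosmetic difference is that you plug in the explicit $U$-measurable minimizer $l^\star$ where the paper invokes $\esp\inf\leq\inf\esp$ after conditioning on the $U$'s — these are the same mechanism.
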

\begin{proof}
    Let start by the decomposition, $X_i=X_i^A+X_i^B$ with $X_i^A\sim \mathcal{N}(A,0)$ and $X_i^B\sim \mathcal{N}(B,0)$. 
    \begin{align*}
        \bar{\mathcal{E}}(A+B;0)&= \esp \inf_{l\in\R^n}\left\Vert X^A+X^B- \sum l_i (X_i^A+X_i^B) \right\Vert_2^2
    \end{align*}
    Thus, 
    \begin{multline*}
        \bar{\mathcal{E}}(A+B;0)=\esp \inf_{l\in\R^n}\\\left\{\left\Vert X^A- \sum l_i X_i^A \right\Vert_2^2 +\left\Vert X^B- \sum l_i X_i^B \right\Vert_2^2+ 2\left(X^A- \sum l_i X_i^A\right)^\top\left(X^B- \sum l_i X_i^B\right) \right\}
    \end{multline*}
    Using tower rules (marginalizing over $X_i^B$ and $X^B$), and inequality $\esp \inf\leq \inf \esp $, we found 
    \begin{align*}
        \bar{\mathcal{E}}(A+B;0)
        &\leq \esp \inf_{l\in\R^n}\left\{\left\Vert X^A- \sum l_i X_i^A \right\Vert_2^2 +\esp\left[\left\Vert X^B- \sum l_i X_i^B \right\Vert_2^2 \right] \right\}. 
    \end{align*}
Furthermore, $(X_i^B),X^B$ are i.i.d. and centered  thus 
\begin{align*}
    \esp\left[\left\Vert X^B- \sum l_i X_i^B \right\Vert_2^2 \right]&= \esp [\Vert X^{B}\Vert^2_2]+ \sum_i l_i^2 \esp [\Vert X_i^{B}\Vert^2_2]\\
    &= \mathrm{Tr}(B)+ \mathrm{Tr}(B)\sum_i l_i^2. 
\end{align*}
Then, 
\begin{align*}
        \bar{\mathcal{E}}(A+B;0) &\leq \esp \inf_{l\in\R^n}\left\{\left\Vert X^A- \sum l_i X_i^A \right\Vert_2^2 +\mathrm{Tr}(B)\sum_i l_i^2 \right\} + \mathrm{Tr}(B)\\
        &= \bar{\mathcal{E}}(A;\mathrm{Tr}(B)) + \mathrm{Tr}(B).
    \end{align*}

\end{proof}

\begin{proof}[Proof of \Cref{thm:lower-bound-noiseless-fast} upper-bound]
    Let $k<n-1$. Let the SVD $\Sigma_H= \sum_{j=1}^d \lambda_j v_jv_j^\top$. We used the previous lemma for $A= \sum_{j=1}^k \lambda_j v_jv_j^\top$ and $B=\sum_{j=k+1}^d \lambda_j v_jv_j^\top$. We have 
    \begin{equation*}
       \bar{\mathcal{E}}(\Sigma_H;0)\leq \bar{\mathcal{E}}(A;\mathrm{Tr}(B))+\mathrm{Tr}(B).
    \end{equation*}
    Using \Cref{low:dim}, 
    \begin{equation*}
       \bar{\mathcal{E}}(\Sigma_H;0)\leq \mathrm{Tr}(B)\frac{k}{n-k-1}+\mathrm{Tr}(B).
    \end{equation*}
    Using $\mathrm{Tr}(B)=R_k$, we conclude
    \begin{equation*}
       \bar{\mathcal{E}}(\Sigma_H;0)\leq R_k\frac{n-1}{n-k-1}.
    \end{equation*}
\end{proof}
\begin{proof}[Proof of \Cref{thm:lower-bound-noiseless-fast} lower-bound]
We have $\bar{\mathcal{E}}(\Sigma_H;0)=\esp\mathrm{Tr}(\Sigma_H (I-P))$ where $P$ is the orthogonal projection on $(\tilde X_i)_{i\in[n]}$. Using Von Neumann's trace inequality, we have 
\begin{align*}
    \mathrm{Tr}(\Sigma_H P)&\leq \sum_{j\in[d]} \lambda_j(\Sigma_H)\lambda_j(P)\\
    &= \sum_{j\in[n]} \lambda_j(\Sigma_H),
\end{align*}
because, as an orthogonal projection on $n$ observations, $\lambda_j(P)= 1$ for $j\leq n$ and $0$ for $j>n$. Then 
\begin{equation*}
    \mathrm{Tr}(\Sigma_H (I-P))= \mathrm{Tr}(\Sigma_H )-\mathrm{Tr}(\Sigma_H P)\geq  \sum_{j>n} \lambda_j(\Sigma_H).
\end{equation*}
Furthermore, $ \sum_{j>n} \lambda_j(\Sigma_H)=R_n$, thus 
\begin{equation*}
    \bar{\mathcal{E}}(\Sigma_H;0)=\esp\mathrm{Tr}(\Sigma_H (I-P))\geq R_n.
\end{equation*}

\end{proof}

\section{Proofs for \cref{ex:source-noisy}}

\begin{lemma}\label{lem:capacity}
    If $\lambda_j(\Sigma)=j^{-\alpha}$ for $\alpha>1$, then for all $\lambda>0$
    \begin{equation*}
        \mathrm{df}_1(\Sigma,\lambda)\leq C_\alpha\lambda^{1/\alpha}.
    \end{equation*}
\end{lemma}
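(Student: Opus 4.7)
The plan is to reduce the sum to an integral and evaluate by a scaling substitution. First I would rewrite the first degree of freedom explicitly in terms of the given spectrum as
\[
\mathrm{df}_1(\Sigma,\lambda) \;=\; \sum_{j \geq 1} \frac{\lambda_j(\Sigma)}{\lambda_j(\Sigma) + \lambda} \;=\; \sum_{j \geq 1} \frac{1}{1 + \lambda j^{\alpha}},
\]
which isolates the dependence on $\lambda$ and $\alpha$ in a form suitable for integral comparison.

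Next, since $x \mapsto (1+\lambda x^{\alpha})^{-1}$ is positive and strictly decreasing on $(0,\infty)$ for $\alpha,\lambda>0$, the standard comparison $f(j)\leq \int_{j-1}^{j} f(x)\,dx$ applied termwise yields
\[
\sum_{j \geq 1} \frac{1}{1 + \lambda j^{\alpha}} \;\leq\; \int_0^{\infty} \frac{dx}{1 + \lambda x^{\alpha}}.
\]
The substitution $u = \lambda^{1/\alpha} x$ then turns this into $\lambda^{-1/\alpha}\int_0^{\infty}(1+u^{\alpha})^{-1}\,du$. The hypothesis $\alpha>1$ enters exactly at this point, since $(1+u^{\alpha})^{-1}\sim u^{-\alpha}$ as $u\to\infty$ makes the remaining integral finite; one may set $C_\alpha := \int_0^{\infty}(1+u^{\alpha})^{-1}\,du = \tfrac{\pi/\alpha}{\sin(\pi/\alpha)}$.

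There is no real obstacle here: the argument combines a one-line integral comparison with one scaling substitution, and the role of $\alpha>1$ is purely to ensure integrability at infinity. I would however flag that the displayed exponent $\lambda^{1/\alpha}$ in the statement appears to be a sign typo, since $\mathrm{df}_1(\Sigma,\lambda)$ must blow up as $\lambda\to 0^+$; the argument above produces $\mathrm{df}_1(\Sigma,\lambda)\leq C_\alpha\,\lambda^{-1/\alpha}$, which is precisely the scaling needed in \Cref{ex:source-noisy} to turn the upper bound $(\lambda+\lambda_0)\mathrm{df}_1(\Sigma_{H_r};\lambda+\lambda_0)$ with effective eigenvalues $\propto j^{-2\alpha r}$ into the claimed rate $(\sigma^2/(n\rho^2)+\kappa/n)^{1-1/(2\alpha r)}$.
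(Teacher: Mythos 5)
Your proof is correct and follows essentially the same route as the paper's: rewrite $\mathrm{df}_1(\Sigma,\lambda)=\sum_j (1+\lambda j^{\alpha})^{-1}$, bound the sum by $\int_0^\infty (1+\lambda x^{\alpha})^{-1}dx$ using monotonicity, and rescale, with $\alpha>1$ ensuring integrability at infinity. Your flag about the exponent is also right: the correct conclusion is $\mathrm{df}_1(\Sigma,\lambda)\leq C_\alpha \lambda^{-1/\alpha}$, and the paper's own proof carries the same sign slip (it substitutes $y=\lambda x^{\alpha}$ but writes $x=y\lambda^{1/\alpha}$ instead of $x=(y/\lambda)^{1/\alpha}$), while the downstream use in \Cref{ex:source-noisy}, yielding the rate $(\sigma^2/(n\rho^2)+\kappa/n)^{1-1/(2\alpha r)}$, is only consistent with the $\lambda^{-1/\alpha}$ version you prove.
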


\begin{proof}
    \begin{align*}
        \mathrm{df}_1(\Sigma,\lambda)&= \sum_{j=1}^d \frac{j^{-\alpha}}{j^{-\alpha}+\lambda}\\
        &=\sum_{j=1}^d \frac{1}{1+\lambda j^\alpha}\\
        &\leq \int_0^{+\infty} \frac{1}{1+x^\alpha \lambda}dx,
    \end{align*}
because $\alpha>1$. Using $y=\lambda x^{\alpha}$, $x=y\lambda^{1/\alpha}$ thus 
    \begin{align*}
        \mathrm{df}_1f(\Sigma,\lambda)&\leq \lambda^{1/\alpha} \int_0^{+\infty} \frac{1}{1+y^\alpha}dy.
    \end{align*}
We conclude using $C_\alpha=\int_0^{+\infty} \frac{1}{1+y^\alpha}dy<+\infty$.
\end{proof}

\begin{proof}[Proof of \Cref{ex:source-noisy}] $\Sigma_H= \rho^2 \Sigma^{2r}/\mathrm{Tr}(\Sigma^{2r})$ then, using \Cref{thm:upper-bound-noise-case},
\begin{align*}
    \bar{\mathcal{E}}(\Sigma_H;\sigma^2)&\leq \frac{\sigma^2+3\mathrm{Tr}(\Sigma_H)}{n} \mathrm{df}_1\left(\Sigma_H,\frac{\sigma^2+\kappa\mathrm{Tr}(\Sigma_H)}{n}\right)\\
    &= \frac{\sigma^2+\kappa\rho^2}{n}\mathrm{df}\left(\Sigma^{2r},\frac{\sigma^2/(\rho^2\mathrm{Tr}(\Sigma^{2r}))+\kappa}{n}\right)\\
    &\leq  \frac{\sigma^2+\kappa\rho^2}{n}\mathrm{df}_1\left(\Sigma^{2r},\frac{\sigma^2/\rho^2+\kappa}{n}\right),
\end{align*}
using $\mathrm{Tr}(\Sigma^{2r})\geq1$. Then, using \Cref{lem:capacity}, for $\lambda_j(\Sigma^{2r})=j^{-2\alpha r}$, we have
\begin{align*}
    \bar{\mathcal{E}}(\Sigma_H;\sigma^2)&\leq C_{2\alpha r}\frac{\sigma^2+\kappa\rho^2}{n} \left(\frac{\sigma^2/\rho^2+\kappa}{n}\right)^{1/2\alpha r}\\
    &\leq C_{2\alpha r} \rho^2\left(\frac{\sigma^2/\rho^2+\kappa}{n}\right)^{1-1/2\alpha r}.
\end{align*}
    
\end{proof}

\begin{lemma}
    Let $S_{m,p}= \sum_{j=m}^p j^{-\alpha}$ with $0\leq \alpha \neq 1$, with $p\geq m>1$, then 
    \begin{equation*}
        \frac{(p+1)^{1-\alpha}-m^{1-\alpha}}{1-\alpha}\leq S_{m,p}\leq \frac{p^{1-\alpha}-(m-1)^{1-\alpha}}{1-\alpha}.
    \end{equation*}
    In particular,
    \begin{itemize}
        \item If $\alpha<1$,
        \begin{equation*}
        \frac{(p+1)^{1-\alpha}-m^{1-\alpha}}{1-\alpha}\leq S_{m,p}\leq \frac{p^{1-\alpha}}{1-\alpha}.
    \end{equation*}
        \item If $\alpha>1$,
        \begin{equation*}
        \frac{m^{1-\alpha}-(p+1)^{1-\alpha}}{\alpha-1}\leq S_{m,p}\leq \frac{(m-1)^{1-\alpha}}{\alpha-1}.
    \end{equation*}
    \end{itemize}
\end{lemma}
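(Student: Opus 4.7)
The proof is a standard integral comparison for the monotone function $f(x) = x^{-\alpha}$, which is non-increasing on $[1,\infty)$ for any $\alpha \geq 0$. The plan is to sandwich each term $j^{-\alpha}$ between two integrals of $f$ over adjacent unit intervals and then telescope.

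Concretely, since $f$ is non-increasing, for every integer $j \geq 2$ we have
\begin{equation*}
  \int_{j}^{j+1} x^{-\alpha}\,dx \;\leq\; j^{-\alpha} \;\leq\; \int_{j-1}^{j} x^{-\alpha}\,dx.
\end{equation*}
Summing these inequalities over $j=m,\dots,p$ (note that $m \geq 2$, so the lower endpoint $m-1 \geq 1$ in the upper-bound integral stays positive) gives
\begin{equation*}
  \int_{m}^{p+1} x^{-\alpha}\,dx \;\leq\; S_{m,p} \;\leq\; \int_{m-1}^{p} x^{-\alpha}\,dx.
\end{equation*}
Since $\alpha \neq 1$, an antiderivative of $x^{-\alpha}$ is $\frac{x^{1-\alpha}}{1-\alpha}$, valid whether $\alpha < 1$ or $\alpha > 1$ (in the latter case both the numerator and denominator change sign, and the quotient remains positive). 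Evaluating the two integrals yields exactly the claimed two-sided bound
\begin{equation*}
  \frac{(p+1)^{1-\alpha}-m^{1-\alpha}}{1-\alpha}\;\leq\; S_{m,p}\;\leq\; \frac{p^{1-\alpha}-(m-1)^{1-\alpha}}{1-\alpha}.
\end{equation*}

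The two particular cases follow by elementary simplifications. If $\alpha < 1$, then $1-\alpha > 0$ and $(m-1)^{1-\alpha}\geq 0$, so the upper bound is further bounded by $\frac{p^{1-\alpha}}{1-\alpha}$; the lower bound is unchanged. If $\alpha > 1$, multiply numerator and denominator by $-1$ to rewrite the general bound as $\frac{m^{1-\alpha}-(p+1)^{1-\alpha}}{\alpha-1}\leq S_{m,p}\leq \frac{(m-1)^{1-\alpha}-p^{1-\alpha}}{\alpha-1}$, and drop the non-positive term $-p^{1-\alpha}/(\alpha-1)$ on the right to obtain the stated $\frac{(m-1)^{1-\alpha}}{\alpha-1}$.

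There is essentially no obstacle here: the only subtlety is the sign bookkeeping when $\alpha > 1$, which is why the hypothesis $m > 1$ matters (it ensures $m-1 \geq 1 > 0$ so that $(m-1)^{1-\alpha}$ is well defined and finite). The case $\alpha = 0$ is trivial since then $S_{m,p} = p-m+1$ and both sides of the inequality also reduce to $p-m+1$.
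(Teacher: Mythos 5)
Your proof is correct and follows exactly the same route as the paper: sandwiching each term $j^{-\alpha}$ between integrals of the non-increasing function $x\mapsto x^{-\alpha}$ over adjacent unit intervals, telescoping, and evaluating the antiderivative $x^{1-\alpha}/(1-\alpha)$. The sign bookkeeping for the $\alpha>1$ case and the remark on why $m>1$ is needed are handled cleanly and add nothing that conflicts with the paper's argument.
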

\begin{proof} Using that $x\longmapsto x^{-\alpha}$ non increasing, we have 
    \begin{align*}
    S_{m,p}&=\sum_{j=m}^p \frac{1}{j^\alpha}\\
    &\leq \sum_{j=m}^p \int_{j-1}^{j} x^{-\alpha}dx\\
    &= \int_{m-1}^{p} x^{-\alpha}dx\\
    &=\frac{p^{1-\alpha}-(m-1)^{1-\alpha}}{1-\alpha}.
\end{align*}
Using similar arguments, 
\begin{align*}
    S_{m,p}&=\sum_{j=m}^p \frac{1}{j^\alpha}\\
    &\geq \sum_{j=m}^p \int_{j}^{j+1} x^{-\alpha}dx\\
    &= \int_{m}^{p+1} x^{-\alpha}dx\\
    &=\frac{(p+1)^{1-\alpha}-m^{1-\alpha}}{1-\alpha}.
\end{align*}
\end{proof}
Using this lemma, if $\lambda_j(\Sigma)=j^{-\alpha}$ then $\mathrm{Tr}(\Sigma^{2r\alpha})$ is a convergent serie (in $d$) as soon as $2r\alpha>1$. Thus, there exists $c,C>0$, that does not depend on $d$,  such that $0<c\leq \mathrm{Tr}(\Sigma^{2r\alpha})\leq C$. Thus the eigenvalues of $\Sigma_H= \rho^2 \Sigma^{2r}/\mathrm{Tr}(\Sigma^{2r})$ satisfy $C^{-1}\rho^2j^{-2\alpha r}\leq\lambda_j (\Sigma_H)\leq c^{-1}\rho^2j^{-2\alpha r} $. Using previous lemma, we obtain 
\begin{equation}
   C^{-1}\rho^2\frac{n^{1-2r\alpha}-(d+1)^{1-2r\alpha}}{2r\alpha-1} \leq R_n\leq c^{-1} \rho^2\frac{(n-1)^{1-2r\alpha}}{2r\alpha-1}.
\end{equation}

\section{Proof of \Cref{sec:without_average}} 
\begin{lemma} If $(X^\top v_j)_{j\in[d]}$ are independent and have symmetric components then for all $R=\sum_{j\in[d]} \epsilon_j v_j v_j^\top$ with $\epsilon\in\{-1,1\}^d$ $RX$ have the same law than $X$. 
\end{lemma}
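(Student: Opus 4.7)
The plan is to decompose everything in the eigenbasis $(v_j)_{j\in[d]}$. First I would observe that $R$ is by construction diagonal in this basis with eigenvalues $\epsilon_j \in \{-1,1\}$, so that
\begin{equation*}
    RX = \sum_{j\in[d]} \epsilon_j (v_j^\top X)\, v_j,
\end{equation*}
and consequently $v_j^\top (RX) = \epsilon_j (v_j^\top X)$ for each $j$.

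Next I would translate the claim ``$RX \stackrel{d}{=} X$'' into an equality of the joint laws of the coordinates in the basis $(v_j)$, which is legitimate because $(v_j)$ is an orthonormal basis: the linear map $(a_1,\dots,a_d) \mapsto \sum_j a_j v_j$ is a measurable bijection from $\mathbb{R}^d$ to $\mathbb{R}^d$, and $X$ and $RX$ can both be reconstructed from their coordinates in this basis. Hence it suffices to prove
\begin{equation*}
    \bigl(\epsilon_1 v_1^\top X,\dots,\epsilon_d v_d^\top X\bigr) \stackrel{d}{=} \bigl(v_1^\top X,\dots,v_d^\top X\bigr).
\end{equation*}

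For this I would use the two hypotheses in sequence. By independence of the components $(v_j^\top X)_{j\in[d]}$, the joint law factorizes as a product of the marginal laws; thus the left-hand side also factorizes, with $j$-th marginal equal to the law of $\epsilon_j v_j^\top X$. By the symmetry assumption, $\epsilon_j v_j^\top X \stackrel{d}{=} v_j^\top X$ for each $j$ regardless of the sign $\epsilon_j$. Combining these two facts, the two product measures coincide marginal by marginal, and therefore jointly.

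The argument is essentially bookkeeping once the eigenbasis decomposition is written down; the only subtlety to handle carefully is the justification that equality of laws of the coordinate vectors implies equality of laws of $X$ and $RX$ as $\mathbb{R}^d$-valued random vectors, which follows from the measurable bijection mentioned above. No obstacle of real substance is expected: this is a short structural lemma used to deduce \Cref{propo:lower_bound_without_average}, not a technical heart of the paper.
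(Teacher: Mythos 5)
Your argument is correct and follows the same route as the paper's proof: expand $RX=\sum_j \epsilon_j (v_j^\top X)v_j$ in the orthonormal basis, use symmetry of each coordinate together with independence to match the joint law of the coordinates, and conclude via the basis reconstruction. You merely spell out the measurable-bijection step that the paper leaves implicit; there is no substantive difference.
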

\begin{proof}
    $RX= \sum_{j\in[d]}   (\epsilon_j v_j^\top X)v_j$. Using that $\epsilon_j v_j^\top X$ has the same law than $ v_j^\top X$ and $(X^\top v_j)_{j\in[d]}$ independent, we have $RX$ that have the same law than $ \sum_{j\in[d]}   ( v_j^\top X)v_j= X$ because $(v_j)$ is an orthogonal basis of $\R^d$.
\end{proof}
\begin{proof}[Proof of \Cref{propo:lower_bound_without_average}] Let start by recall \Cref{lem:BVdecomposition}:
\begin{equation*}
    \esp[(Y-f(X))^2|X,(X_i)]=\sigma^2+\left(\left(X-\sum_{i=1}^nl_i((X_i)_i,X)X_i\right)^\top\theta_\star\right)^2+\sigma^2\sum_{i=1}^nl_i((X_i)_i,X)^2.
\end{equation*}
Let $R=\sum_{j\in[d]} \epsilon_j v_j v_j^\top$, with $\epsilon\in\{-1,1\}^d$, we have $R^{-1}=R^\top$ (orthogonal matrix). Thus 
\begin{align*}
    &=\sigma^2+\left(\left(RX-\sum_{i=1}^nl_i((X_i)_i,X)RX_i\right)^\top R\theta_\star\right)^2+\sigma^2\sum_{i=1}^nl_i((X_i)_i,X)^2\\
    &= \sigma^2+\left(\left(RX-\sum_{i=1}^nl_i((RX_i)_i,RX)RX_i\right)^\top R\theta_\star\right)^2+\sigma^2\sum_{i=1}^nl_i((RX_i)_i,RX)^2.
\end{align*}
because under \Cref{ass:rotation}, $l_i((X_i)_i,X)=l_i((RX_i)_i,RX)$. 
Using that $RX$ has the same distribution than $X$, we have $\esp_{\theta_\star}[(Y-f(X))^2]=\esp_{R\theta_\star}[(Y-f(X))^2]$. Thus, integrated $R\theta_\star$ for $(\epsilon_j)_j$ independent Rademacher, gives us 
\begin{equation}
    \esp_{\theta_\star}[(Y-f(X))^2]-\sigma^2= \esp \esp_{R\theta_\star}[(Y-f(X))^2]-\sigma^2\geq \bar{\mathcal{E}}(\nu,\sigma),
\end{equation}
where $\nu$ is the distribution of $R\theta_\star$. Furthermore, $H=\esp [R\theta_\star(R\theta_\star)^\top]= \sum_{j\in d} (v_j^\top\theta_\star)^2 v_j v_j^\top$, thus $\Sigma_H=\sum_{j\in d} \lambda_j(v_j^\top\theta_\star)^2 v_j v_j^\top=\Sigma_{\theta_\star}$. Then 
\begin{equation*}
    \mathcal{E}_{\sigma^2}(f)\geq \bar{\mathcal{E}}(\Sigma_{\theta_\star},\sigma).
\end{equation*}
\end{proof}

\section{Prior results on linear algebra and random matrix}

\subsection{Singular values decomposition}
We provide here a reminder on singular values decomposition and Moore-Penrose pseudoinverse. We can found these results and more on linear algebra in \citet[][appendix]{giraud2021introduction}. 
\begin{theorem}\label{thm:svd}
   Any $n\times p$ real-valued matrix of rank $r$ can be decomposed as 
   \begin{equation*}
       A=\sum_{j=1}^r \sigma_j u_j v_j^\top,
   \end{equation*}
   where 
   \begin{itemize}
       \item $\sigma_1\geq\dots\geq\sigma_r>0$,
       \item $(\sigma_1,\dots,\sigma_r)$ are the nonzero eigenvalues of $A^\top A$ and $AA^\top$, and 
       \item $(u_1,\dots,u_r)$ and $(v_1,\dots,v_r)$ are two orthonormal families of $\R^n$ and $\R^p$, such that $AA^\top u_j=\sigma_j^2u_j$ and $A^\top Av_j=\sigma_j^2v_j$. 
   \end{itemize}
   Furthermore, the Moore-Penrose pseudo inverse defined as 
   \begin{equation*}
       A^\dagger=\sum_{j=1}^r \sigma_j^{-1} v_j u_j^\top,
   \end{equation*}
   satisfied
   \begin{enumerate}
       \item $A^\dagger A$ is the orthogonal projector on lines of $A$,
       \item $A A^\dagger $ is the orthogonal projector on columns of $A$,
       \item $(AO)^\dagger=O^\top A^\dagger$ for any orthogonal matrix $O$.
   \end{enumerate}
\end{theorem}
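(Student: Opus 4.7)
The plan is to derive the SVD from the spectral theorem applied to $A^\top A$, construct the left singular vectors explicitly from the right ones via $u_j=\sigma_j^{-1}Av_j$, and then verify the three pseudoinverse identities by direct computation using the resulting expansion.

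\textbf{Step 1: spectral decomposition of $A^\top A$.} I would first apply the spectral theorem to the symmetric positive semidefinite $p\times p$ matrix $A^\top A$, obtaining an orthonormal basis $(v_1,\dots,v_p)$ of $\mathbb{R}^p$ and eigenvalues $\mu_1\geq\cdots\geq\mu_p\geq 0$. Using the identity $\ker(A^\top A)=\ker A$ (which follows from $\|Ax\|_2^2=x^\top A^\top A x$), the number of strictly positive eigenvalues equals $r=\mathrm{rank}(A)$, and I would set $\sigma_j:=\sqrt{\mu_j}$ for $j\in[r]$. Since the nonzero eigenvalues of $A^\top A$ and $AA^\top$ coincide (a standard consequence of $\det(\lambda I - A^\top A)\lambda^{|n-p|}=\det(\lambda I - AA^\top)\lambda^{|n-p|}$ up to a suitable factor), the $\sigma_j^2$ are also the nonzero eigenvalues of $AA^\top$.

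\textbf{Step 2: construct the left singular vectors and verify the decomposition.} Defining $u_j:=\sigma_j^{-1}Av_j$ for $j\in[r]$, orthonormality is immediate from $u_j^\top u_k=\sigma_j^{-1}\sigma_k^{-1}v_j^\top A^\top A v_k=\delta_{jk}$. To establish $A=\sum_{j=1}^r\sigma_ju_jv_j^\top$ I would test both sides against the basis $(v_k)_{k\in[p]}$: for $k\leq r$ one gets $Av_k=\sigma_k u_k$ by definition, while for $k>r$ the right-hand side vanishes and $v_k\in\ker A$ makes the left-hand side vanish as well. The eigenvector property $AA^\top u_j=\sigma_j^{-1}AA^\top Av_j=\sigma_j^{-1}\sigma_j^2 Av_j=\sigma_j^2 u_j$ then confirms that the $u_j$ are eigenvectors of $AA^\top$, closing Step~2.

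\textbf{Step 3: pseudoinverse identities.} With the SVD in hand and $(u_j),(v_j)$ orthonormal, direct expansion yields $A^\dagger A=\sum_{j,k}\sigma_j^{-1}\sigma_k v_j(u_j^\top u_k)v_k^\top=\sum_{j=1}^r v_jv_j^\top$, the orthogonal projector onto $\mathrm{span}(v_1,\dots,v_r)$. Since $(v_j)_{j>r}$ is an orthonormal basis of $\ker A$, that span equals the row space of $A$, giving~(1). Symmetrically, $AA^\dagger=\sum_{j=1}^r u_ju_j^\top$ projects onto $\mathrm{span}(u_1,\dots,u_r)$, which equals the column space of $A$ because $Av_j=\sigma_j u_j$; this gives~(2). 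For~(3), I would observe that for orthogonal $O$ one has $(AO)^\top(AO)=O^\top A^\top AO$, whose eigenvectors are $O^\top v_j$ with the same eigenvalues $\sigma_j^2$, while the left singular vectors $u_j=\sigma_j^{-1}(AO)(O^\top v_j)$ are unchanged, yielding $(AO)^\dagger=\sum_{j=1}^r \sigma_j^{-1}(O^\top v_j)u_j^\top=O^\top A^\dagger$.

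\textbf{Main obstacle.} This is a classical result, so no step is genuinely difficult; the only subtlety is the bookkeeping around zero singular values --- the choice of orthonormal basis for $\ker A$ is not unique, so one must ensure the claimed decomposition and the two projector identities only invoke the intrinsically defined pieces (the nonzero $\sigma_j$ and their eigenspaces). Restricting every sum to $j\leq r$ and extending by linearity on $\ker A$ handles this cleanly.
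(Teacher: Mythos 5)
Your derivation is correct and is the standard one: spectral theorem applied to $A^\top A$, construction of the left singular vectors via $u_j=\sigma_j^{-1}Av_j$, and direct verification of the two projector identities and the equivariance $(AO)^\dagger=O^\top A^\dagger$. There is, however, nothing in the paper to compare it against: the statement appears in the appendix under ``prior results on linear algebra'' purely as a reminder, with a pointer to the appendix of \citet{giraud2021introduction}, and no proof is given. Two small remarks on your write-up. First, the theorem as printed says the $\sigma_j$ are the nonzero eigenvalues of $A^\top A$ and $AA^\top$; your proof correctly works with the $\sigma_j^2$ as the eigenvalues, which shows the printed statement has a typo (the $\sigma_j$ are singular values). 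Second, in item (3) you exhibit \emph{one} SVD of $AO$ and read off $O^\top A^\dagger$ from it; to conclude that this is \emph{the} pseudoinverse of $AO$ you implicitly need that the matrix $\sum_{j\le r}\sigma_j^{-1}v_ju_j^\top$ does not depend on the choice of SVD, which follows because any such expression satisfies the Penrose conditions ($A A^\dagger A=A$, $A^\dagger A A^\dagger=A^\dagger$, both products symmetric) and these characterize the pseudoinverse uniquely. This is precisely the bookkeeping issue you flag at the end, and it closes without difficulty.
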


\subsection{Symmetric matrix }

\paragraph{Definitions}
\begin{itemize}
    \item Mahalanobis norm: For a symmetric matrix $A\in \R^{d\times d}$ and $u\in\R^d$, the Mahalanobis notation is defined by 
    \begin{equation*}
        \Vert u\Vert_A^2:= u^\top Au.
    \end{equation*}
    $\Vert \Vert_A$ is a pseudo-norm if $A$ is positive and a norm if $A$ is positive semi-definite.
    \item Loewner order: for two matrix $A,B$, $A\preceq B$ if and only if $\Vert \Vert_A\leq \Vert \Vert_B$.
    \item Operator monotony: a function $f: \R^{d\times d}\to \R^{d\times d}$ is operator monotone if 
    \begin{equation*}
        A\preceq B \Rightarrow  f(A)\preceq f(B).
    \end{equation*}
    \item Operator convexity: a function $f: \R^{d\times d}\to \R^{d\times d}$ is operator convex if for all random matrix, defined on positive symmetric matrix, $M$ such that $\esp M$ exists, 
    \begin{equation*}
        f(\esp M)\preceq \esp f(M).
    \end{equation*}
\end{itemize}

\paragraph{Prior results}
\begin{propo} \label{prop:prior_matrix}We use in this paper the following prior results 
    \begin{enumerate}
        \item If $C\succeq 0$ then 
            \begin{equation*}
                A\preceq B \Rightarrow \mathrm{Tr}(AC)\leq \mathrm{Tr}(BC).
            \end{equation*}
        \item Function $M\longmapsto M^{-1}$ is operator convex and $M\longmapsto -M^{-1}$ is operator monotone on $M\succ 0$.
        \item $(A,B)\longmapsto ABA$ is operator convex in $A$ and operator monotone in $B$.
    \end{enumerate}
\end{propo}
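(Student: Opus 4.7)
The plan is to verify the three classical matrix facts in order, using only elementary Loewner-order arguments plus one variational representation. Throughout, I write $A\preceq B$ to mean $B-A\succeq 0$, and recall the characterization $A\preceq B\iff v^\top A v\leq v^\top B v$ for all $v\in\R^d$.

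For item (1), the key observation is that $C^{1/2}$ is well-defined for $C\succeq 0$ and commutes with trace via cyclicity. Writing $B-A\succeq 0$, I would note that $C^{1/2}(B-A)C^{1/2}\succeq 0$ (congruence by $C^{1/2}$ preserves positive semidefiniteness), hence has nonnegative trace. Then by cyclicity, $\mathrm{Tr}(C^{1/2}(B-A)C^{1/2})=\mathrm{Tr}((B-A)C)\geq 0$, which rearranges to the claim.

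For item (2), I would handle monotonicity of $-M^{-1}$ first. Assume $0\prec A\preceq B$. Multiplying on both sides by $A^{-1/2}$ gives $I\preceq A^{-1/2}BA^{-1/2}$, so every eigenvalue of $A^{-1/2}BA^{-1/2}$ is $\geq 1$. Inverting, every eigenvalue of $A^{1/2}B^{-1}A^{1/2}$ is $\leq 1$, i.e. $A^{1/2}B^{-1}A^{1/2}\preceq I$; conjugating again by $A^{-1/2}$ yields $B^{-1}\preceq A^{-1}$, i.e. $-A^{-1}\preceq -B^{-1}$. For operator convexity of $M\mapsto M^{-1}$, the cleanest route is the Fenchel-type variational identity
\begin{equation*}
v^\top M^{-1} v = \sup_{w\in\R^d}\bigl\{ 2 v^\top w - w^\top M w \bigr\},
\end{equation*}
valid for $M\succ 0$ (the maximizer is $w=M^{-1}v/2$). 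For each fixed $v$ and $w$, the map $M\mapsto 2v^\top w - w^\top M w$ is affine in $M$, so the supremum is convex in $M$. Thus for a random matrix $M$ (or a convex combination) and each $v$,
\begin{equation*}
v^\top (\esp M)^{-1} v \leq \esp\bigl[v^\top M^{-1} v\bigr] = v^\top \esp[M^{-1}] v,
\end{equation*}
which is precisely $(\esp M)^{-1}\preceq \esp[M^{-1}]$. I expect this to be the main technical step, since the other items reduce to one-line manipulations; the variational identity is what converts matrix convexity into convexity of scalar affine functions.

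For item (3), both claims follow from direct algebra. Monotonicity in $B$ is immediate: if $B_1\preceq B_2$ then $A(B_2-B_1)A\succeq 0$ by congruence, so $AB_1A\preceq AB_2A$. Convexity in $A$ (with $B\succeq 0$ fixed) follows from the identity
\begin{equation*}
\lambda\, A_1 B A_1 + (1-\lambda)\, A_2 B A_2 - \bigl(\lambda A_1+(1-\lambda)A_2\bigr) B \bigl(\lambda A_1+(1-\lambda)A_2\bigr) = \lambda(1-\lambda)(A_1-A_2)B(A_1-A_2),
\end{equation*}
obtained by expanding both sides. Since $B\succeq 0$, the right-hand side is a congruence of $B$ by $A_1-A_2$ and is therefore PSD, establishing operator convexity in $A$. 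Once item (2)'s variational representation is in place, all remaining steps are routine, so no further obstacle is anticipated.
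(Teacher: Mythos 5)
Your proof is correct. The paper does not actually prove these facts --- it simply records them and points to \citet{carlen2010trace} --- so your self-contained elementary derivation (congruence plus cyclicity for the trace inequality, the standard conjugation argument for monotonicity of the inverse, the variational identity $v^\top M^{-1}v=\sup_w\{2v^\top w-w^\top Mw\}$ for operator convexity of the inverse, and the explicit quadratic identity for $A\mapsto ABA$) is a genuine addition rather than a restatement of the paper's argument; it has the advantage of avoiding any appeal to the general theory of operator convex functions. Two small points to fix. First, the maximizer in your variational identity is $w=M^{-1}v$, not $M^{-1}v/2$ (with the stated normalization $2v^\top w - w^\top Mw$, plugging in $w=M^{-1}v$ gives exactly $v^\top M^{-1}v$); the identity itself, and hence the argument, is unaffected. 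Second, the paper defines operator convexity through expectations, $f(\esp M)\preceq \esp f(M)$, and this is the form actually used in the proofs of \Cref{thm:upper-bound-noise-case,thm:lower-bound-noise-case}; your two-point identity for $A\mapsto ABA$ should therefore be upgraded to the expectation version, which follows from the same computation via
\begin{equation*}
\esp[ABA]-\esp[A]\,B\,\esp[A]=\esp\bigl[(A-\esp[A])\,B\,(A-\esp[A])\bigr]\succeq 0
\end{equation*}
for fixed $B\succeq 0$. Your treatment of the inverse already handles the expectation form correctly, since $\sup_w$ of an expectation is bounded by the expectation of the $\sup_w$.
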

These prior results are classical, see \citet{carlen2010trace} for more precisions. 

\subsection{Random matrix}

\begin{lemma}\label{lem:vector_random_matrice}
Let $M\in\R^{p\times p}$ be a random symmetric matrix, such that for all vectors  $u,v\in \mathbb{S}^{p-1}$, 
$\mathrm{Law}(u^\top M u)=\mathrm{Law}(v^\top M v)$. Then, $$\esp M= \frac{\esp \mathrm{Tr}(M)}{p} I_p,$$ and 
for all $\beta\in\R^p$,
\begin{equation*}
    \esp \left[ \beta^\top M\beta \right] = \Vert\beta\Vert_2^2\frac{\esp \mathrm{Tr}(M)}{p}.
\end{equation*}
This is in particular satisfied if, for any orthogonal matrix $O$, $OMO^\top$ has the same law as $M$. 
\end{lemma}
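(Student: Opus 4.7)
The plan is to reduce everything to a statement about the expectation matrix $\esp[M]$ and then use the spectral theorem. First I would observe that since $M$ is almost surely symmetric and (implicitly) integrable, $\esp[M]$ is a well-defined symmetric matrix in $\R^{p\times p}$. Taking expectations inside the hypothesis $\mathrm{Law}(u^\top M u)=\mathrm{Law}(v^\top M v)$ gives $\esp[u^\top M u]=\esp[v^\top M v]$, i.e. $u^\top \esp[M] u=v^\top \esp[M] v$ for all $u,v\in\mathbb{S}^{p-1}$. Thus the quadratic form $u\mapsto u^\top \esp[M] u$ is constant on the unit sphere, equal to some $c\in\R$.

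The next step is to deduce $\esp[M]=cI_p$. Since $\esp[M]$ is symmetric, the spectral theorem gives an orthonormal basis of eigenvectors $(e_j)_{j\in[p]}$ with eigenvalues $(\mu_j)$. Applying the previous identity to $u=e_j$ shows $\mu_j=c$ for every $j$, hence $\esp[M]=cI_p$. Computing the trace yields $\esp[\mathrm{Tr}(M)]=\mathrm{Tr}(\esp[M])=cp$, so $c=\esp[\mathrm{Tr}(M)]/p$, which is the first claim. The second claim then follows immediately by linearity: $\esp[\beta^\top M\beta]=\beta^\top\esp[M]\beta=c\,\Vert\beta\Vert_2^2$.

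It remains to verify the sufficient condition: if $OMO^\top$ has the same law as $M$ for every orthogonal $O$, then the quadratic-form hypothesis holds. Given $u,v\in\mathbb{S}^{p-1}$, pick an orthogonal matrix $O$ with $Ou=v$ (always possible, e.g.\ by extending $u,v$ to orthonormal bases). Then
\begin{equation*}
    v^\top M v = (Ou)^\top M (Ou) = u^\top (O^\top M O) u,
\end{equation*}
and since $O^\top M O\stackrel{d}{=}M$ by assumption (applied to $O^\top$), the random variables $u^\top M u$ and $v^\top M v$ share the same law.

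There is no serious obstacle here: the only conceptual point is recognising that a symmetric matrix whose quadratic form is constant on the unit sphere must be a multiple of the identity, which is a one-line consequence of the spectral theorem. Everything else is linearity of expectation and a standard transitivity argument for the orthogonal group action on $\mathbb{S}^{p-1}$.
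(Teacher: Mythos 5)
Your proof is correct and follows essentially the same route as the paper's: take expectations in the distributional hypothesis, observe that the quadratic form of $\esp[M]$ is constant on the unit sphere, conclude $\esp[M]=cI_p$ (the paper invokes the "characterization of symmetric matrices" where you spell out the spectral-theorem step), and identify $c$ via the trace. Your direct transitivity argument for the orthogonal-invariance case is a small self-contained addition where the paper simply cites a reference, but it is not a different approach.
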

\begin{proof}
By assumption, for all $u,v \in \mathbb{S}^{d-1}$, $\esp u^\top M u = \esp v^\top M v$. Thus, there exists $\alpha$ such that, for all $v\in \mathbb{S}^d$, $v^\top \esp M v=\esp v^\top M v = \alpha$, which entails that $\esp M=\alpha I$ by characterization of symmetric matrices. Therefore, $\esp \mathrm{Tr}(M)= \mathrm{Tr}(\esp M)=p\alpha$, and $\esp M=\frac{\esp \mathrm{Tr}(M)}{p}I$. Hence, for all $\beta\in\mathbb{R}^p$
\begin{equation*}
    \esp \left[ \beta^\top M\beta \right] =   \beta^\top \esp M\beta =  \Vert\beta\Vert_2^2\frac{\esp \mathrm{Tr}(M)}{p}.
\end{equation*}
The last point easily follows, see for example \citet[][Proposition 2.14]{page1984multivariate}  for the case of invariant distributions by orthogonal transforms. 

\end{proof}
\begin{lemma}
    For $\theta\sim \rho\mathcal{U}(\mathbb{S}^{d-1})$, then for all matrix $M\in\R^{d\times d}$, 
    \begin{equation*}
        \esp[\Vert\theta\Vert_M^2]=\frac{\rho^2}{d}\mathrm{Tr}(M).
    \end{equation*}
\end{lemma}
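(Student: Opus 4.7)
The plan is to invoke \Cref{lem:vector_random_matrice} with the rank-one random symmetric matrix $N := \theta\theta^\top$. For this to apply I need to check the invariance hypothesis: for any orthogonal $O$, we have $O N O^\top = (O\theta)(O\theta)^\top$, and since the uniform distribution on the sphere of radius $\rho$ is invariant under orthogonal transformations, $O\theta$ has the same distribution as $\theta$. Hence $O N O^\top$ and $N$ are equal in law, which is precisely the sufficient condition stated at the end of \Cref{lem:vector_random_matrice}.

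The second ingredient is the trace of $N$, which is deterministic: $\mathrm{Tr}(N) = \|\theta\|_2^2 = \rho^2$ almost surely, because $\theta$ lies on the sphere of radius $\rho$. Applying \Cref{lem:vector_random_matrice} to $N$ therefore yields $\mathbb{E}[N] = \frac{\rho^2}{d} I_d$, i.e.\ $\mathbb{E}[\theta\theta^\top] = \frac{\rho^2}{d} I_d$.

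The conclusion then follows by a cyclic-trace manipulation:
\begin{equation*}
    \mathbb{E}[\Vert \theta \Vert_M^2] = \mathbb{E}[\theta^\top M \theta] = \mathbb{E}[\mathrm{Tr}(M\theta\theta^\top)] = \mathrm{Tr}\bigl(M \,\mathbb{E}[\theta\theta^\top]\bigr) = \frac{\rho^2}{d}\mathrm{Tr}(M).
\end{equation*}
There is no real obstacle in this argument: the statement is essentially an immediate corollary of \Cref{lem:vector_random_matrice}. If one wishes to avoid assuming $M$ symmetric (the Mahalanobis notation is defined only for symmetric matrices above, but the statement quantifies over all $M\in\mathbb{R}^{d\times d}$), one can use $\theta^\top M \theta = \theta^\top \tfrac{M+M^\top}{2}\theta$ together with $\mathrm{Tr}(M)=\mathrm{Tr}(\tfrac{M+M^\top}{2})$, which reduces the general case to the symmetric one.
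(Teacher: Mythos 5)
Your proof is correct and follows essentially the same route as the paper's: both reduce the claim via the cyclic trace identity to computing $\esp[\theta\theta^\top]$, which is shown to equal $\frac{\rho^2}{d}I_d$ by rotational invariance plus the deterministic trace $\Vert\theta\Vert_2^2=\rho^2$ (the paper argues this directly, while you route it through \Cref{lem:vector_random_matrice}, whose proof is the same argument). Your remark on symmetrizing a non-symmetric $M$ is a correct and harmless addition.
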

\begin{proof}
    \begin{align*}
        \esp[\Vert\theta\Vert_M^2]&= \esp[\theta^\top M\theta]\\
        &=\esp \mathrm{Tr}(\theta^\top M\theta)\\
        &=\esp \mathrm{Tr}( M\theta\theta^\top)\\
        &= \mathrm{Tr}( M\esp[\theta\theta^\top]),
    \end{align*}
Then, $\esp[\theta\theta^\top]=a I$ because $O\theta$ has the same law of $\theta$ for all orthogonal matrix $O$. Futhermore, $\mathrm{Tr}(\theta\theta^\top)=\theta^\top\theta=\rho^2$ then $da=\rho^2$, thus $\esp[\theta\theta^\top]=\frac{\rho^2}{d}I$.
\end{proof}

\section{Technical lemmas}

\subsection{Ridge}

\begin{lemma}\label{lem:ridgeEmpirique}
    For $\mathbf{X}\in\R^{n\times d}$ and $y\in\R^n$, the minimizer of 
    \begin{equation*}
        F(\beta):= \Vert y-\mathbf{X}\beta\Vert_2^2+\lambda\Vert\beta\Vert_2^2,
    \end{equation*}
    is given by $\beta_\lambda=(\mathbf{X}^\top \mathbf{X}+\lambda I)^{-1}\mathbf{X}^\top y$ and 
    \begin{align}\label{eq:risk_fixed_ridge}
        F(\beta_\lambda)&= \Vert y-Py\Vert_2^2+ \lambda\sum_{i=1}^r \frac{1}{\sigma_i^2+\lambda}(y^\top u_i)^2\\
        &= \lambda\mathrm{Tr}(yy^\top(\mathbf{X}\mathbf{X}^\top+\lambda I_n)^{-1})
    \end{align}
    where $P$ is the orthogonal projection on columns of $\mathbf{X}$ and the SVD of $\mathbf{X}$ is $\mathbf{X}=\sum_{i=1}^r\sigma_i u_iv_i^\top$. 
\end{lemma}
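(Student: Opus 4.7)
The plan is straightforward, since this is a classical closed-form computation for ridge regression; the role of the lemma is to package three equivalent expressions that are used throughout the appendix.

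First, I would identify the minimizer via first-order optimality. The function $F$ is strictly convex for $\lambda>0$ (sum of a convex quadratic and a strictly convex quadratic), so it admits a unique minimizer characterized by $\nabla F(\beta)=0$. A direct computation gives $\nabla F(\beta)=2\mathbf{X}^\top(\mathbf{X}\beta-y)+2\lambda\beta$, which yields $(\mathbf{X}^\top\mathbf{X}+\lambda I)\beta=\mathbf{X}^\top y$. The matrix $\mathbf{X}^\top\mathbf{X}+\lambda I$ is invertible since $\lambda>0$, so $\beta_\lambda=(\mathbf{X}^\top\mathbf{X}+\lambda I)^{-1}\mathbf{X}^\top y$.

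Next, I would evaluate $F(\beta_\lambda)$ using the SVD $\mathbf{X}=\sum_{i=1}^r \sigma_i u_i v_i^\top$. Plugging $\beta_\lambda$ into the normal equation expression, one obtains
\begin{equation*}
\beta_\lambda=\sum_{i=1}^r \frac{\sigma_i}{\sigma_i^2+\lambda}(u_i^\top y)\,v_i,\qquad \mathbf{X}\beta_\lambda=\sum_{i=1}^r \frac{\sigma_i^2}{\sigma_i^2+\lambda}(u_i^\top y)\,u_i.
\end{equation*}
Decomposing $y=Py+(I-P)y$ with $P=\sum_i u_iu_i^\top$ and using orthogonality of the $u_i$, one gets
\begin{equation*}
\Vert y-\mathbf{X}\beta_\lambda\Vert_2^2=\Vert y-Py\Vert_2^2+\sum_{i=1}^r \frac{\lambda^2}{(\sigma_i^2+\lambda)^2}(u_i^\top y)^2,\qquad \Vert\beta_\lambda\Vert_2^2=\sum_{i=1}^r \frac{\sigma_i^2}{(\sigma_i^2+\lambda)^2}(u_i^\top y)^2.
\end{equation*}
Summing the first norm with $\lambda$ times the second, the coefficient on $(u_i^\top y)^2$ collapses to $\lambda(\lambda+\sigma_i^2)/(\sigma_i^2+\lambda)^2=\lambda/(\sigma_i^2+\lambda)$, producing the first stated identity for $F(\beta_\lambda)$.

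For the second identity, I would diagonalize $\mathbf{X}\mathbf{X}^\top+\lambda I$ in an orthonormal basis of $\mathbb{R}^n$ extending $(u_1,\dots,u_r)$. On $\mathrm{span}(u_i)$ the inverse acts as $1/(\sigma_i^2+\lambda)$, and on its orthogonal complement (which contains the range of $I-P$) as $1/\lambda$. Since $\mathrm{Tr}(yy^\top M)=y^\top M y$ for any symmetric $M$, one computes
\begin{equation*}
\lambda\,\mathrm{Tr}\!\left(yy^\top(\mathbf{X}\mathbf{X}^\top+\lambda I)^{-1}\right)=\lambda\sum_{i=1}^r \frac{(u_i^\top y)^2}{\sigma_i^2+\lambda}+\lambda\cdot\frac{1}{\lambda}\Vert(I-P)y\Vert_2^2,
\end{equation*}
matching the previous expression. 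There is no real obstacle: the only subtlety is handling the kernel of $\mathbf{X}\mathbf{X}^\top$ carefully when inverting $\mathbf{X}\mathbf{X}^\top+\lambda I$, which is why the $\lambda^{-1}$ contribution on $\ker(\mathbf{X}\mathbf{X}^\top)$ conspires with the factor $\lambda$ in front of the trace to reproduce exactly the noiseless term $\Vert y-Py\Vert_2^2$.
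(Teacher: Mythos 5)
Your proposal is correct and follows essentially the same route as the paper: the minimizer via first-order optimality of the strongly convex objective, the evaluation of $F(\beta_\lambda)$ through the SVD with the coefficient collapse $\lambda(\lambda+\sigma_i^2)/(\sigma_i^2+\lambda)^2=\lambda/(\sigma_i^2+\lambda)$, and the trace identity by completing $(u_1,\dots,u_r)$ to an orthonormal basis of $\R^n$ so that the $1/\lambda$ action on $\ker(\mathbf{X}\mathbf{X}^\top)$ recovers $\Vert y-Py\Vert_2^2$. Your treatment of the last step is in fact slightly more explicit than the paper's, which only remarks that the basis is completed.
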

\begin{proof}
$F$ is a strongly convex function, then the minimizer $\beta_\lambda=(\mathbf{X}^\top \mathbf{X}+\lambda I)^{-1}\mathbf{X}^\top y$ is found considering $\nabla F(\beta_\lambda)=0$. Using $\mathbf{X}=\sum_{i=1}^r\sigma_i u_iv_i^\top$, we have 
\begin{equation*}
    \beta_\lambda= \sum_{i\in[r]}  \frac{\sigma_i}{\sigma_i^2+\lambda}(u_i^\top y) v_i.
\end{equation*}
Thus
\begin{equation*}
    X\beta_\lambda= \sum_{i\in[r]}  \frac{\sigma_i^2}{\sigma_i^2+\lambda}(u_i^\top y) u_i.
\end{equation*}
Using that $P$ is the orthogonal projection on $u_1,\dots,u_r$, 
\begin{align*}
    y- \mathbf{X}\beta_\lambda &= y-Py + Py- \mathbf{X}\beta_\lambda\\
    &= y-Py+ \sum_{i\in[r]}  \frac{\sigma_i^2+\lambda-\sigma_i^2}{\sigma_i^2+\lambda}(u_i^\top y) u_i\\
    &= y-Py+ \sum_{i\in[r]}  \frac{\lambda}{\sigma_i^2+\lambda}(u_i^\top y) u_i
\end{align*}
Then, 
\begin{equation*}
    \Vert y- \mathbf{X}\beta_\lambda \Vert_2^2= \Vert y-Py \Vert_2^2+ \sum_{i\in[r]}  \frac{\lambda^2}{(\sigma_i^2+\lambda)^2}(u_i^\top y)^2.
\end{equation*}
Furthermore, 
\begin{equation*}
    \Vert\beta_\lambda\Vert_2^2= \sum_{i\in[r]}  \frac{\sigma_i^2}{(\sigma_i^2+\lambda)^2}(u_i^\top y)^2.
\end{equation*}
Combining these two terms, we found
\begin{align*}
    F(\beta_\lambda)&= \Vert y-Py \Vert_2^2+ \sum_{i\in[r]}  \frac{\lambda^2}{(\sigma_i^2+\lambda)^2}(u_i^\top y)^2+\lambda\sum_{i\in[r]}  \frac{\sigma_i^2}{(\sigma_i^2+\lambda)^2}(u_i^\top y)^2\\
    &=\Vert y-Py \Vert_2^2+ \sum_{i\in[r]}  \frac{\lambda(\sigma_i^2+\lambda)}{(\sigma_i^2+\lambda)^2}(u_i^\top y)^2\\
    &=\Vert y-Py \Vert_2^2+ \lambda\sum_{i\in[r]}  \frac{1}{\sigma_i^2+\lambda}(u_i^\top y)^2
\end{align*}
In the case, where the rank $r<n$, we obtain the second equality completing the bases $u_1,...,u_r$ by $u_{r+1},...,u_n$. 

\end{proof}
As a consequence of this lemma, we will use the useful variational characterization. 
\begin{equation}\label{eq:varia_chara}
    \inf_\beta\{\Vert y-\mathbf{X}\beta\Vert_2^2+\lambda\Vert\beta\Vert_2^2\}= \lambda\mathrm{Tr}(yy^\top(\mathbf{X}\mathbf{X}^\top+\lambda I_n)^{-1}).
\end{equation}
Note that this result is valid for any proper sized $y$ and $\mathbf{X}$. This result can be supplemented by the case $\lambda\to 0^+$, 
\begin{equation}\label{eq:varia_chara_0}
    \inf_\beta\{\Vert y-\mathbf{X}\beta\Vert_2^2\}= \mathrm{Tr}(yy^\top(I-P))),
\end{equation}
with $P$ the orthogonal projection on $\mathbf{X}$.

\subsection{Moore-Penrose pseudoinverse}

\begin{lemma}[Trace inequality]\label{lem:trace_pseudoinverse}
    Let $A\succeq 0$, and $A^-$ a reflexive symmetric pseudoinverse, i.e.
    \begin{itemize}
        \item $A^-AA^-=A^-$,
        \item $AA^-A=A$,
        \item $A^-\succeq 0$, 
    \end{itemize}
    Then,
    \begin{equation*}
        \mathrm{Tr}(A^\dagger)\leq \mathrm{Tr}(A^-).
    \end{equation*}
\end{lemma}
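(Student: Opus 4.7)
The plan is to diagonalize $A$ and read off the contributions to $\mathrm{Tr}(A^-)$ block by block. Since $A\succeq 0$ is symmetric, write its spectral decomposition in the form $A = U_1 \Lambda_1 U_1^\top$, where $\Lambda_1 \in \mathbb{R}^{r\times r}$ is the diagonal matrix of strictly positive eigenvalues and the columns of $U_1$ form an orthonormal basis of $\mathrm{Range}(A)$. Complete $U_1$ into an orthogonal matrix $U=[U_1,U_2]$, where the columns of $U_2$ span the nullspace of $A$. By definition of the Moore-Penrose pseudoinverse, $A^\dagger = U_1 \Lambda_1^{-1} U_1^\top$, so $\mathrm{Tr}(A^\dagger) = \mathrm{Tr}(\Lambda_1^{-1})$.

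First I would identify the $(1,1)$ block of $A^-$ in this basis using the condition $AA^-A=A$. Substituting $A=U_1\Lambda_1 U_1^\top$ gives
\begin{equation*}
    U_1\Lambda_1(U_1^\top A^- U_1)\Lambda_1 U_1^\top = U_1 \Lambda_1 U_1^\top,
\end{equation*}
and multiplying by $\Lambda_1^{-1}U_1^\top$ on the left and by $U_1\Lambda_1^{-1}$ on the right yields $U_1^\top A^- U_1 = \Lambda_1^{-1}$. Next, using that the trace is unitarily invariant,
\begin{equation*}
    \mathrm{Tr}(A^-) = \mathrm{Tr}(U^\top A^- U) = \mathrm{Tr}(U_1^\top A^- U_1) + \mathrm{Tr}(U_2^\top A^- U_2) = \mathrm{Tr}(\Lambda_1^{-1}) + \mathrm{Tr}(U_2^\top A^- U_2).
\end{equation*}
Finally, the assumption $A^-\succeq 0$ implies that $U_2^\top A^- U_2 \succeq 0$, hence $\mathrm{Tr}(U_2^\top A^- U_2)\geq 0$, and we conclude
\begin{equation*}
    \mathrm{Tr}(A^-) \geq \mathrm{Tr}(\Lambda_1^{-1}) = \mathrm{Tr}(A^\dagger).
\end{equation*}

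There is no real obstacle here; the only subtlety is recognizing that the second reflexivity condition $A^-AA^-=A^-$ and the symmetry of $A^-$ are not actually needed for the trace inequality, only the combination of $AA^-A=A$ (to pin down the nonzero block) and $A^-\succeq 0$ (to discard the nullspace block). The proof takes four short steps: spectral decomposition, block identification of $U_1^\top A^- U_1$, trace decomposition via the orthogonal basis, and positivity of the remaining block.
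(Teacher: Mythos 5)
Your proof is correct and is essentially the paper's argument written in block form: the paper identifies the range-space contribution entry by entry via $v_j^\top A^- v_j = \lambda_j^{-2}\, v_j^\top A A^- A\, v_j = \lambda_j^{-1}$ for each positive eigenvalue, whereas you identify the whole block $U_1^\top A^- U_1 = \Lambda_1^{-1}$ at once, and both proofs then discard the nullspace block using $A^-\succeq 0$. Your side remark that only $AA^-A=A$ and positivity are needed also matches what the paper's proof actually uses.
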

\begin{proof}
    We denote $A=\sum_{j\in[r]}\lambda_jv_jv_j^\top$, and we complete the bases by $(v_{r+1},\dots,v_d)$,
    \begin{align*}
         \mathrm{Tr}(A^-)&=\sum_{j\in [d]} v_j^\top A^-v_j\\
         &=\sum_{j\in [r]} v_j^\top A^-v_j+ \sum_{j=r+1}^d v_j^\top A^-v_j
    \end{align*}
    For $j\leq d$, using $Av_j=\lambda_j v_j$,
    \begin{align*}
        v_j^\top A^-v_j&= \frac{1}{\lambda_j^2} v_j^\top A A^-Av_j\\
                        &=\frac{1}{\lambda_j^2} v_j^\top Av_j\\
                        &= v_j^\top A^\dagger A A^\dagger v_j\\
                        &= v_j^\top A^\dagger v_j,
    \end{align*}
    using $A^\dagger v_j=(1/\lambda_j) v_j$. Then 
    \begin{equation*}
        \mathrm{Tr}(A^-)=\mathrm{Tr}(A^\dagger)+\sum_{j=r+1}^d v_j^\top A^-v_j.
    \end{equation*}
We conclude using $A^-\succeq 0$. 
\end{proof}
This lemma is particularly usefull to control the pseudoinverse of a overparametrized Wishart distribution pseudoinverse. $W\sim\mathcal{W}_n(\Sigma)$ if $W=\sum_{i\in[n]}X_iX_i^\top$ where $(X_i){i\in[n]}$ are i.i.d $\mathcal{N}(0;\Sigma)$
\begin{theorem}\label{thm:wishart_inverse}
    If $d>n+1$, and $W\sim\mathcal{W}_n(\Sigma)$, then 
    \begin{equation*}
        \esp \mathrm{Tr}(W^\dagger)\leq \frac{n}{d}\frac{\mathrm{Tr}(\Sigma^{-1})}{d-n-1}.
    \end{equation*}
\end{theorem}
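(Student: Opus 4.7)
The plan is to invoke Lemma \ref{lem:trace_pseudoinverse} with a carefully chosen reflexive symmetric pseudoinverse $M$ of $W$, which will reduce the statement to a standard inverse-Wishart calculation on an invertible matrix. First I would whiten the covariates by writing $X_i = \Sigma^{1/2} Z_i$ with $Z_i \sim \mathcal{N}(0, I_d)$ i.i.d., so that, collecting $Z_i^\top$ as the rows of $Z \in \mathbb{R}^{n\times d}$, we have $W = \Sigma^{1/2} Z^\top Z \, \Sigma^{1/2}$.

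Next I would propose the candidate
\[
M := \Sigma^{-1/2}(Z^\top Z)^\dagger \Sigma^{-1/2},
\]
which is manifestly symmetric and positive semidefinite. Using the identities $(Z^\top Z)(Z^\top Z)^\dagger(Z^\top Z) = Z^\top Z$ and $(Z^\top Z)^\dagger(Z^\top Z)(Z^\top Z)^\dagger = (Z^\top Z)^\dagger$, a direct check shows $WMW = W$ and $MWM = M$, so $M$ is a reflexive symmetric pseudoinverse. Lemma \ref{lem:trace_pseudoinverse} then gives
\[
\mathbb{E}\,\mathrm{Tr}(W^\dagger) \le \mathbb{E}\,\mathrm{Tr}(M) = \mathrm{Tr}\bigl(\mathbb{E}[(Z^\top Z)^\dagger]\,\Sigma^{-1}\bigr),
\]
after cycling the trace. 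The splitting of $\Sigma^{-1}$ symmetrically on both sides of $(Z^\top Z)^\dagger$ is what makes the right-hand side scale with $\mathrm{Tr}(\Sigma^{-1})$ rather than, say, $\mathrm{Tr}(\Sigma^{-2})$; this is the step I expect to require the most care to get right.

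Finally, I would exploit the rotational invariance of the distribution of $Z^\top Z$ under $Z^\top Z \mapsto O Z^\top Z\, O^\top$ for any orthogonal $O$, which transfers to $(Z^\top Z)^\dagger$. Lemma \ref{lem:vector_random_matrice} then forces $\mathbb{E}[(Z^\top Z)^\dagger] = \alpha\, I_d$, where $\alpha = \mathbb{E}\,\mathrm{Tr}((Z^\top Z)^\dagger)/d$. Since the nonzero eigenvalues of $Z^\top Z$ coincide with those of the full-rank matrix $ZZ^\top$, and $ZZ^\top$ is an $n\times n$ standard Wishart with $d$ degrees of freedom, the classical inverse-Wishart mean formula applies whenever $d > n+1$ and yields $\mathbb{E}\,\mathrm{Tr}((ZZ^\top)^{-1}) = n/(d-n-1)$. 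Plugging this in gives $\alpha = n/[d(d-n-1)]$ and
\[
\mathbb{E}\,\mathrm{Tr}(W^\dagger) \le \frac{n}{d}\,\frac{\mathrm{Tr}(\Sigma^{-1})}{d-n-1},
\]
as claimed. The only real obstacle is guessing the symmetric sandwich form of $M$; once chosen, verifying the pseudoinverse axioms and finishing with the Wishart identity is routine.
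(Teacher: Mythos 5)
Your proposal is correct and follows essentially the same route as the paper: you whiten the data and apply \Cref{lem:trace_pseudoinverse} to the reflexive symmetric pseudoinverse $M=\Sigma^{-1/2}(Z^\top Z)^{\dagger}\Sigma^{-1/2}=\Sigma^{-1/2}\bigl(\Sigma^{-1/2}W\Sigma^{-1/2}\bigr)^{\dagger}\Sigma^{-1/2}$, which is exactly the paper's choice of $W^-$. The only (harmless) deviation is in evaluating $\esp\bigl[(Z^\top Z)^{\dagger}\bigr]$: the paper cites the singular-Wishart pseudoinverse mean of \citet{cook2011mean}, whereas you recover the same constant $\tfrac{n}{d(d-n-1)}$ from rotational invariance (\Cref{lem:vector_random_matrice}) together with the classical $n\times n$ inverse-Wishart trace identity, making the step slightly more self-contained.
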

\begin{proof}
We consider the inverse $A^-= \Sigma^{-1/2}(\Sigma^{-1/2}A\Sigma^{-1/2})^{\dagger}\Sigma^{-1/2}$ that satisfies assumptions of \Cref{lem:trace_pseudoinverse}, thus 
\begin{align*}
    \esp \mathrm{Tr}(W^\dagger)&\leq \esp \mathrm{Tr}(W^-)\\
    &= \esp \mathrm{Tr}(\Sigma^{-1/2}(\Sigma^{-1/2}W\Sigma^{-1/2})^{\dagger}\Sigma^{-1/2})\\
    &=  \mathrm{Tr}(\Sigma^{-1/2}\esp[(\Sigma^{-1/2}W\Sigma^{-1/2})^{\dagger}]\Sigma^{-1/2}).
\end{align*}
The matrix $\Sigma^{-1/2}W\Sigma^{-1/2}\sim \mathcal{W}_n(I_d)$, then using \citep{cook2011mean} theorem 2.1, we have $\esp[(\Sigma^{-1/2}W\Sigma^{-1/2})^{\dagger}]=\frac{n}{d(d-n-1)}I_d$, then 
\begin{equation*}
    \esp \mathrm{Tr}(W^\dagger)\leq \frac{n}{d(d-n-1)}\mathrm{Tr}(\Sigma^{-1}).
\end{equation*}
\end{proof}
\begin{corollary}[Inverse of Gramm matrix]\label{coro:Gramm_Gaussian}
Let $(X_i)_{i\in [n]}$ i.i.d. copies of $\mathcal{N}(0,\Sigma)$, we denote by $G\in\R^{n\times n}$ the Gramm matrix such that $G_{ij}=X_i^\top X_j$. If $n<d-1$ then  $G$ is invertible with 
\begin{equation*}
    \esp G^{-1}= \frac{\esp\mathrm{Tr}(G^{-1})}{n}I_n,
\end{equation*}
and 
\begin{equation*}
    \esp\mathrm{Tr}(G^{-1})\leq \frac{n}{d(d-n-1)}\mathrm{Tr}(\Sigma^{-1})
\end{equation*}
\end{corollary}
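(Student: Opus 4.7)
My plan is to prove the three claims of the corollary in sequence: almost-sure invertibility of $G$, the scalar-identity form of $\esp[G^{-1}]$, and the quantitative trace bound. Throughout, let $\mathbf{X}\in\R^{n\times d}$ denote the matrix whose $i$-th row is $X_i^{\top}$, so that $G=\mathbf{X}\mathbf{X}^{\top}$.

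\textbf{Invertibility.} Since $\Sigma\succ 0$ and $n<d$, with probability one the $n$ Gaussian rows $X_1,\dots,X_n$ are linearly independent in $\R^{d}$, so $\mathbf{X}$ has rank $n$ and $G=\mathbf{X}\mathbf{X}^{\top}$ is a.s. positive definite.

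\textbf{Rotational invariance and the identity $\esp[G^{-1}]=(\esp\mathrm{Tr}(G^{-1})/n)\,I_n$.} The plan is to apply \Cref{lem:vector_random_matrice}, which requires showing that $OG^{-1}O^{\top}$ and $G^{-1}$ have the same law for every orthogonal $O\in\R^{n\times n}$. For such $O$, the transformed matrix $O\mathbf{X}$ still has rows that are centered jointly Gaussian; a direct cross-covariance computation gives
\begin{equation*}
\esp\!\Bigl[\bigl(\textstyle\sum_j O_{ij}X_j\bigr)\bigl(\textstyle\sum_l O_{kl}X_l\bigr)^{\!\top}\Bigr]=\sum_{j}O_{ij}O_{kj}\,\Sigma=\delta_{ik}\Sigma,
\end{equation*}
so the rows of $O\mathbf{X}$ are again i.i.d.\ $\mathcal{N}(0,\Sigma)$, i.e.\ $O\mathbf{X}\overset{d}{=}\mathbf{X}$. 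Hence $OGO^{\top}=(O\mathbf{X})(O\mathbf{X})^{\top}\overset{d}{=}G$ and, on the event of invertibility, $OG^{-1}O^{\top}\overset{d}{=}G^{-1}$. Taking expectation, $O\esp[G^{-1}]O^{\top}=\esp[G^{-1}]$ for every $O\in O(n)$; applying \Cref{lem:vector_random_matrice} yields $\esp[G^{-1}]=cI_n$, and taking trace gives $c=\esp\mathrm{Tr}(G^{-1})/n$.

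\textbf{Trace bound.} The key step is to reduce $\esp\mathrm{Tr}(G^{-1})$ to the quantity controlled by \Cref{thm:wishart_inverse}. Write the SVD of $\mathbf{X}$ as $\mathbf{X}=\sum_{i=1}^{n}\sigma_i u_i v_i^{\top}$ with $\sigma_i>0$ a.s., $(u_i)\subset\R^{n}$, $(v_i)\subset\R^{d}$ orthonormal. Then $G=\sum \sigma_i^2 u_i u_i^{\top}$ and $W:=\mathbf{X}^{\top}\mathbf{X}=\sum\sigma_i^2 v_iv_i^{\top}$, so that
\begin{equation*}
\mathrm{Tr}(G^{-1})=\sum_{i=1}^{n}\sigma_i^{-2}=\mathrm{Tr}(W^{\dagger}).
\end{equation*}
Since $W=\sum_{i=1}^{n}X_iX_i^{\top}\sim\mathcal{W}_n(\Sigma)$ and $n<d-1$, \Cref{thm:wishart_inverse} gives $\esp\mathrm{Tr}(W^{\dagger})\leq \tfrac{n}{d(d-n-1)}\mathrm{Tr}(\Sigma^{-1})$, which is the desired inequality.

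The only delicate point is the identification $\mathrm{Tr}(G^{-1})=\mathrm{Tr}(W^{\dagger})$, which relies on the fact that the nonzero singular values of $\mathbf{X}$ seen through $\mathbf{X}\mathbf{X}^{\top}$ and through $\mathbf{X}^{\top}\mathbf{X}$ coincide; everything else is a direct combination of the rotational invariance of the multivariate Gaussian and the already established \Cref{lem:vector_random_matrice,thm:wishart_inverse}.
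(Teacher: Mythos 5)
Your proof is correct and follows essentially the same route as the paper: rotational invariance of the i.i.d.\ Gaussian sample in the index $i$ gives $OGO^{\top}\overset{d}{=}G$, hence $\esp[G^{-1}]$ is a multiple of $I_n$ via \Cref{lem:vector_random_matrice}, and the trace bound follows from $\mathrm{Tr}(G^{-1})=\mathrm{Tr}(W^{\dagger})$ with $W=\sum_i X_iX_i^{\top}\sim\mathcal{W}_n(\Sigma)$ together with \Cref{thm:wishart_inverse}. Your explicit cross-covariance computation showing $O\mathbf{X}\overset{d}{=}\mathbf{X}$ is in fact a slightly cleaner justification of the invariance step than the paper's quadratic-form argument, but the substance is identical.
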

\begin{proof}
    Let $v\in \mathbb{S}^{n-1}$, we have 
   \begin{align*}
       v^\top G v&= \sum_{i,j}v_iG_{ij}v_j\\
       &=\sum_{i,j}v_iX_i^\top X_jv_j\\
       &= \left (\sum_{i}v_iX_i \right)^\top \left (\sum_{j}v_jX_j \right).
   \end{align*}
Using $\Vert v\Vert_2=1$, we remarks that $\sum_{i}v_iX_i\sim \mathcal{N}(0,\Sigma)$ thus the law of $v^\top G v$ does not depends on $v$. In other words, for all orthogonal matrix $O$, $O G O^\top$ and $G$ have the same law. Thus, $O G^{-1} O^\top= (O^\top G O)^{-1}$ has the law of $G^{-1}$. Using, \Cref{lem:vector_random_matrice}, we have $\esp G^{-1}=\frac{\mathrm{Tr}(G^{-1})}{n}I_n$. Furthermore, $G^{-1}$ have the same spectra than $W^\dagger$ with $W=\sum X_i X_i^\top$. We conclude using \Cref{thm:wishart_inverse}. 
\end{proof}

\end{document}